\titlespacing*{\section}{0pt}{1.1\baselineskip}{\baselineskip}
\newtheorem*{rep@theorem}{\rep@title}
\newcommand{\newreptheorem}[2]{%
\newenvironment{rep#1}[1]{%
 \def\rep@title{#2 \ref{##1}}%
 \begin{rep@theorem}}%
 {\end{rep@theorem}}}
\newtheorem{theorem}{Theorem}
\newtheorem{lemma}{Lemma}
\newtheorem{define}{Definition}
\newcommand\numberthis{\addtocounter{equation}{1}\tag{\theequation}}
\def\BibTeX{{\rm B\kern-.05em{\sc i\kern-.025em b}\kern-.08emT\kern-.1667em\lower.7ex\hbox{E}\kern-.125emX}}
\begin{document}

%

\title{Counterfactual diagnosis}

%

\author{Jonathan G. Richens}
\affiliation{Babylon Health, London, United Kingdom}
\email{jonathan.richens@babylonhealth.com}

\author{Ciar{\'a}n M. Lee}
\affiliation{Babylon Health, London, United Kingdom}
\affiliation{University College London, United Kingdom}

\author{Saurabh Johri}
\affiliation{Babylon Health, London, United Kingdom}

%



\begin{abstract}

Machine learning promises to revolutionize clinical decision making and diagnosis. In medical diagnosis a doctor aims to explain a patient's symptoms by determining the diseases \emph{causing} them. However, existing diagnostic algorithms are purely associative, identifying diseases that are strongly correlated with a patients symptoms and medical history. We show that this inability to disentangle correlation from causation can result in sub-optimal or dangerous diagnoses. To overcome this, we reformulate diagnosis as a counterfactual inference task and derive new counterfactual diagnostic algorithms. We show that this approach is closer to the diagnostic reasoning of clinicians and significantly improves the accuracy and safety of the resulting diagnoses. We compare our counterfactual algorithm to the standard Bayesian diagnostic algorithm and a cohort of 44 doctors using a test set of clinical vignettes. While the Bayesian algorithm achieves an accuracy comparable to the average doctor, placing in the top 48\% of doctors in our cohort, our counterfactual algorithm places in the top 25\% of doctors, achieving expert clinical accuracy. This improvement is achieved simply by changing how we query our model, without requiring any additional model improvements. Our results show that counterfactual reasoning is a vital missing ingredient for applying machine learning to medical diagnosis.

\end{abstract}

\maketitle

\section{Introduction}


Providing accurate and accessible diagnoses is a fundamental challenge for global healthcare systems. In the US alone an estimated 5\% of outpatients receive the wrong diagnosis every year \cite{singh2014frequency,singh2017global}. These errors are particularly common when diagnosing patients with serious medical conditions, with an estimated 20\% of these patients being misdiagnosed at the level of primary care \cite{graber2013incidence} and one in three of these misdiagnoses resulting in serious patient harm \cite{singh2013types,singh2014frequency}.

In recent years, artificial intelligence and machine learning have emerged as powerful tools for solving complex problems in diverse domains \cite{silver2017mastering,brown2019superhuman,tomavsev2019clinically}. In particular, machine learning assisted diagnosis promises to revolutionise healthcare by leveraging abundant patient data to provide precise and personalised diagnoses \cite{liang2019evaluation,topol2019high,de2018clinically,yu2018artificial,jiang2017artificial}. Despite significant research efforts and renewed commercial interest, diagnostic algorithms have struggled to achieve the accuracy of doctors in differential diagnosis \cite{semigran2016comparison,miller1986internist,shwe1991probabilistic,miller2010history, heckerman1992toward1,heckerman1992toward,razzaki2018comparative}, where there are multiple possible causes of a patients symptoms.

This raises the question, why do existing approaches struggle with differential diagnosis? All existing diagnostic algorithms, including Bayesian model-based and Deep Learning approaches, rely on \emph{associative} inference---they identify diseases based on how correlated they are with a patients symptoms and medical history. This is in contrast to how doctors perform diagnosis, selecting the diseases which offer the best \emph{causal} explanations for the patients symptoms. As noted by Pearl, associative inference is the simplest in a hierarchy of possible inference schemes \cite{pearl2018theoretical}. Counterfactual inference sits at the top of this hierarchy, and allows one to ascribe causal explanations to data. Here, we argue that diagnosis is fundamentally a counterfactual inference task. We show that failure to disentangle correlation from causation places strong constraints on the accuracy of associative diagnostic algorithms, sometimes resulting in sub-optimal or dangerous diagnoses. To resolve this, we present a causal definition of diagnosis that is closer to the decision making of clinicians, and derive new counterfactual diagnostic algorithms to validate this approach. 

We compare the accuracy of our counterfactual algorithms to a state-of-the-art associative diagnostic algorithm and a cohort of 44 doctors, using a test set of 1671 clinical vignettes. In our experiments, the doctors achieve an average diagnostic accuracy of $71.40\%$, while the associative algorithm achieves a similar accuracy of $72.52\%$, placing in the top $48\%$ of doctors in our cohort. However, our counterfactual algorithm achieves an average accuracy of $77.26\%$, placing in the top $25\%$ of the cohort and achieving expert clinical accuracy. These improvements are particularly pronounced for rare diseases, where diagnostic errors are more common and often more serious. We find that the counterfactual algorithm offers a greatly improved diagnostic accuracy for rare and very rare diseases (29.2\% and 32.9\% of cases respectively) compared to the associative algorithm. 

Importantly, the counterfactual algorithm achieves these improvements using the same disease model as the associative algorithm---only the method for querying the model has changed. This backwards compatibility is particularly important as disease models require significant resources to learn \cite{miller2010history}. Our algorithms can thus be applied as an immediate upgrade to existing Bayesian diagnostic models, even those outside of medicine \cite{cai2017bayesian,yongli2006bayesian,dey2005bayesian,cai2014multi}. \\

\section{Diagnosis}
 
Here, we outline the basic principles and assumptions underlying the current approach to algorithmic diagnosis. We then detail scenarios where this approach breaks down due to causal confounding, and propose a set of principles for designing new diagnostic algorithms that overcome these pitfalls. Finally, we use these principles to propose two new diagnostic algorithms based on the notions of necessary and sufficient causation. 

\vspace{-5mm}

\subsection{Associative diagnosis}\label{section: associative diagnosis}

Since its formal definition \cite{reiter1987theory}, model-based diagnosis has been synonymous with the task of using a model $\theta $ to estimate the likelihood of a fault component $D$ given findings $\mathcal E$ \cite{de1990using},

\begin{equation}\label{posterior}
    P(D|\mathcal E; \, \theta)
\end{equation}

In medical diagnosis $D$ represents a disease or diseases, and findings $\mathcal E$ can include symptoms, tests outcomes and relevant medical history. In the case of diagnosing over multiple possible diseases, e.g. in a differential diagnosis, potential diseases are ranked in terms of their posterior. Model based diagnostic algorithms are either discriminative, directly modelling the conditional distribution of diseases $D$ given input features $\mathcal E$ \eqref{posterior}, or generative, modelling the prior distribution of diseases and findings and using Bayes rule to estimate the posterior,


\begin{equation}\label{bayes rule}
P(D|\mathcal E; \, \theta) = \frac{P(\mathcal E|D ; \, \theta)P(D ; \, \theta)}{P(\mathcal E ; \, \theta)}
\end{equation}

Examples of discriminative diagnostic models include neural network and deep learning models \cite{de2018clinically,litjens2016deep,liu2014early,wang2016deep,liang2019evaluation}, whereas generative models are typically Bayesian networks  \cite{kahn1997construction,cai2017bayesian,miller1986internist,shwe1991probabilistic,heckerman1992toward1,heckerman1992toward,morris2001recognition}. 

How does this approach compare to how doctors perform diagnosis? It has long been argued that diagnosis is the process of finding causal explanations for a patient's symptoms \cite{stanley2013logic,thagard2000scientists,qiu1989models,cournoyea2014causal,kirmayer2004explaining,ledley1959reasoning,westmeyer1975diagnostic,rizzi1994causal,benzi2011medical,patil1981causal,davis1990diagnosis}. For example, \cite{stanley2013logic} concludes ``The generation of hypotheses is by habitual abduction. The physician relies on her knowledge of possible causes that explain the symptoms''. Likewise \cite{merriam1995merriam} defines diagnosis as ``the investigation or analysis of the cause or nature of a condition, situation, or problem''. That is, given the evidence presented by the patient, a doctor attempts to determine the diseases that are the best explanation---the \emph{most likely underlying cause}---of the symptoms presented. We propose the following causal definition of diagnosis,

\begin{quote} 
\centering 
\emph{The identification of the diseases that are most likely to be causing the patient's symptoms, given their medical history.}
\end{quote} 

Despite the wealth of literature placing causal  reasoning at the centre of diagnosis, to the best of our knowledge there are no existing approaches to model-based diagnosis that employ modern causal analysis techniques \cite{pearl2009causality,halpern2016actual}.

\begin{figure}[h!]
    \centering
    \includegraphics[scale=0.45]{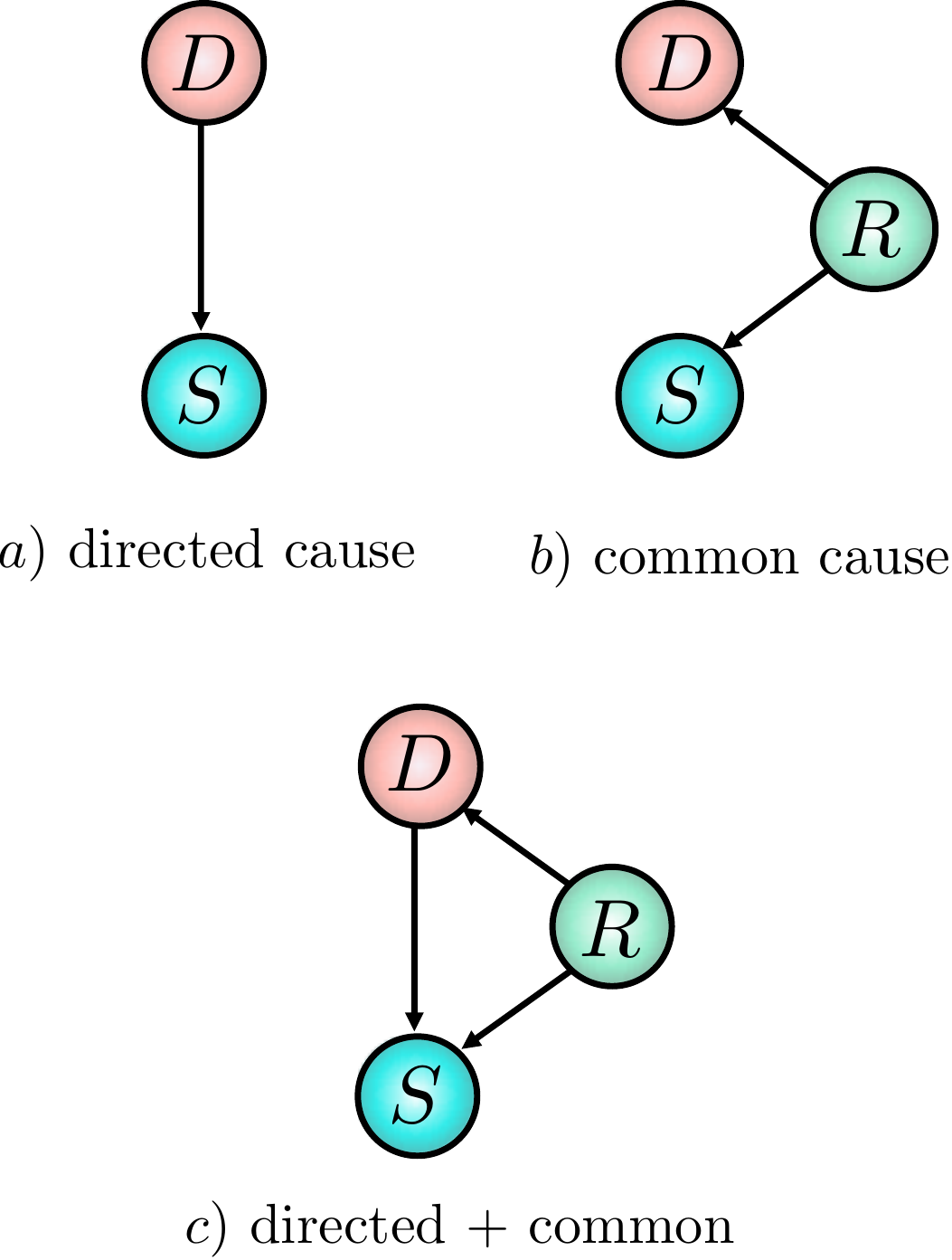}
    \caption{{\footnotesize a) Disease $D$ is a direct cause of symptom $S$, b) $D$ does not cause symptom $S$, but they are correlated by a latent common cause $R$, c) $D$ is a direct cause of $S$ and a latent common cause $R$ is present.}}
    \label{fig:fig2}
\end{figure}


It is well known that using the posterior to identify causal relations can lead to spurious conclusions in all but the simplest causal scenarios---a phenomenon known as confounding \cite{pearl2014comment}. For example, figure \ref{fig:fig2} a) shows a disease $D$ which is a direct cause of a symptom $S$. In this scenario, $D$ is a plausible explanation for $S$, and treating $D$ could alleviate symptom $S$. In figure \ref{fig:fig2} b), variable $R$ is a confounder for $D$ and $S$, for example $R$ could be a genetic factor which increases a patients chance of developing disease $D$ and experiencing symptom $S$. Although $D$ and $S$ can be strongly correlated in this scenario, $P(D = T | S = T) \gg P(D = T)$ (where $D=T$ denotes the presence of $D$), $D$ cannot have caused symptom $S$ and so would not constitute a reasonable diagnosis. In general, diseases are related to symptoms by both directed and common causes that cannot be simply disentangled, as shown in figure \ref{fig:fig2} c). The posterior \eqref{posterior} does not differentiate between these different scenarios and so is insufficient for assigning a diagnosis to a patient's symptoms in all but the simplest of cases, and especially when there are multiple possible causes for a patient's symptoms.

\vspace{3mm}

\begin{tcolorbox}
{\footnotesize
\textbf{Example 1:} An elderly smoker reports chest pain, nausea, and fatigue. A good doctor will present a diagnosis that is both likely and relevant given the evidence (such as angina). Although this patient belongs to a population with a high prevalence of emphysema, this disease is unlikely to have caused the symptoms presented and should not be put forward as a diagnosis. Emphysema is positively correlated with the patient's symptoms, but this is primarily due to common causes \cite{greenland1999causal}.}\\

{\footnotesize \textbf{Example 2:} \cite{cooper2005predicting} found that asthmatic patients who were admitted to hospital for pneumonia were more aggressively treated for the infection, lowering the sub-population mortality rate. An associative model trained on this data to diagnose pneumonia will learn that asthma is a protective risk factor---a dangerous conclusion that could result in a less aggressive treatment regime being proposed for asthmatics, despite the fact that asthma increases the risk of developing pneumonia. In this example, the confounding factor is the unobserved level of care received by the patient. 
}
\end{tcolorbox}
\vspace{5mm}



Real world examples of confounding, such as Examples 1 \& 2, have lead to increasing calls for causal knowledge to be properly incorporated into decision support algorithms in healthcare
\cite{ghassemi2018opportunities}.

\subsection{Principles for diagnostic reasoning}\label{sec: principles}

An alternative approach to associative diagnosis is to reason about causal responsibility (or causal attribution)---the probability that the occurrence of the effect $S$ was in fact brought about by target cause $D$ \cite{waldmann2017oxford}. This requires a diagnostic measure $\mathcal M(D, \mathcal E)$ for ranking the likelihood that a disease $D$ is causing a patient's symptoms given evidence $\mathcal E$. We propose the following three minimal desiderata that should be satisfied by any such diagnostic measure, 

\begin{enumerate}
    \item[i)] The likelihood that a disease $D$ is causing a patient's symptoms should be proportional to the posterior likelihood of that disease $\mathcal M(D, \mathcal E)\propto P(D = T |\mathcal E)$ (\emph{consistency}), 
    \item[ii)] A disease $D$ that cannot cause any of the patient's symptoms can not constitute a diagnosis, $\mathcal M(D, \mathcal E) = 0$ (\emph{causality}),
    \item[iii)] Diseases that explain a greater number of the patient's symptoms should be more likely 
    (\emph{simplicity}).
\end{enumerate}

The justification for these desiderata is as follows. Desideratum i) states that the likelihood that a disease explains the patient's symptoms is proportional to the likelihood that the patient has the disease in the first place. Desideratum ii) states that if there is no causal mechanism whereby disease $D$ could have generated any of the patient's symptoms (directly or indirectly), then $D$ cannot constitute causal explanation of the symptoms and should be disregarded. Desideratum iii) incorporates the principle of Occam's razor---favouring simple diagnoses with few diseases that can explain many of the symptoms presented. Note that the posterior only satisfies the first desiderata, violating the last two.

\subsection{Counterfactual diagnosis} \label{section: counterfcatual diagnosis}

To quantify the likelihood that a disease is causing the patient's symptoms, we employ counterfactual inference \cite{shpitser2009effects,morgan2015counterfactuals,pearl2009causal}. Counterfactuals can test whether certain outcomes \emph{would have} occurred had some precondition been different. Given evidence $\mathcal E = e$ we calculate the likelihood that we would have observed a different outcome $\mathcal E = e'$, \emph{counter to the fact} $\mathcal E = e$, had some hypothetical intervention taken place. The counterfactual likelihood is written $P( \mathcal E = e' \, | \, \mathcal E = e, \text{do}(X\! = \!x))$ where do$(X\! = \!x)$ denotes the intervention that sets variable $X$ to the value $X=x$, as defined by Pearl's calculus of interventions \cite{pearl2009causality} (see appendix \ref{appendix: twin networks} for formal definitions). 

Counterfactuals provide us with the language to quantify how well a disease hypothesis $D=T$ explains symptom evidence $S = T$ by determining the likelihood that the symptom would not be present if we were to intervene and `cure' the disease by setting $\text{do}(D=F)$, given by the counterfactual probability $ P( S = F \, | \, S = T, \text{do}(D = F))$. If this probability is high, $D = T$ constitutes a good causal explanation of the symptom.  Note that this probability refers to two contradictory states of $S$ and so cannot be represented as a standard posterior \cite{peters2017elements, pearl2009causality}. In appendix \ref{appendix: twin networks} we describe how these counterfactual probabilities are calculated.

Inspired by this example, we propose two counterfactual diagnostic measures, which we term the \emph{expected disablement} and \emph{expected sufficiency}. We show in Theorem 1 at the end of this section that both measures satisfy all three desiderata from section~\ref{sec: principles}.

\begin{define}[Expected disablement]\label{def: expected disablement}
The expected disablement of disease $D$ is the number of present symptoms that we would expect to switch off if we intervened to cure $D$,

\begin{equation}\label{def exp dis}
        \mathbb E_\text{dis}(D, \mathcal E) := \sum\limits_{ \mathcal S'}\!\left|\mathcal S_+ \setminus \mathcal S_+' \right|P(\mathcal S' | \mathcal E, \text{do}(D = F))
\end{equation}

\noindent where $\mathcal E$ is the factual evidence and $\mathcal S_+$ is the set of factual positively evidenced symptoms. The summation is calculated over all possible counterfactual symptom evidence states $\mathcal S'$ and $\mathcal S_+'$ denotes the positively evidenced symptoms in the counterfactual symptom state. $\text{do}(D = F)$ denotes the counterfactual intervention setting $D \rightarrow F$. $\left|\mathcal S_+ \setminus \mathcal S_+' \right|$ denotes the cardinality of the set of symptoms that are present in the factual symptom evidence but are not present in the counterfactual symptom evidence.

\end{define}

The expected disablement derives from the notion of necessary cause \cite{halpern2016actual}, whereby $D$ is a necessary cause of $S$ if $S=T$ if and only if $D=T$. The expected disablement therefore captures how well disease $D$ alone can explain the patient's symptoms, as well as the likelihood that treating $D$ alone will alleviate the patient's symptoms.

\begin{define}[expected sufficiency]
\label{def expected sufficiency}
The expected sufficiency of disease $D$ is the number of positively evidenced symptoms we would expect to persist if we intervene to switch off all other possible causes of the patient's symptoms,

\begin{equation}\label{expected sufficiency}
    \mathbb E_\text{suff}(D, \mathcal E) := \sum\limits_{\mathcal S'}\!\left|\mathcal S_+' \right|P(\mathcal S' | \mathcal E, \text{do}(\mathsf{Pa}(\mathcal S_+) \setminus D = F))
\end{equation}

\noindent where the summation is over all possible counterfactual symptom evidence states $\mathcal S'$ and $\mathcal S_+'$ denotes the positively evidenced symptoms in the counterfactual symptom state.  $\mathsf{Pa}(\mathcal S_+) \setminus D $ denotes the set of all direct causes of the set of positively evidenced symptoms excluding disease $D$, and $\text{do}(\mathsf{Pa}(\mathcal S_+) \setminus D = F)$ denotes the counterfactual intervention setting all $\mathsf{Pa}(\mathcal S_+'\setminus D)\rightarrow F$. $\mathcal E$ denotes the set of all factual evidence. $\left|\mathcal S_+' \right|$ denotes the cardinality of the set of present symptoms in the counterfactual symptom evidence.
\end{define}

The expected sufficiency derives from the notion of sufficient cause \cite{halpern2016actual}, whereby $D$ is a sufficient cause of $S$ if the presence of $D$ implies the subsequent occurrence of $S$ but, as $S$ can have multiple causes, the presence of $S$ does not imply the prior occurrence of $D$. Typically, diseases are sufficient causes of symptoms. By performing counterfactual interventions to remove all possible causes of the symptoms (both diseases and exogenous influences), the only remaining cause is $D$ and so we isolate its effect as a sufficient cause in our model. 

\begin{theorem}[Diagnostic properties of expected disablement and expected sufficiency]

Expected disablement and expected sufficiency satisfy the three desiderata from section~\ref{sec: principles}.
\end{theorem}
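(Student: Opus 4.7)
My plan is to verify each of the three desiderata in turn for both measures, exploiting the three-stage (abduction--action--prediction) structure of counterfactual inference. The key lever throughout is that when an intervention $\mathrm{do}(X{=}x)$ sets a variable to a value it already had in the factual world, the intervention is vacuous and the counterfactual outcome coincides with the factual one. I would begin with \emph{consistency} (i) by marginalising over the factual value of $D$ inside the counterfactual expectation:
\[
P(\mathcal S'\mid\mathcal E,\mathrm{do}(D{=}F)) = \sum_{d\in\{T,F\}} P(D{=}d\mid\mathcal E)\,P(\mathcal S'\mid\mathcal E,D{=}d,\mathrm{do}(D{=}F)).
\]
On the $D{=}F$ branch the intervention is vacuous, so $\mathcal S'=\mathcal S$ almost surely and $|\mathcal S_+\setminus\mathcal S_+'|=0$. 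Hence $\mathbb E_\text{dis}(D,\mathcal E) = P(D{=}T\mid\mathcal E)\cdot C(D,\mathcal E)$ with $C\ge 0$, which is the required proportionality. The argument for $\mathbb E_\text{suff}$ is the same in spirit: on the $D{=}F$ branch the intervention $\mathrm{do}(\mathsf{Pa}(\mathcal S_+)\setminus D=F)$ clamps every direct cause of every positively evidenced symptom to $F$, and the structural equations then force $\mathcal S_+'=\emptyset$, so that branch contributes nothing and the measure again factors out $P(D{=}T\mid\mathcal E)$.

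For \emph{causality} (ii), suppose $D$ has no directed path to any symptom in $\mathcal S$. Then the structural equation of each $S\in\mathcal S$ does not depend on $D$, so the counterfactual distribution of $\mathcal S'$ given $\mathcal E$ coincides with the factual one. Consequently $\mathcal S_+'=\mathcal S_+$ almost surely and $\mathbb E_\text{dis}=0$. For $\mathbb E_\text{suff}$, the intervention silences every parent of every $S\in\mathcal S_+$ other than $D$; since $D$ is by assumption not itself an ancestor of any $S\in\mathcal S_+$, every cause of the positively evidenced symptoms is clamped to $F$ and $\mathcal S_+'=\emptyset$ deterministically, so $\mathbb E_\text{suff}=0$ as well.

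For \emph{simplicity} (iii), I would use linearity of expectation to rewrite $\mathbb E_\text{dis}$ as $\sum_{S\in\mathcal S_+} P(S'{=}F\mid\mathcal E,\mathrm{do}(D{=}F))$ and $\mathbb E_\text{suff}$ as $\sum_{S} P(S'{=}T\mid\mathcal E,\mathrm{do}(\mathsf{Pa}(\mathcal S_+)\setminus D=F))$, exhibiting each measure as a sum of per-symptom causal-responsibility terms. Holding the posterior $P(D{=}T\mid\mathcal E)$ and the individual disease--symptom mechanisms fixed, a disease that can causally affect more of the presented symptoms contributes more nonzero summands, yielding a strictly larger score---which is the Occam's razor statement. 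The main obstacle I anticipate lies precisely here: formalising ``explains more symptoms'' requires pinning down a comparison class (fixed posterior, fixed per-mechanism strengths) under which the per-symptom decomposition is the right notion of ``count''. A subtler concern in step (i) is verifying that the abduction stage does not leak the intervention back through shared exogenous noise, but this follows from the standard fact that $\mathrm{do}(D{=}F)$ severs $D$'s incoming arrows and so leaves the posterior over the remaining noise variables unaffected by the intervention itself.
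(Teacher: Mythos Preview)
Your arguments for consistency (i) and causality (ii) are essentially the same as the paper's: both proofs marginalise over the factual value of $D_k$, observe that on the $D_k=F$ branch the intervention is vacuous (for disablement) or leaves no active cause (for sufficiency), so that branch contributes nothing, and conclude that the measure factors as $P(D_k=T\mid\mathcal E)$ times a nonnegative quantity. Your twin-network justification that the factual $D_k$ is not a descendant of the intervened $D_k^*$ is exactly what the paper invokes.

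The one place you diverge from the paper is simplicity (iii). The paper does \emph{not} formalise simplicity as a monotonicity statement requiring a comparison class; it formalises it as the upper bound $\mathbb E(D_k,\mathcal E)\le |\mathcal S_+\cap\mathsf{Dec}(D_k)|$, proved for disablement by noting that the measure is a convex combination of values $|\mathcal S_+\setminus\mathcal S_+'|$. Your linearity-of-expectation decomposition actually delivers this same bound more cleanly than the paper: each summand $P(S'=F\mid\mathcal E,\mathrm{do}(D=F))$ (resp.\ $P(S'=T\mid\mathcal E,\mathrm{do}(\cdot))$) lies in $[0,1]$ and vanishes whenever $S\notin\mathsf{Dec}(D_k)$ by your causality argument, so the sum is bounded by the number of positively evidenced descendants of $D_k$. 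If you adopt the paper's upper-bound reading of simplicity, your anticipated obstacle about fixing a comparison class disappears; the awkwardness you flag is an artifact of trying to prove a stronger comparative claim than what the desideratum actually requires.
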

\noindent The proof is provided in appendices \ref{appendix: properties} and \ref{appendix: properties exp}. 

\section{Methods}\label{sect: methods}
Here, we introduce the statistical disease models we use to test the diagnostic measures outlined in the previous sections. We then derive simplified expressions for the expected disablement and sufficiency in these models.

\subsection{Structural causal models for diagnosis} \label{section: Structural causal models for diagnosis}


The disease models we use in our experiments are Bayesian Networks (BNs) that model the relationships between hundreds of diseases, risk factors and symptoms. BNs are widely employed as diagnostic models as they are interpretable \footnote{In this context, a model or algorithm is interpretable if it is possible to determine why the algorithm has reached a given diagnosis} and explicitly encode causal relations between variables---a prerequisite for causal and counterfactual analysis \cite{pearl2009causality}. These models typically represent diseases, symptoms and risk-factors as binary nodes that are either on (true) or off (false). We denote true and false with the standard integer notation 1 and 0 respectively. 

A BN is specified by a directed acyclic graph (DAG) and a joint probability distribution over all nodes which factorises with respect to the DAG structure. If there is a directed arrow from node $X$ to $Y$, then $X$ is said to be a \emph{parent} of $Y$, and $Y$ to be a \emph{child} of $X$. A node $Z$ is said to be an \emph{ancestor} of $Y$ if there is a directed path from $Z$ to $Y$. A simple example BN is shown in Fig. \ref{3 layer BN} (a), which depicts a BN modeling diseases, symptoms, and risk factors (the causes of diseases).

BN disease models have a long history going back to the INTERNIST-1 \cite{miller1986internist}, Quick Medical Reference (QMR) \cite{shwe1991probabilistic,miller2010history}, and PATHFINDER \cite{heckerman1992toward1,heckerman1992toward} systems, with many of the original systems corresponding to noisy-OR networks with only disease and symptom nodes, known as BN2O networks \cite{morris2001recognition}. Recently, three-layer BNs as depicted in Fig.~\ref{3 layer BN} (a) have replaced these two layer models \cite{razzaki2018comparative}. These models make fewer independence assumptions and allow for disease risk-factors to be included. Whilst our results will be derived for these models, they can be simply extended to models with more or less complicated dependencies \cite{shwe1991probabilistic,heckerman1992toward2}.



\begin{figure}
    \centering
    \includegraphics[scale=0.4]{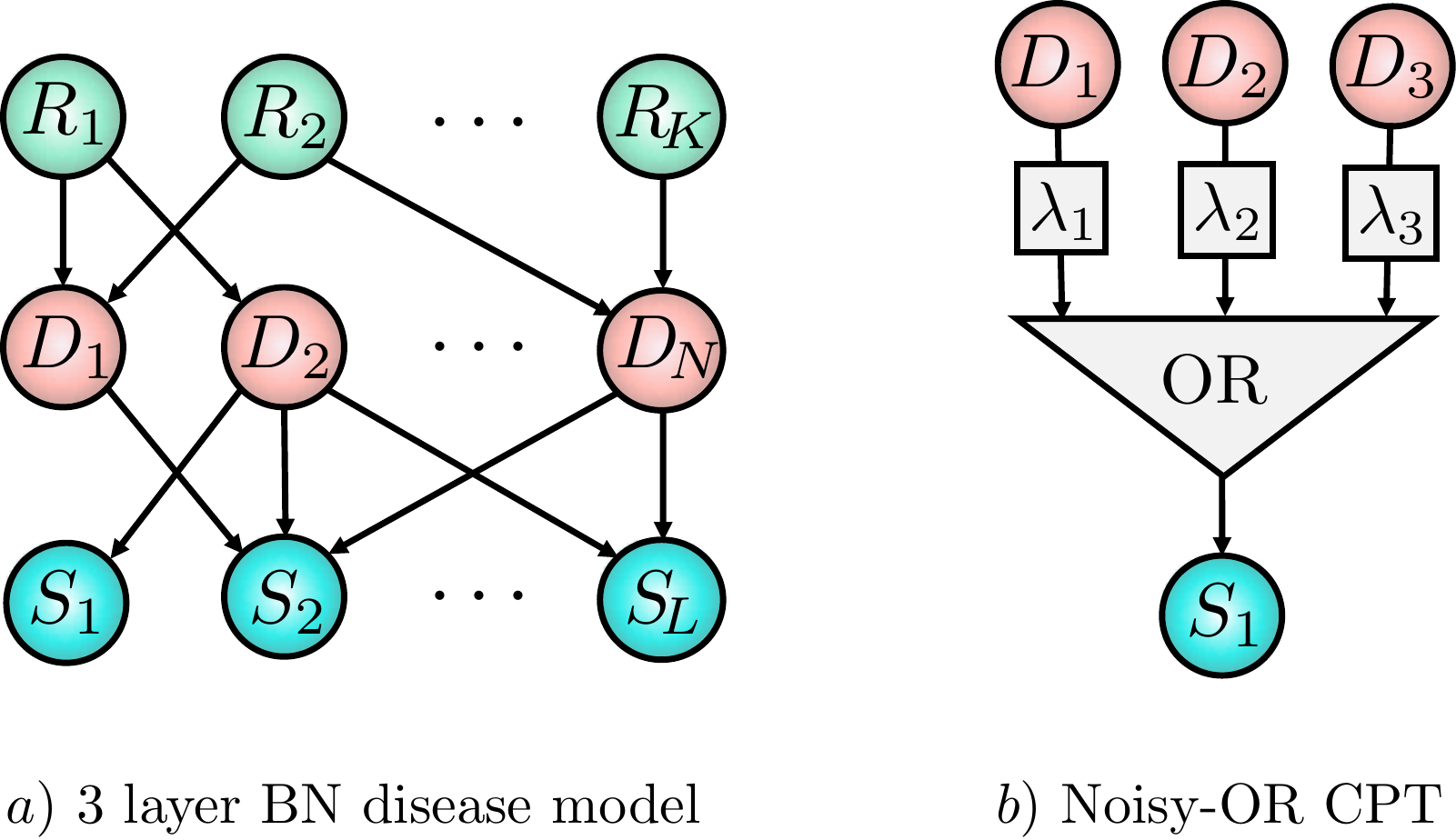}
    \caption{\footnotesize a) Three layer Bayesian network representing risk factors $R_i$, diseases $D_j$ and symptoms $S_k$. b) noisy-OR CPT. $S$ is the Boolean OR function of its parents, each with an independent probability $\lambda_i$ of being ignored, removing them from the OR function.}
    \label{3 layer BN}
\end{figure}

In the field of causal inference, BNs are replaced by the more fundamental Structural Causal Models (SCMs), also referred to as Functional Causal Models and Structural Equation Models \cite{peters2017elements, lee2017causal}. SCMs are widely applied and studied, and their relation to other approaches, such as probabilistic graphical models and BNs, is well understood \cite{lauritzen1996graphical,pearl2009causality}. The key characteristic of SCMs is that they represent each variable as deterministic functions of their direct causes together with an unobserved exogenous `noise' term, which itself represents all causes outside of our model. That the state of the noise term is unknown induces a probability distribution over observed variables. For each variable $Y$, with parents in the model $X$, there is a noise term $u_Y$, with unknown distribution $P(u_Y)$ such that $Y=f(x,u_Y)$ and $P(Y = y | X = x) = \sum_{u_Y : f(x, u_Y) = y} P(U_Y = u_Y)$. By incorporating knowledge of the functional dependencies between variables, SCMs enable us to determine the response of variables to interventions (such as treatments). As we show in section~\ref{section: diagnostic twin networks}, existing diagnostic BNs such as BN2O networks \cite{morris2001recognition} are naturally represented as SCMs. 

\subsection{Noisy-OR twin diagnostic networks}\label{section: noisy or twin networks} \label{section: diagnostic twin networks}
When constructing disease models it is common to make additional modelling assumptions beyond those implied by the DAG structure. The most widely used of these correspond to `noisy-OR' models \cite{shwe1991probabilistic}. Noisy-OR models are routinely used for modelling in medicine, as they reflect basic intuitions about how diseases and symptoms are related \cite{nikovski2000constructing,rish2002accuracy}. Additionally they support efficient inference \cite{heckerman1990tractable} and learning \cite{halpern2013unsupervised,arora2017provable}, and allow for large BNs to be described by a number of parameters that grows linearly with the size of the network \cite{onisko2001learning,halpern2013unsupervised}. 

Under the noisy-OR assumption, a parent $D_i$ activates its child $S$ (causing $S=1$) if i) the parent is on, $D_i = 1$, and ii) the activation does not randomly fail. The probability of failure, conventionally denoted as $\lambda_{D_i, S}$, is independent from all other model parameters. The `OR' component of the noisy-OR states that the child is activated if \emph{any} of its parents successfully activate it. Concretely, the value $s$ of $S$ is the Boolean OR function $\vee$ of its parents activation functions, $s = \vee_i f(d_i, u_i)$, where the activation functions take the form $f(d_i, u_i) = d_i \wedge \bar u_i$, $\wedge$ denotes the Boolean AND function, $d_i\in \{0, 1\}$ is the state of a given parent $D_i$ and $u_i\in \{0, 1\}$ is a latent noise variable ($\bar u_i := 1-u_i$) with a probability of failure $P(u_i = 1) = \lambda_{D_i, S}$. The noisy-OR model is depicted in Fig 1. b).  Intuitively, the noisy-OR model captures the case where a symptom only requires a single activation to switch it on, and `switching on' a disease will never `switch off' a symptom. For further details on noisy-OR disease modelling see appendix \ref{appendix: scms}.


We now derive expressions for the expected disablement and expected sufficiency for these models using twin-networks method for computing counterfactuals introduced in \cite{balke1994counterfactual,shpitser2012counterfactuals}. This method represents real and counterfactual variables together in a single SCM---the twin network---from which counterfactual probabilities can be computed using standard inference techniques. This approach greatly amortizes the inference cost of calculating counterfactuals compared to abduction \cite{pearl2009causality}, which is intractable for large SCMs. We refer to these diagnostic models as twin diagnostic networks, see appendix \ref{appendix: twin networks} for further details. 

\begin{theorem}\label{expected sufficiency theorem}
For 3-layer noisy-OR BNs (formally described in appendices \ref{appendix: scms}-\ref{appendix: twin networks}), the expected sufficiency and expected disablement of disease $D_k$ are given by

\begin{equation}\label{result 0}
    \frac{\sum\limits_{\mathcal Z\subseteq \mathcal S_+}(-1)^{|\mathcal Z|}P(\mathcal S_- \!=\! 0, \mathcal Z \!=\! 0, D_k \!=\! 1|\mathcal R)\tau (k, \mathcal  Z)}{P(\mathcal S_\pm |\mathcal R)}
\end{equation}

where for the expected sufficiency

\begin{equation}
    \tau (k, \mathcal  Z) =\sum\limits_{S\in \mathcal S_+\setminus \mathcal Z}(1-\lambda_{D_k, S})
\end{equation}

and for the expected disablement

\begin{equation}
    \tau (k, \mathcal  Z) =\sum\limits_{S\in \mathcal Z}\left(1-\frac{1}{\lambda_{D_k, S}}\right)
\end{equation}

where $\mathcal S_\pm$ denotes the positive and negative symptom evidence, $\mathcal R$ denotes the risk-factor evidence, and $\lambda_{D_k, S}$ is the noise parameter for $D_k$ and $S$. 

\end{theorem}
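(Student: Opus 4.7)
The plan is to reduce each expectation to a sum over per-symptom counterfactual probabilities, evaluate those probabilities in closed form in the noisy-OR twin network of Section~\ref{section: noisy or twin networks}, and then convert the joint positive-symptom constraint $\mathcal S_+=1$ into a signed sum over subsets $\mathcal Z\subseteq\mathcal S_+$ via inclusion-exclusion. Both interventions only remove activators, so by shared noise any factually-negative symptom remains off counterfactually; combined with linearity of expectation this restricts the sums to $\mathcal S_+$,
\begin{align*}
\mathbb E_{\text{suff}}(D_k,\mathcal E) &= \sum_{S\in\mathcal S_+} P\bigl(S^*=1\mid\mathcal E,\mathrm{do}(\mathsf{Pa}(\mathcal S_+)\setminus D_k=0)\bigr),\\
\mathbb E_{\text{dis}}(D_k,\mathcal E) &= \sum_{S\in\mathcal S_+} P\bigl(S^*=0\mid\mathcal E,\mathrm{do}(D_k=0)\bigr).
\end{align*}

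Next I would evaluate each per-symptom term using the noisy-OR activation law $S=\bigvee_i (D_i\wedge\bar u_{D_i,S})$ together with the fact that the noise variables $u_{D_i,S}$ are shared between the factual and counterfactual halves of the twin network. For sufficiency, the intervention zeros every parent of $S$ except $D_k$, so $S^*=1$ is exactly the event $\{D_k=1,\,u_{D_k,S}=0\}$; this event automatically implies factual $S=1$ and is independent of every other factual constraint, yielding
\[
P\bigl(S^*=1\mid\mathcal E,\mathrm{do}\bigr) = \frac{(1-\lambda_{D_k,S})\,P(D_k=1,\,\mathcal S_+\setminus\{S\}=1,\,\mathcal S_-=0\mid\mathcal R)}{P(\mathcal S_\pm\mid\mathcal R)}.
\]
For disablement, the joint event $\{S=1,\,S^*=0\}$ under $\mathrm{do}(D_k=0)$ is equivalent to $D_k$ being the unique factual activator of $S$; conditioning on the disease configuration $\mathbf d$, the noisy-OR identity $\mathbb 1[D_k=1](1-\lambda_{D_k,S})\prod_{i\neq k,\,D_i=1}\lambda_{D_i,S}=(\lambda_{D_k,S}^{-1}-1)\,P(S=0\mid\mathbf d)$ extracts a prefactor $(\lambda_{D_k,S}^{-1}-1)$ and replaces the $S=1$ literal in the numerator by $S=0$.

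Finally I would apply the inclusion-exclusion identity $P(\mathcal A=1,\,\cdot)=\sum_{\mathcal Z\subseteq\mathcal A}(-1)^{|\mathcal Z|}P(\mathcal Z=0,\,\cdot)$ to the remaining constraint $\mathcal S_+\setminus\{S\}=1$ in each numerator, and then swap the order of summation over $S$ and $\mathcal Z$. For sufficiency this directly yields $\tau(k,\mathcal Z)=\sum_{S\in\mathcal S_+\setminus\mathcal Z}(1-\lambda_{D_k,S})$. For disablement, folding the extra literal $S=0$ into the index reindexes the inner sum to subsets $\mathcal Z\subseteq\mathcal S_+$ containing $S$ and picks up a $(-1)^{|\mathcal Z|-1}$ sign; combined with the prefactor $(\lambda_{D_k,S}^{-1}-1)$ this becomes $\tau(k,\mathcal Z)=\sum_{S\in\mathcal Z}(1-\lambda_{D_k,S}^{-1})$. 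Both cases then fall into the common form of \eqref{result 0}. The principal obstacle is the disablement evaluation, where the shared noise $u_{D_k,S}$ must simultaneously be consistent with factual $S=1$ and counterfactual $S^*=0$; once the noisy-OR identity above isolates the ratio $P(S=1,S^*=0\mid\mathbf d)/P(S=0\mid\mathbf d)$, the remaining inclusion-exclusion and swap of summations mirror the sufficiency case.
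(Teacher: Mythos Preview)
Your argument is correct, and it is genuinely different from --- and considerably shorter than --- the paper's own derivation. The paper never invokes linearity of expectation. Instead, for the expected sufficiency it first computes the full twin-network CPT for a dual symptom pair (Lemma~\ref{simplified conditionals lemma 1}), marginalises over all disease states to obtain the joint probability of every complete counterfactual symptom configuration, lands on an intermediate expression of the form
\[
\sum_{\mathcal S\subseteq \mathcal S_+}|\mathcal S_+\setminus \mathcal S|\,P(\mathcal S_-=0,\mathcal S^{\setminus k}=1,D_k=1\mid\mathcal R)\prod_{S\in\mathcal S_+\setminus\mathcal S}(1-\lambda_{D_k,S})\prod_{S\in\mathcal S}\lambda_{D_k,S},
\]
and only then collapses the products via the combinatorial identity $\sum_{\mathcal A\subseteq\mathcal B}|\mathcal A|\prod_{a\in\mathcal A}(1-a)\prod_{a'\in\mathcal B\setminus\mathcal A}a'=\sum_{a\in\mathcal B}(1-a)$. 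For the expected disablement the paper follows the same pattern and ultimately needs a pair of recursion relations for auxiliary functions $G(\mathcal A)$ and $H(\mathcal A)$ to reach the stated $\tau$. Your route sidesteps all of this: by writing $|\mathcal S_+'|=\sum_{S\in\mathcal S_+}\mathbb 1[S^*=1]$ and $|\mathcal S_+\setminus\mathcal S_+'|=\sum_{S\in\mathcal S_+}\mathbb 1[S^*=0]$ you reduce immediately to single-symptom counterfactual marginals, evaluate each one by a direct shared-noise argument (which is exactly the content of the paper's Lemmas~\ref{simplified conditionals lemma 1} and~\ref{simplified conditionals}, but applied pointwise), and then need only one inclusion-exclusion expansion and a swap of finite sums. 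What the paper's approach buys is an explicit intermediate formula for the full joint over counterfactual symptom states, which might be useful for other queries; what your approach buys is that the final $\tau$ coefficients drop out in two lines with no auxiliary identities.
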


The proof is provided by theorem \ref{theorem deriv exp suff} in appendix \ref{appendix: exp suff} and by theorem \ref{theorem deriv expected dis} in appendix  \ref{appendix: expected dis}. Note that \eqref{result 0} recovers the standard posterior $P(D_k = 1 |\mathcal E)$ in the limit that $\tau(D_k, \mathcal Z) \rightarrow 1$ $\forall$ $ \mathcal Z$.  

\section{Results} \label{section: experiments}

Here we outline our experiments comparing the expected disablement and sufficiency to posterior inference using the models outlined in the previous section. We introduce our test set which includes a set of clinical vignettes and a cohort of doctors. We then evaluate our algorithms across several diagnostic tasks.


\subsection{Diagnostic model and datasets}

One approach to validating diagnostic algorithms is to use electronic health records (EHRs) \cite{liang2019evaluation,topol2019high,de2018clinically,yu2018artificial,jiang2017artificial}. A key limitation of this approach is the difficulty in defining the ground truth diagnosis, where diagnostic errors result in mislabeled data. This problem is particularly pronounced for differential diagnoses because of the large number of candidate diseases and hence diagnostic labels, incomplete or inaccurate recording of case data, high diagnostic uncertainty and ambiguity, and biases such as the training and experience of the clinician who performed the diagnosis. 

To resolve these issues, a standard method for assessing doctors is through the examination of simulated diagnostic cases or \emph{clinical vignettes} \cite{peabody2004measuring}. A clinical vignette simulates a typical patient's presentation of a disease, containing a non-exhaustive list of evidence including symptoms, medical history, and basic demographic information such as age and birth gender \cite{razzaki2018comparative}. This approach is often more robust to errors and biases than real data sets such as EHRs, as the task of simulating a disease given its known properties is simpler than performing a differential diagnosis, and has been found to be effective for evaluating human doctors \cite{peabody2004measuring,veloski2005clinical,converse2015methods,dresselhaus2004evaluation} and comparing the accuracy of doctors to symptom checker algorithms \cite{semigran2015evaluation,semigran2016comparison,razzaki2018comparative,middleton2016sorting}.

We use a test set of 1671 clinical vignettes, generated by a separate panel of doctors qualified at least to the level of general practitioner \footnote{equivalent to board certified primary care physicians}. Where possible, symptoms and risk factors match those in our statistical disease model. However, to avoid biasing our study the vignettes include any additional clinical information as case notes, which are available to the doctors in our experiments. Each vignette is authored by a single doctor and then verified by multiple doctors to ensure that it represents a realistic diagnostic case. For each vignette the true disease is masked and the algorithm returns a diagnosis in the form of a full ranking of all modeled diseases using the vignette evidence. The disease ranking is computed using the posterior for the associative algorithm, and the expected disablement or expected sufficiency for the counterfactual algorithms. Doctors provide an independent differential diagnosis in the form of a partially ranked list of candidate diseases.


In all experiments the counterfactual and associative algorithms use identical disease models to ensure that any difference in diagnostic accuracy is due to the ranking query used. The disease model used is a three layer noisy-OR diagnostic BN as described in sections \ref{sect: methods} and appendix \ref{appendix: scms}. The BN is parameterised by a team of doctors and epidemiologists \cite{razzaki2018comparative,middleton2016sorting}. The prior probabilities of diseases and risk factors are obtained from epidemiological data, and conditional probabilities are obtained through elicitation from multiple independent medical sources and doctors \footnote{2: It should be noted that the disease model evaluated in the following experiments is not the current production model used for the purposes of diagnosis and triage by Babylon Health$^\text{TM}$. This article is for general information and academic purposes, and this disease model is used to facilitate discussion on this topic. This article is not designed to be relied upon for any other
purpose.}. The expected disablement and expected sufficiency are calculated using Theorem \ref{expected sufficiency theorem}.

\subsection{Counterfactual v.s associative rankings}

Our first experiment compares the diagnostic accuracy of ranking diseases using the posterior \eqref{posterior}, expected disablement and expected sufficiency \eqref{result 0}. For each of the 1671 vignettes the top-$k$ ranked diseases are computed, with $k = 1, \ldots 20$, and the top-$k$ accuracy is calculated as fraction of the 1671 diagnostic vignettes where the true disease is present in the $k$-top ranking. The results are presented in figure \ref{fig:top_n}. The expected disablement and expected sufficiency give almost identical accuracies for all $k$ on our test set, and for the sake of clarity we present the results for the expected sufficiency alone. A complete table of results is present in Appendix \ref{appendix: experimental}.

\begin{figure}[h!]
    \centering
    \includegraphics[scale=0.22]{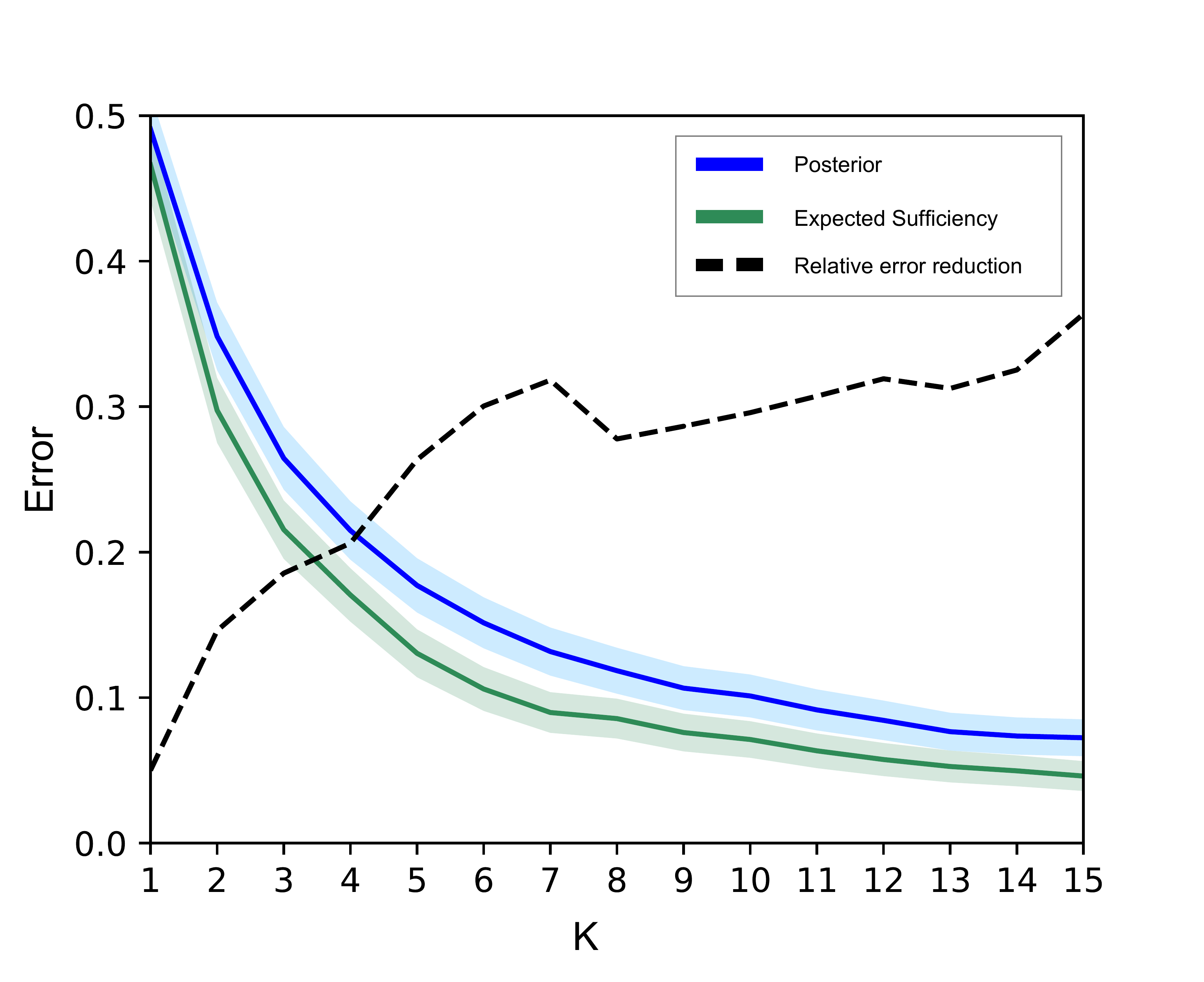}
    \caption{\footnotesize \textbf{Top $k$ accuracy of Bayesian and counterfactual algorithms}. Figure shows the top $k$ error (1 - accuracy) of the counterfactual (green line) and associative (blue line) algorithms over all 1671 vignettes v.s $k$. Shaded regions give 95\% confidence intervals. The black dashed line shows the relative reduction in error when switching from the associative to counterfactual algorithm, given by $1-e_c / e_a$ where $e_a$ is the error rate of the associative algorithm, and $e_c$ is the error rate of the counterfactual algorithm. Results shown for $k = 1, \ldots 15$, for complete results see Appendix \ref{appendix: experimental}.}
    \label{fig:top_n}
\end{figure}

For $k = 1$, returning the top ranked disease, the counterfactual algorithm achieves a 2.5\% higher accuracy than the associative algorithm. For $k > 1$ the performance of the two algorithms diverge, with the counterfactual algorithm giving a large reduction in the error rate over the associative algorithm. For $k>5$, the counterfactual algorithm reduces the number of misdiagnoses by approximately 30\% compared to the associative algorithm. This suggests that the best candidate disease is reasonably well identified by the posterior, but the counterfactual ranking is significantly better at identifying the next most likely diseases. These secondary candidate diseases are especially important in differential diagnosis for the purposes of triage and determining optimal testing and treatment strategies. 


\begin{table}[!h]
\footnotesize
\begin{center}
\begin{tabular}{| c | c | c | c | c | c | c |}
\hline
\textbf{} & \multicolumn{6}{ c |}{Vignettes}  \\ 
\cline{2-7}
 & All & VCommon & Common & Uncommon & Rare & VRare \\
\hline
N & 1671 & 131 & 413 & 546 & 353 & 210  \\ \hline
Mean (A)  & 3.81   & 2.85  & 2.71 & 3.72 & 4.35  & 5.45   \\ \hline
Mean (C)  & 3.16   & 2.5  & 2.32  & 3.01  & 3.72  & 4.38 \\ \hline
Wins (A) & 31 &  2  &  7 & 9 & 9 & 4 \\ \hline
Wins (C)  & 412 & 20 & 80 & 135 & 103 & 69\\ \hline
Draws & 1228 & 131 & 326 & 402 & 241 & 137 \\ \hline
\end{tabular}

\end{center}
\caption{\footnotesize \textbf{Position of true disease in ranking stratified by rareness of disease}. Table shows the mean position of the true disease for the associative (A) and counterfactual (C) algorithms. The results for expected disablement are almost identical to the expected sufficiency and are included in the appendices. Results are stratified over the rareness of the disease (given the age and gender of the patient), where VCommon = Very common and VRare = very rare, and All is over all 1671 vignettes regardless of disease rarity. N is the number of vignettes belonging to each rareness category. Mean(X) is the average position of the true disease for algorithm X. Wins (X) is the number of vignettes where algorithm X ranked the true disease higher than its counterpart, and Draws is the number of vignettes where the two algorithms ranked the true disease in the same position. For full results including uncertainties see appendix \ref{appendix: experimental}.}
\label{tab:topn counts main} 
\end{table}

A simple method for comparing two rankings is to compare the position of the true disease in the rankings. Across all 1671 vignettes we found that the counterfactual algorithm ranked the true disease higher than the associative algorithm in 24.7\% of vignettes, and lower in only 1.9\% of vignettes. On average the true disease is ranked in position 3.16$\pm$4.4 by the counterfactual algorithm, a substantial improvement over 3.81$\pm$5.25 for the associative algorithm (see Table \ref{tab:topn counts main}). 

In table \ref{tab:topn counts main} we stratify the vignettes by the prior incidence rates of the true disease by very common, common, uncommon, rare and very rare. While the counterfactual algorithm achieves significant improvements over the associative algorithm for both common and rare diseases, the improvement is particularly large for rare and very rare diseases, achieving a higher ranking for $29.2\%$ and $32.9\%$ of these vignettes respectively. This improvement is important as rare diseases are typically harder to diagnose and include many serious conditions where diagnostic errors have the greatest consequences. 


\subsection{Comparing to doctors}

Our second experiment compares the counterfactual and associative algorithms to a cohort of 44 doctors. Each doctor is assigned a set of at least 50 vignettes (average 159), and returns an independent diagnosis for each vignette in the form of a partially ranked list of $k$ diseases, where the size of the list $k$ is chosen by the doctor on a case-by-case basis (average diagnosis size is $2.58$ diseases). For a given doctor, and for each vignette diagnosed by the doctor, the associative and counterfactuals algorithms are supplied with the same evidence (excluding the free text case description) and each returns a top-$k$ diagnosis, where $k$ is the size of the diagnosis provided by the doctor. Matching the precision of the doctor for every vignette allows us to compare the accuracy of the doctor and the algorithms without constraining the doctors to give a fixed number of diseases for each diagnosis. This is important as doctors will naturally vary the size $k$ of their diagnosis to reflect their uncertainty in the diagnostic vignette.

\begin{figure}[h!]
    \centering
    \includegraphics[scale=0.25]{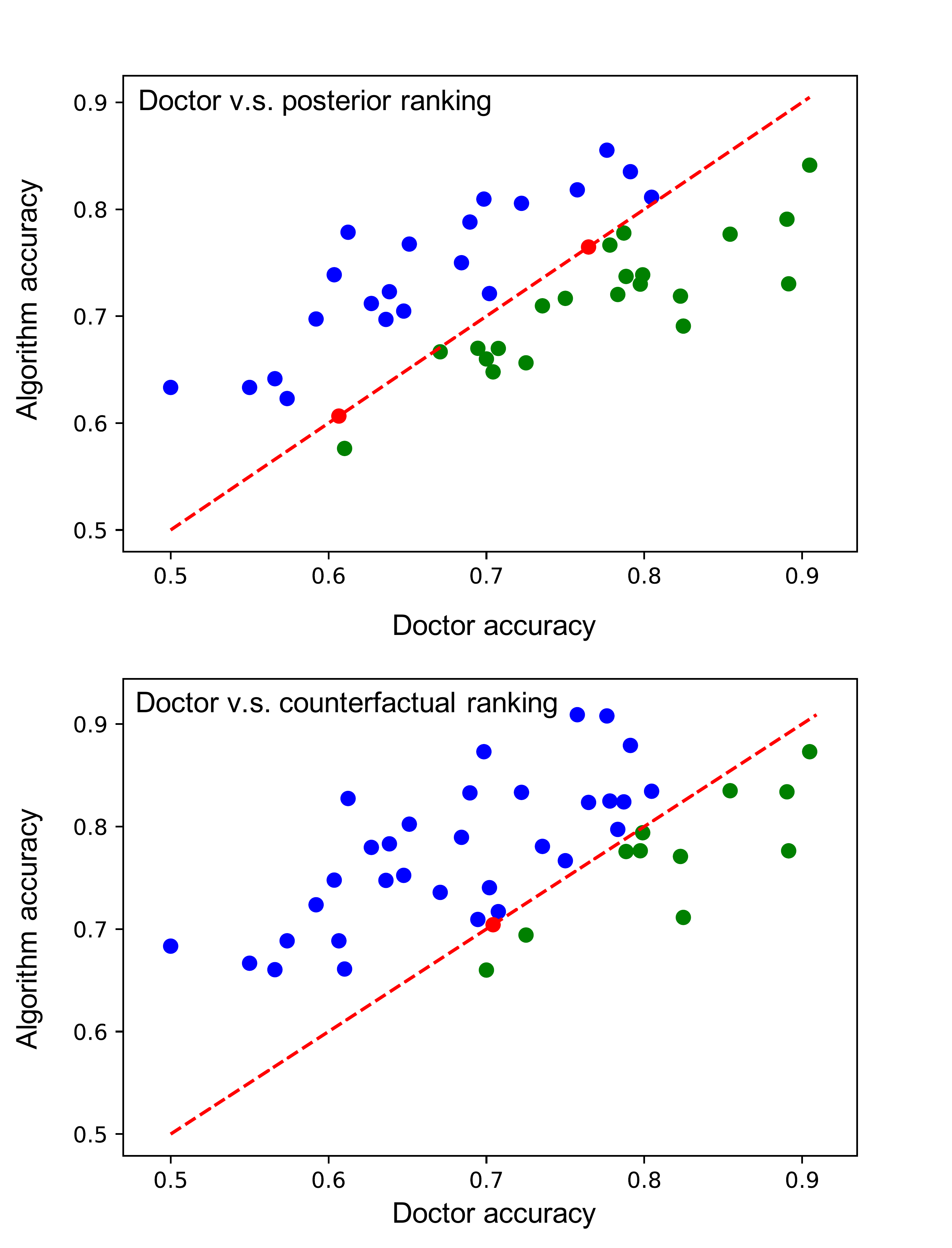}
    \caption{\footnotesize \textbf{Mean accuracy of each doctor compared to Bayesian and counterfactual algorithms.} Figure shows the mean accuracy for each of the 44 doctors, compared to the posterior ranking (top) and expected sufficiency ranking (bottom) algorithms. The line $y=x$ gives a reference for comparing the accuracy of each doctor to the algorithm shadowing them. Points above the line correspond to doctors who achieved a lower accuracy than the algorithm (blue), points on the line are doctors that achieved the same accuracy as the algorithm (red), and below the line are doctors that achieved higher accuracy than the algorithm (green). The linear correlation can be explained by the variation in the difficulty of the sets of vignettes diagnosed by each doctor. Sets of easier/harder vignettes results in higher/lower doctor and algorithm accuracy scores. As the results for the expected disablement and expected sufficiency are almost identical, we show only the results for the expected sufficiency. Complete results are listed in appendix \ref{appendix: experimental}.}
    \label{fig:scatter_plots}
\end{figure}


\begin{table}
\begin{center}
\footnotesize
\begin{tabular}{|c|c|c|c|c|c|} 
\hline
Agent & Accuracy (\%) & $N_{\geq D}$ & $N_{\geq A}$ & $N_{\geq C1}$ & $N_{\geq C2}$\\
\hline
D & 71.40 $\pm$ 3.01 & - & 23 (8) & 12 (4) & 13 (5) \\ 
A & 72.52 $\pm$ 2.97 & 23 (9) & - & 1 (0) &  1 (0)\\
C1 &  77.26 $\pm$ 2.79 & 33 (20) & 44 (13) & - & 36   (0)\\ 
C2 & 77.22 $\pm$ 2.79 & 33 (19) & 44 (14)& 32 (0) & -\\
\hline
\end{tabular}
\end{center}

\caption{ \footnotesize \textbf{Group mean accuracy of doctors and algorithms}. The mean accuracy of the doctors D, associative A and counterfactual algorithms (C1 = expected sufficiency, C2 =  expected disablement), averaged over all experiments. $N_{\geq K}$ gives the number of trials (one for each doctor) where this agent achieved a mean accuracy the same or higher than the mean accuracy of agent $K\in \{\text{D, A, C1, C2}\}$. The bracketed term is the number of trials where the agent scored the same or higher accuracy than agent $K$ to 95\%  confidence, determined by a one sided binomial test.\label{tab:res}
}
\end{table}

The complete results for each of the 44 doctors, and for the posterior, expected disablement, and expected sufficiency ranking algorithms are included in Appendix \ref{appendix: experimental}. Figure \ref{fig:scatter_plots} compares the accuracy of each doctor to the associative and counterfactual algorithms. Each point gives the average accuracy for one of the 44 doctors, calculated as the proportion of vignettes diagnosed by the doctor where the true disease is  included in the doctor's differential. This is plotted against the accuracy that the corresponding algorithm achieved when diagnosing the same vignettes and returning differentials of the same size as that doctor. 

Doctors tend to achieve higher accuracies in case sets involving simpler vignettes---identified by high doctor and algorithm accuracies. Conversely, the algorithm tends to achieve higher accuracy than the doctors for more challenging vignettes---identified by low doctor and algorithm accuracies. This suggests that the diagnostic algorithms are complimentary to the doctors, with the algorithm performing better on vignettes where doctor error is more common and vice versa.

Overall, the associative algorithm performs on par with the average doctor, achieving a mean accuracy across all trails of $72.52\pm 2.97\%$ v.s $71.40\pm 3.01\%$ for doctors. The algorithm scores higher than 21 of the doctors, draws with 2 of the doctors, and scores lower than 21 of the doctors. The counterfactual algorithm achieves a mean accuracy of $77.26\pm 2.79\%$, considerably higher than the average doctor and the associative algorithm, placing it in the top 25\% of doctors in the cohort. The counterfactual algorithm scores higher than 32 of the doctors, draws with 1, and scores a lower accuracy than 12.

In summary, we find that the counterfactual algorithm achieves a substantially higher diagnostic accuracy than the associative algorithm. We find the improvement is particularly pronounced for rare diseases. Whilst the associative algorithm performs on par with the average doctor, the counterfactual algorithm places in the upper quartile of doctors.

\bigskip 

\section{Discussion} \label{section: conclusions}


Poor access to primary healthcare and errors in differential diagnoses represent a significant challenge to global healthcare systems \cite{higgs2008clinical,graber2013incidence,singh2014frequency,singh2013types,liberman2018symptom,singh2017global}. If machine learning is to help overcome these challenges, it is important that we first understand how diagnosis is performed and clearly define the desired output of our algorithms. Existing approaches have conflated diagnosis with associative inference. Whilst the former involves determining the underlying cause of a patient's symptoms, the latter involves learning correlations between patient data and disease occurrences, determining the most likely diseases in the population that the patient belongs to. Whilst this approach is perhaps sufficient for simple causal scenarios involving single diseases, it places strong constraints on the accuracy of these algorithms when applied to differential diagnosis, where a clinician chooses from multiple competing disease hypotheses. Overcoming these constraints requires that we fundamentally rethink how we define diagnosis and how we design diagnostic algorithms.

We have argued that diagnosis is fundamentally a counterfactual inference task and presented a new causal definition of diagnosis. We have derived two counterfactual diagnostic measures, expected disablement and expected sufficiency, and a new class of diagnostic models---twin diagnostic networks---for calculating these measures. Using existing diagnostic models we have demonstrated that ranking disease hypotheses by these counterfactual measures greatly improves diagnostic accuracy compared to standard associative rankings. Whilst the associative algorithm performed on par with the average doctor in our cohort, the counterfactual algorithm places in the top 25\% of doctors in our cohort---achieving expert clinical accuracy. The improvement is particularly pronounced for rare and very rare diseases, where diagnostic errors are typically more common and more serious, with the counterfactual algorithm ranking the true disease higher than the associative algorithm in $29.2\%$ and $32.9\%$ of these cases respectively. Importantly, this improvement comes `for free', without requiring any alterations to the disease model. Because of this backward compatibility our algorithm can be used as an immediate upgrade for existing Bayesian diagnostic algorithms including those outside of the medical setting \cite{cai2017bayesian,yongli2006bayesian,dey2005bayesian,romessis2006bayesian,cai2014multi}.

Whereas other approaches to improving clinical decision systems have focused on developing better model architectures or exploiting new sources of data, our results demonstrate a new path towards expert-level clinical decision systems---changing how we query our models to leverage causal knowledge. Our results add weight to the argument that machine learning methods that fail to incorporate causal reasoning will struggle to surpass the capabilities of human experts in certain domains  \cite{pearl2018theoretical}. Whilst we have focused on comparing our algorithms to doctors, future experiments could determine the effectiveness of these algorithms as clinical support systems---guiding doctors by providing a second opinion diagnosis. Given that our algorithm appears to be complimentary to human doctors, performing better on vignettes that doctors struggle to diagnose, it is likely that the combined diagnosis of doctor and algorithm will be more accurate than either alone. 

\newpage
\bibliographystyle{ieeetr}
\bibliography{sample-base}

%


\onecolumngrid


\begin{appendices}

The structure of these appendices is as follows. In appendix \ref{appendix: notation} we detail our notation. In appendix \ref{appendix: scms} we outline the tools we use to derive our results -- namely the frameworks of structural causal models (SCMs), introduce noisy-or Bayesian networks, and derive their SCM representation. In appendix \ref{appendix: twin networks} we outline the framework of twin-networks \cite{balke1994counterfactual}, and derive a simplified class of twin networks that we will use for computing our counterfactual diagnostic measures (`twin diagnostic networks'). In appendices \ref{appendix: exp suff} and \ref{appendix: expected dis} we introduce and derive expressions for our counterfactual diagnostic measure s---the expected sufficiency and the expected disablement---for the family of noisy-or diagnostic networks introduced in Appendices \ref{appendix: scms} and \ref{appendix: twin networks}. In appendices \ref{appendix: properties} and \ref{appendix: properties exp} we prove that these two measures satisfy our desiderata. In appendix \ref{appendix: experimental} we list our experimental results.

\section{Notation}\label{appendix: notation}

\textbf{Variables}: For the disease models we consider, all variables $X$ are Bernoulli, $X\in \{0, 1\}$. Where appropriate we refer to $X = 0$ as the variable $X$ being `off', and $X = 1$ as the variable $X$ being `on'. We denote single variables as capital Roman letters, and sets of variables as calligraphic, e.g. $\mathcal X = \{X_1, X_2, \ldots, X_n\}$. The union of two sets of variables $\mathcal X$ and $\mathcal Y$ is denoted $\mathcal X \cup \mathcal Y$, the intersection is denoted $\mathcal X \cap \mathcal Y$, and the relative compliment of $\mathcal X$ w.r.t $\mathcal Y$ as $\mathcal X\setminus \mathcal Y$. The instantiation of a single variable is indicated by a lower case letter, $X = x$, and for a set of variables  $\mathcal X = \underline x$ denotes some arbitrary instantiation of all variables belonging to $\mathcal X$, e.g. $X_1= x_1, X_2 = x_2, \ldots, X_n = x_n$. The probability of $\mathcal X = \underline x$ is denoted $P(\mathcal X = \underline x)$, and sometimes for simplicity is denoted as $P(\underline x)$. \\

For a given variable $X$ and a directed acyclic graph (DAG) $G$, we denote the set of parents of $X$ as $\mathsf{Pa}(X)$, the set of children of $X$ as $\mathsf{Ch}(X)$, all ancestors of $X$ as $\mathsf{Anc}(X)$, and all descendants of $X$ as $\mathsf{Dec}(X)$. If we perform a graph cut operation on $G$, removing a directed edge from $Y$ to $X$, we denote the variable $X$ in the new DAG generated by this cut as $X^{\setminus Y}$.\\

\textbf{Functions}: Bernoulli variables are represented interchangeably as Boolean variables, with $1 \leftrightarrow $ `True' and $0 \leftrightarrow$ `False'. For a given instantiation of a Bernoulli/Boolean variable $X = x$, we denote the negation of $x$ as $\bar x$ -- for example if $x = 1 (0)$, $\bar x = 0 (1)$. We denote the Boolean AND function as $\wedge$, and the Boolean OR function as $\vee$.

\section{structural causal models}\label{appendix: scms}
First we define structural causal models (SCMs), sometimes also called structural equation models or functional causal models. These are widely applied and studied probabilistic models, and their relation to other approaches such as Bayesian networks are well understood \cite{lauritzen1996graphical,pearl2009causality}. The key characteristic of SCMs is that they represent variables as functions of their direct causes, along with an exogenous `noise' variable that is responsible for their randomness.

\begin{define}[Structural Causal Model] \label{functional causal model}
\label{scmdef}
A causal model specifies:
\begin{enumerate}
\item a set of latent, or \emph{noise}, variables $\mathcal U=\{u_1,\dots,u_n\}$, distributed according to $P(\mathcal U)$.
\item a set of observed variables $\mathcal V=\{v_1,\dots, v_n\},$
\item a directed acyclic graph $G$, called the \emph{causal structure} of the model, whose nodes are the variables $\mathcal U\cup \mathcal V$,
\item a collection of functions $F=\{f_1,\dots, f_n\}$, where $f_i$ is a mapping from $\mathcal U \cup \mathcal V/v_i$ to $v_i$. The collection $F$ forms a mapping from $\mathcal U$ to $\mathcal V$. This is symbolically represented as 
$$v_i = f_i(\mathsf{Pa}(v_i), u_i), \text{ for } i=1,\dots, n,$$
where $\text{pa}_i$ denotes the parent nodes of the $i$th observed variable in $G$. 
\end{enumerate}
\end{define}

As the collection of functions $F$ forms a mapping from noise variables $\mathcal U$ to observed variables $\mathcal V$, the distribution over noise variables induces a distribution over observed variables, given by 
\begin{equation} \label{fundamental scm equation}
    P(v_i) := \sum_{u| v_i = f_i(\mathsf{Pa}(v_i), u)} P(u), \text{ for } i=1,\dots, n.
\end{equation}
We can hence assign uncertainty over observed variables despite the the underlying dynamics being deterministic.

In order to formally define a counterfactual query, we must first define the interventional primitive known as the ``$do$-operator'' \cite{pearl2009causality}. Consider a SCM with functions $F$. The effect of intervention $do(X=x)$ in this model corresponds to creating a new SCM with functions $F_{X=x}$, formed by deleting from $F$ all functions $f_i$ corresponding to members of the set $X$ and replacing them with the set of constant functions $X=x$. That is, the $do$-operator forces variables to take certain values, regardless of the original causal mechanism. This represents the operation whereby an agent intervenes on a variable, fixing it to take a certain value. Probabilities involving the $do$-operator, such as $P(Y=y|do(X=x))$, correspond to evaluating ordinary probabilities in the SCM with functions $F_{X=x}$, in this case $P(Y=y)$. Where appropriate, we use the more compact notation of $Y_x$ to denote the variable $Y$ following the intervention do$(X = x)$.\\

Next we define noisy-OR models, a specific class of SCMs for Bernoulli variables that are widely employed as diagnostic models \cite{shwe1991probabilistic,miller2010history,yongli2006bayesian,nikovski2000constructing,heckerman1990tractable,liu2010passive,halpern2013unsupervised,perreault2016noisy,arora2017provable,abdollahi2016unification}. The noisy-OR assumption states that a variable $Y$ is the Boolean OR of its parents $X_1, X_2, \ldots, X_n$, where the inclusion or exclusion of each causal parent in the OR function is decided by an independent probability or `noise' term. The standard approach to defining noisy-OR is to present the conditional independence constraints generated by the noisy-OR assumption \cite{pearl1999probabilities}, 

\begin{equation}\label{noisy-OR def eq}
    P(Y =0\, | \, X_1, \ldots, X_n) = \prod\limits_{i=1}^n P(Y =0\, | \, \text{only}(X_i = 1))
\end{equation}

where $P(Y =0\, | \, \text{only}(X_i = 1))$ is the probability that $Y=0$ conditioned on all of its (endogenous) parents being `off' ($X_j = 0$) except for $X_i$ alone. We denote $P(Y =0\, | \, \text{only}(X_i = 1))=\lambda_{X_i, Y}$ by convention.

The utility of this assumption is that it reduces the number of parameters needed to specify a noisy-OR network to $\mathcal O (N)$ where $N$ is the number of directed edges in the network. All that is needed to specify a noisy-OR network are the single variable marginals $P(X_i = 1)$ and, for each directed edge $X_i \rightarrow Y_j$, a single $\lambda_{X_i, Y_j}$. For this reason, noisy-OR has been a standard assumption in Bayesian diagnostic networks, which are typically large and densely connected and so could not be efficiently learned and stored without additional assumptions on the conditional probabilities. We now define the noisy-OR assumption for SCMs.

\begin{define}[noisy-OR SCM]\label{def noisy scm}
A noisy-OR network is an SCM of Bernoulli variables, where for any variable $Y$ with parents $\mathsf{Pa}(Y) = \{X_1, \ldots , X_N \}$ the following conditions hold

\begin{enumerate}
    \item $Y$ is the Boolean OR of its parents, where for each parent $X_i$ there is a Bernoulli variable $U_i$ whose state determines if we include that parent in the OR function or not
    
    \begin{equation}\label{eq: noisy-OR def}
    y = \bigvee\limits_{i=1}^N \left(x_i \wedge \bar u_i \right) 
    \end{equation}
    i.e. $Y = 1$ if any parent is on, $x_i = 1$, and is not ignored, $u_i = 0$ ($\bar u_i = 1$ where `bar' denotes the negation of $u_i$).
    \item The exogenous latent encodes the likelihood of ignoring the state of each parent in (1), $P(u_Y) = P(u_1,u_2, \ldots, u_N)$. The probability of ignoring the state of a given parent variable is independent of whether you have or have not ignored any of the other parents, $$P(u_1, u_2, \ldots , u_N) = \prod\limits_{i=1}^N P(u_i)$$
    \item For every node $Y$ there is a parent `leak node' $L_Y$ that is singly connected to $Y$ and is always `on', with a probability of ignoring given by $\lambda_{L_Y}$
\end{enumerate}
\end{define} 

The leak node (assumption 3) represents the probability that $Y = 1$, even if $X_i = 0$ $\forall$ $X_i \in \mathsf{Pa}(Y)$. This allows $Y = 1$ to be caused by an exogenous factor (outside of our model). For example, the leak nodes allow us to model the situation that a disease spontaneously occurs, even if all risk factors that we model are absent, or that a symptom occurs but none of the diseases that we model have caused it. It is conventional to treat the leak node associated with a variable $Y$ as a parent node $L_Y$ with $P(L_Y = 1)$. Every variable in the noisy-OR SCM has a single, independent leak node parent. 

Given Definition \ref{def noisy scm}, why is the noisy-or assuption justified for modelling diseases? First, consider the assumption (1), that the generative function is a Boolean OR of the individual parent `activation functions' $x_i\cap \bar u_i$. This is equivalent to assuming that the activations from diseases or risk-factors to their children never `destructively interfere'. That is, if $D_i$ is activating symptom $S$, and so is $D_j$, then this joint activation never cancels out to yield $S = F$. As a consequence, all that is required for a symptom to be present is that at least one disease to be causing it, and likewise for diseases being caused by risk factors. This property of noisy-OR, whereby an individual cause is also a sufficient cause, is a natural assumption for diseases modelling -- where diseases are (typically by definition) sufficient causes of their symptoms, and risk factors are defined such that they are sufficient causes of diseases. For example, if preconditions $R_1 = 1$ and $R_2 = 1$ are needed to cause $D = 1$, then we can represent this as a single risk factor $R = R_1 \wedge R_2$. Assumption 2 states that a given disease (risk factor) has a fixed likelihood of activating a symptom (disease), independent of the presence or absence of any other disease (risk factor). In the noisy-or model, the likelihood that we ignore the state of a parent $X_i$ of variable $Y_i$ is given by 

\begin{equation}
    P(u_i = 1) = \frac{P(Y_i = 0 |\text{ do}(X_i = 1))}{P(Y_i = 0 |\text{ do}(X_i = 0))}
\end{equation}

and so is directly associated with a (causal) relative risk. In the case that child $Y$ has two parents, $X_1$ and $X_2$, noisy-OR assumes that this joint relative risk factorises as  

\begin{align}
    P(u_1=1, u_2=1) &= \frac{P(Y = 0 |\text{ do}(X_1 = 1, X_2 = 1))}{P(Y = 0 |\text{ do}(X_1 = 0, X_2 = 0))} = \frac{P(Y = 0 |\text{ do}(X_1 = 1))}{P(Y = 0 |\text{ do}(X_1 = 0))}\times \frac{P(Y = 0 |\text{ do}(X_2 = 1))}{P(Y = 0 |\text{ do}(X_2 = 0))}\\
    &= P(u_1=1)P(u_2=1)
\end{align}

Whilst it is likely that interactions between causal parents will mean that these relative risks are not always multiplicative, it is assumed to be a good approximation. For example, we assume that the likelihood that a disease fails to activate a symptoms is independent of whether or not any other disease similarly fails to activate that symptom.\\

As noisy-OR models are typically presented as Bayesian networks, the above definition of noisy-OR is non-standard. We now show that the SCM definition yields the Bayesian network definition, \eqref{noisy-OR def eq}. 

\begin{theorem}[noisy-OR CPT]
The conditional probability distribution of a child $Y$ given its parents $\{X_1, \ldots, X_N \}$ and obeying Definition \ref{def noisy scm} is given by 
\begin{equation}\label{def eq noisy or}
    P(Y = 0 \, | \, X_1 = x_1, \ldots, X_n = x_N) = \prod\limits_{i=1}^N \lambda_{X_i, Y}^{x_i}
\end{equation}

where 

\begin{equation}
    \lambda_{X_i, Y} = P(Y = 0 |\text{ only }(X_i = 1)) 
\end{equation}

\end{theorem}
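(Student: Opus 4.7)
The plan is to compute $P(Y = 0 \mid X_1 = x_1, \ldots, X_N = x_N)$ directly from the structural equation in Definition \ref{def noisy scm}, which expresses $Y$ deterministically in terms of its parents and the exogenous noise variables $\{u_1, \ldots, u_N\}$, and then to identify the resulting factors with the parameters $\lambda_{X_i, Y}$ by specialising to the one-parent-on configuration.

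First, I would apply the fundamental SCM formula \eqref{fundamental scm equation} to rewrite the conditional as a sum over noise configurations that make the generative function \eqref{eq: noisy-OR def} evaluate to $0$. Concretely, $Y = 0$ holds iff every Boolean term $x_i \wedge \bar u_i$ vanishes, i.e.\ for every $i$ either $x_i = 0$ or $u_i = 1$. For indices with $x_i = 0$ this is satisfied automatically regardless of $u_i$; for indices with $x_i = 1$ it forces $u_i = 1$. Invoking the mutual independence of the noise variables from clause (2) of the definition, the probability of this joint noise event factorises as $\prod_{i : x_i = 1} P(u_i = 1)$, with no constraint contributed by indices where $x_i = 0$.

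Next, to pin down the factors, I would specialise the just-derived expression to the ``only $X_i = 1$'' event, namely $X_i = 1$ and $X_j = 0$ for $j \neq i$. The same factorisation then collapses to $P(Y = 0 \mid \text{only}(X_i = 1)) = P(u_i = 1)$, so by the definition $\lambda_{X_i, Y} := P(Y = 0 \mid \text{only}(X_i = 1))$ in the statement of the theorem we obtain $P(u_i = 1) = \lambda_{X_i, Y}$. Substituting back and writing the product over all parents using the indicator exponent $x_i \in \{0, 1\}$ (so the factor is $\lambda_{X_i, Y}$ when $x_i = 1$ and $1$ when $x_i = 0$) yields the claimed identity $P(Y = 0 \mid x_1, \ldots, x_N) = \prod_{i=1}^{N} \lambda_{X_i, Y}^{x_i}$.

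The argument is essentially a direct unpacking of the SCM semantics, with no deeper probabilistic machinery needed, so the only real obstacle is bookkeeping: I must make sure the always-on leak parent $L_Y$ from clause (3) is treated as one of the $X_i$ (with $x_{L_Y} = 1$ fixed), so that its factor $\lambda_{L_Y}$ is absorbed into the product rather than appearing as a separate correction. Once that is handled, each step is a routine consequence of the Boolean structural equation together with the noise-independence assumption.
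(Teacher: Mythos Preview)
Your proposal is correct and follows essentially the same route as the paper: both arguments characterise the noise configurations compatible with $Y=0$ (the paper via De Morgan to obtain $\bar y = \bigwedge_i(\bar x_i \vee u_i)$ and then explicit summation, you via the equivalent logical observation that $x_i=1$ forces $u_i=1$), invoke independence of the $u_i$ to factorise, and identify $P(u_i=1)$ with $\lambda_{X_i,Y}$. Your explicit verification of $\lambda_{X_i,Y}=P(u_i=1)$ by specialising to the one-parent-on configuration, and your remark about absorbing the leak node as an always-on parent, are if anything slightly more careful than the paper, which simply denotes $\lambda_{X_i,Y}:=P(u_i)$ after the computation and handles the leak separately.
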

\begin{proof}
For $Y=0$, the negation of $y$, denoted $\bar y$, is given by 

\begin{equation}
    \bar y = \neg \left( \bigvee\limits_{i=1}^N \left(x_i \wedge \bar u_i \right) \right) = \bigwedge\limits_{i=1}^N \left( \bar x_i \vee u_i\right)
\end{equation}

The CPT is calculated from the structural equations by marginalizing over the latents, i.e. we sum over all latent states that yield $Y = 0$. Equivalently, we can marginalize over all exogenous latent states multiplied by the above Boolean function, which is 1 if the condition $Y=0$ is met, and 0 otherwise. 

\begin{align*}
     P(Y = 0 \, | \, X_1 = x_1, \ldots, X_n = x_n) &=\sum\limits_{ u_1}\ldots \sum\limits_{ u_N}\bigwedge\limits_{i=1}^N \left( \bar x_i \vee u_i\right) P(u_Y)\\
    &= \sum\limits_{ u_1}\ldots \sum\limits_{ u_N}\prod\limits_{X_i}\left( \bar x_i \vee u_i\right)\prod\limits_{U_i}P(u_i) \\
    &= \prod\limits_{X_i}\sum_{U_i = u_i}P(u_i)\left( \bar x_i \vee u_i\right)\\
    &= \prod\limits_{X_i}\left[P(u_i = 1) + P(u_i = 0)\bar x_i\right]\\
    &= \prod\limits_{X_i}\left[\lambda_{X_i, Y} + (1-\lambda_{X_i, Y})\bar x_i\right]\\
    &= \prod\limits_{X_i}\lambda_{X_i, Y}^{x_i}\numberthis 
\end{align*}
This is identical to the noisy-OR CPT \eqref{noisy-OR def eq}
\end{proof}

where we denote $\lambda_{X_i, Y} = P(u_i)$. The leak node is included as a parent $X_L$ where $P(X_L = 1) = 1$, and a (typically large) probability of being ignored $\lambda_L$. This node represents the likelihood that $Y$ will be activated by some causal influence outside of the model, and is included to ensure that $P(Y = 1 |\wedge_{i=1}^n (X_i = 0))\neq 0$. As the leak node is always on, its notation can be suppressed and it is standard notation to write the CPT as

\begin{equation}
     P(Y = 0 \, | \, X_1 = x_1, \ldots, X_n = x_n) = \lambda_L\prod\limits_{X_i}\lambda_{X_i, Y}^{x_i}\label{noisy leaky or}
\end{equation}

\section{Twin diagnostic networks}\label{appendix: twin networks}
In this appendix we derive the structure of diagnostic twin networks. First we provide a brief overview to the twin-networks approach to counterfactual inference. See \cite{balke1994counterfactual} and \cite{shpitser2007counterfactuals} for more details on this formalism. First, recalling the definition of the \emph{do} operator from the previous section, we define counterfactuals as follows. 

\begin{define}[Counterfactual]
Let $X$ and $Y$ be two subsets of variables in $V$. The counterfactual sentence ``$Y$ would be $y$ (in situation $U$), had $X$ been $x$,'' is the solution $Y=y$ of the set of equations $F_x$, succinctly denoted $Y_x(U)=y$.
\end{define}
As with observed variables in Definition~\ref{functional causal model}, the latent distribution $P(U)$ allows one to define the probabilities of counterfactual statements in the same manner they are defined for standard probabilities \eqref{fundamental scm equation}. 
\begin{equation}\label{basic counterfactual eq}
  P(Y_x=y) = \sum_{u|Y_x(u)=y} P(u).  
\end{equation}
Reference \cite{pearl2009causality} provides an algorithmic procedure for computing arbitrary counterfactual probabilities for a given SCM. First, the distribution over latents is updated to account for the observed evidence. Second, the $do$-operator is applied, representing the counterfactual intervention. Third, the new causal model created by the application of the $do$-operator in the previous step is combined with the updated latent distribution to compute the counterfactual query. In general, denote $\mathcal E$ as the set of factual evidence. The above can be summarised as,

\begin{enumerate}
    \item (abduction). The distribution of the exogenous latent variables $P(u)$ is updated to obtain $P(u\, |\,  \mathcal E)$
    \item (action). Apply the do-operation to the variables in set $X$, replacing the equations $X_i = f_i (\text{Pa}(x_i), u_i)$ with $X_i = x_i$ $\forall$ $X_i \in X$. 
    \item (prediction). Use the modified model to compute the probability of $Y = y$.
\end{enumerate}

The issue with applying this approach to our large diagnostic models is that the first step, updating the exogenous latents, is in general intractable for models with large tree-width. The twin-networks formalism, introduced in \cite{balke1994counterfactual}, is a method which reduces and amortises the cost of this procedure. Rather than explicitly updating the exogenous latents, performing an intervention, and performing belief propagation on the resulting SCM, twin networks allow us to calculate the counterfactual by performing belief propagation on a single `twin' SCM -- without requiring the expensive abduction step. The twin network is constructed as a composite of two copies of the original SCM where copied variables share their corresponding latents \cite{balke1994counterfactual}. We refer to pairs of copied variables as `dual variables'. Nodes on this twin network can then be merged following simple rules outlined in \cite{shpitser2007counterfactuals}, further reducing the complexity of computing the counterfactual query. We now outline the process of constructing the twin diagnostic network in the case of the two counterfactual queries we are interested in -- those with single counterfactual interventions, and those where all counterfactual variables bar one are intervened on.

\[
\includegraphics[scale=0.75]{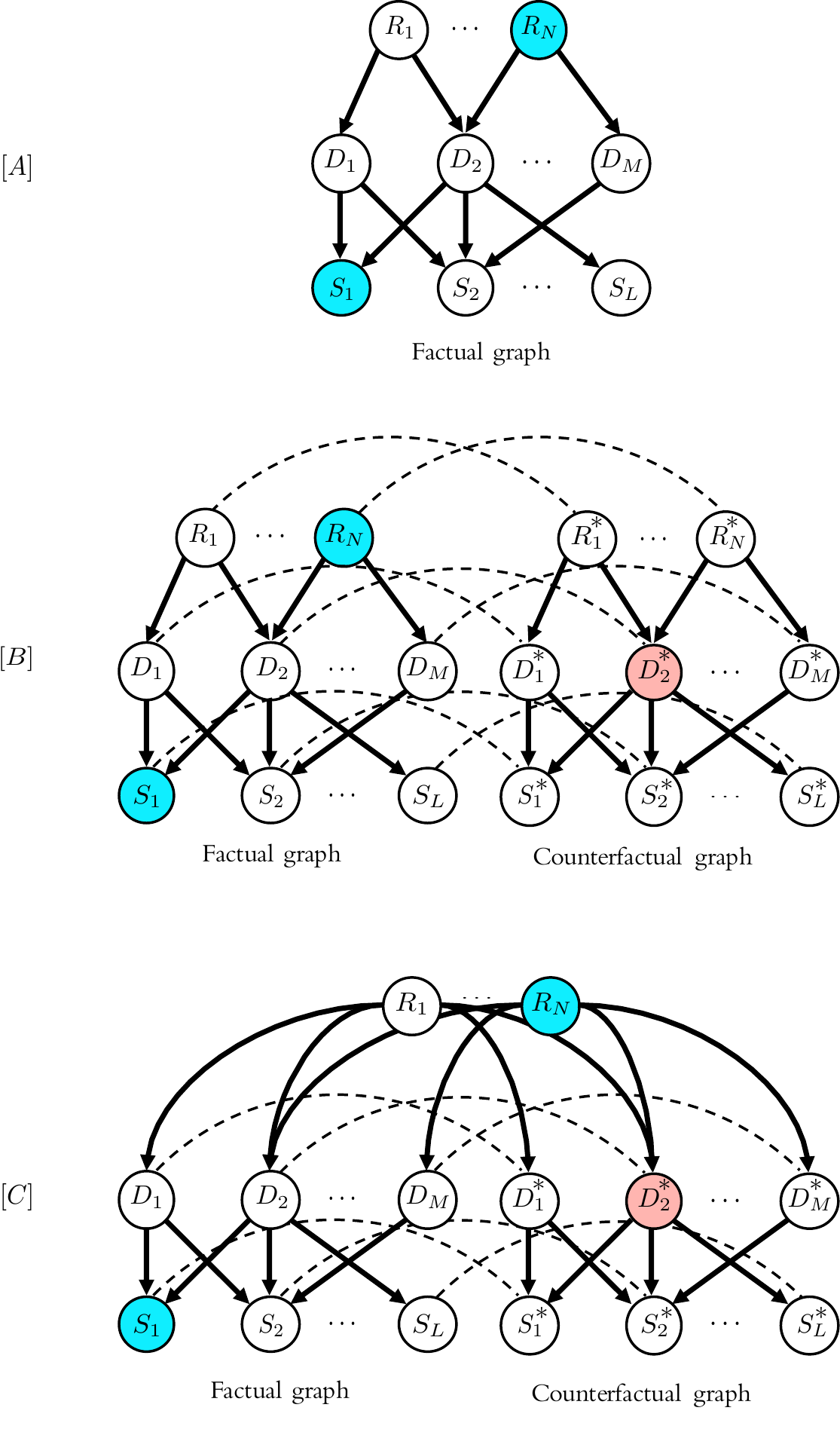}
\]

We assume the DAG structure of our diagnostic model is a three layer network [A]. The top layer nodes represent risk factors, the second layer represent diseases, and the third layer symptoms. We assume no directed edges between nodes belonging to the same layer. To construct the twin network, first the SCM in [A] is copied. In [B] the network on the left will encode the factual evidence in our counterfactual query, and we refer to this as the factual graph. The network on the right in [B] will encode our counterfactual interventions and observations, and we refer to this as the counterfactual graph. We use an asterisk $X^*$ to denote the counterfactual dual variable of $X$.


As detailed in \cite{balke1994counterfactual}, the twin network is constructed such that each node on the factual graph shares its exogenous latent with its dual node, so $u_{X_i}^* = u_{X_i}$. These shared exogenous latents are shown as dashed lines in figures [B-E]. First, we consider the case where we perform a counterfactual intervention on a single disease. As shown in [B], we select a disease node in the counterfactual graph to perform our intervention on (in this instance $D_2^*$). In Figure [C], blue circles represent observations and red circles represent interventions. The do-operation severs any directed edges going into $D^*$ and fixes $D^*=0$, as shown in [D] below. 

Once the counterfactual intervention has been applied, it is possible to greatly simplify the twin network graph structure via node merging \cite{shpitser2007counterfactuals}. In SCM's a variable takes a fixed deterministic value given an instantation of all of its parents and its exogenous latent. Hence, if two nodes have identical exogenous latents and parents, they are copies and can be merged into a single node. By convention, when we merge these identical dual nodes we map $X^*\mapsto X$ (dropping the asterisk). Dual nodes which share no ancestors that have been intervened upon can therefore be merged. As we do not perform interventions on the risk factor nodes, all $(R_i, R_i^*)$ are merged (note that for the sake of clarity we do not depict the exogenous latents for risk factors).

\[
\includegraphics[scale=0.7]{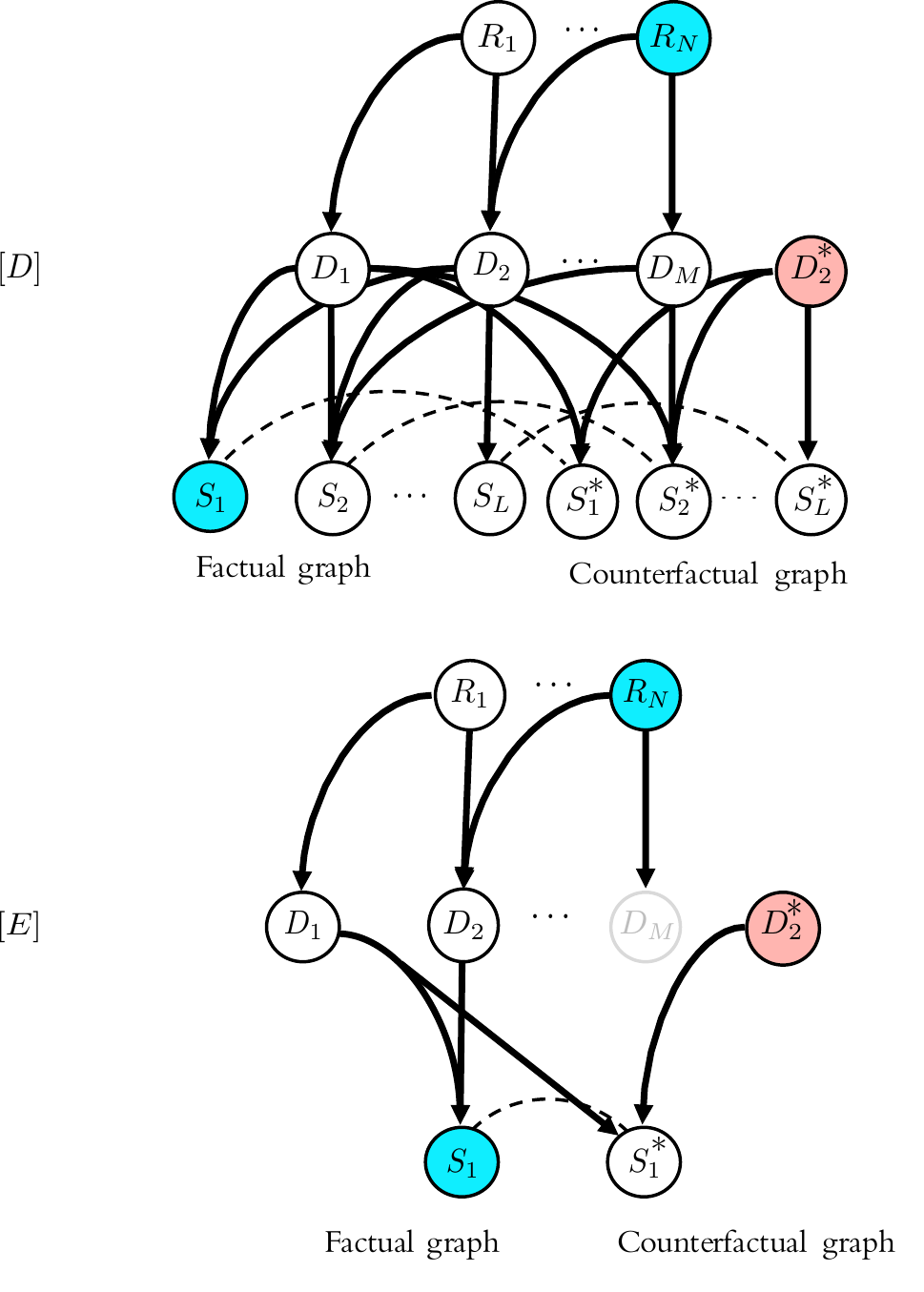}
\]

Next, we merge all dual factual/counterfactual disease nodes that are not intervened on, as their latents and parents are identical (shown in [D]). Finally, any symptoms that are not children of the disease we have intervened on ($D_2$) can be merged, as all of their parent variables are identical. The resulting twin network is shown in [E]. Note that we have also removed any superfluous symptom nodes that are unevidenced, as they are irrelevant for the query. \\ 

In the case that we intervene on all of the counterfactual diseases except one, following the node merging rule outlined above, we arrive at a model with a single disease that is a parent of both factual and counterfactual symptoms, as shown in Figure [F].

\[
\includegraphics[scale = 0.7]{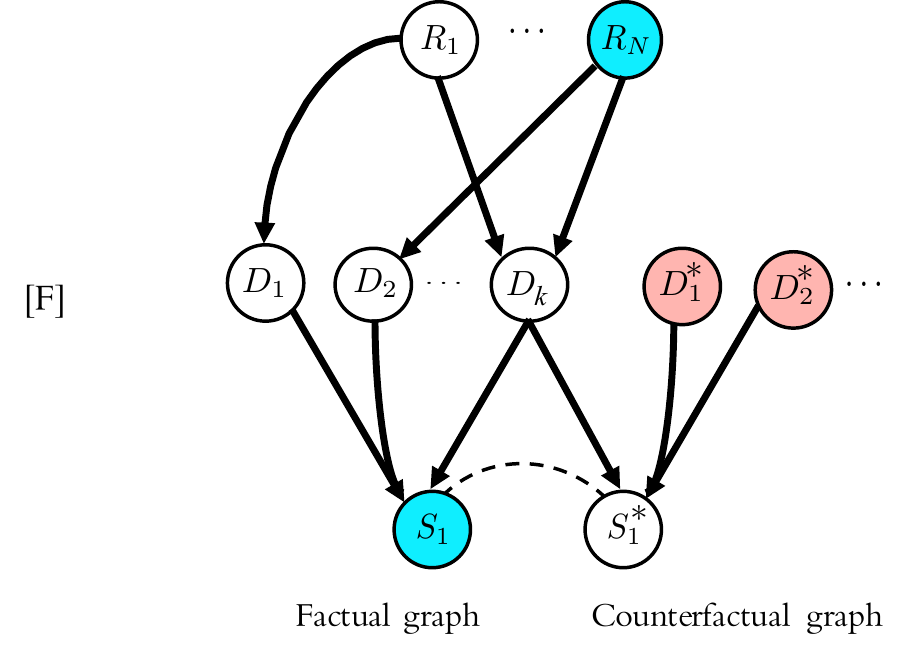}
\]

We refer to the SCMs shown in figures [E] and [F] as `twin diagnostic networks'. The counterfactual queries we are interested in can be determined by applying standard inference techniques such as importance sampling to these models \cite{perov2019multiverse}. 


\section{expected sufficiency}\label{appendix: exp suff}

In this appendix we derive a simple closed form expression for our proposed diagnostic measure, the \emph{expected sufficiency}, which corresponds to the case where we perform counterfactual interventions on all diseases bar one ($D_k$, model shown in Figure [F]). We derive our expressions for three layer noisy-OR SCM's. Before proceeding, we motivate our choice of counterfactual query for the task of diagnosis.   \\

An observation will often have multiple possible causes, which constitute competing explanations. For example, the observation of a symptom $S = 1$ can in principle be explained by any of its parent diseases. In the case that a symptom has multiple associated causes (diseases), rarely is a single disease \emph{necessary} to explain a given symptom. Equivalently, the symptoms associated with a disease tend to be present in patient's suffering from this diseases, without \emph{requiring} a secondary disease to be present. This can be summarised by the following assumption -- \emph{any single disease is a sufficient cause of any of its associated symptoms}. Under this assumption, determining the likelihood that a diseases is causing a symptom reduces to simple deduction -- removing all other possible causes and seeing if the symptom remains. \\

The question of how we can define and quantify causal explanations in general models is an area of active research \cite{halpern2016actual,halpern2005causes,halpern2000axiomatizing,eiter2002complexity} and the approach we propose here cannot be applied to all conceivable SCMs. For example, if you had a symptom that can be present \emph{only if} two parents diseases $D_1$ and $D_2$ are both present, then neither of these parents in isolation is a sufficient cause (individually, $D_1=1$ and $D_2=1$ are necessary but not sufficient to cause $S=1$). In Appendix A.4 we present a different counterfactual query that captures causality in this case by reasoning about necessary treatments. However, in the case that our symptoms obey noisy-or statistics, all diseases are \emph{individually sufficient} to generate any symptom. This is ensured by the OR function, which states that a symptom $S$ is the Boolean OR of its parents individual activation functions, $s = \bigvee_{i=1}^N [d_i \wedge \bar u_{D_i, S}]$ where the activation function from parent $D_i$ is $f_i = d_i \wedge \bar u_{D_i, S}$. Thus, any single activation is sufficient to explain $S = 1$ and we can quantify the expected sufficiency of a diseases individually. An example of a model that would violate this property is a noisy-AND model, where $s = \bigwedge_{i=1}^N [d_i \wedge \bar u_{D_i, S}]$ - e.g. all parent diseases must be present in order for the symptom to be present.

Given these properties of noisy-OR models (as disease models in general), we propose our measure for quantifying how well a disease explains the patient's symptoms -- the \emph{expected sufficiency}. For a given disease, this measures the number of symptoms that we would expect to remain if we intervened to nullify all other possible causes of symptoms. This counterfactual intervention is represented by the causal model shown in figure [F] in appendix A.2.\\

\begin{repdefine}{def expected sufficiency}
The expected sufficiency of disease $D_k$ determines the number of positively evidenced symptoms we would expect to persist if we intervene to switch off all other possible causes of the symptoms, 


\begin{equation}\label{expected sufficiency}
    \mathbb E_\text{suff}(D_k, \mathcal E) := \sum\limits_{\mathcal S'}\!\left|\mathcal S_+' \right|P(\mathcal S' | \mathcal E, \text{do}(\mathsf{Pa}(\mathcal S_+) \setminus D_k = 0))
\end{equation}

\noindent where the expectation is calculated over all possible counterfactual symptom evidence states $\mathcal S'$ and $\mathcal S_+'$ denotes the positively evidenced symptoms in the counterfactual symptom evidence state. $\mathsf{Pa}(\mathcal S_+'\setminus D_k)$ denotes the set of all parents of the set of counterfactual positively evidenced symptoms $\mathcal S_+'$ excluding $D_k$, and $\text{do}(\mathsf{Pa}(\mathcal S_+) \setminus D_k = 0)$ denotes the counterfactual intervention setting $\mathsf{Pa}(\mathcal S_+'\setminus D_k)\rightarrow 0$. $\mathcal E$ denotes the set of all factual evidence. 

\end{repdefine}

To evaluate the expected sufficiency we must first determine the dual symptom CPTs in the corresponding twin network (figure [F]).

\begin{lemma}\label{simplified conditionals lemma 1}
For a given symptom $S$ and its counterfactual dual $S^*$, with parent diseases $\mathcal D$ and under the counterfactual interventions $\text{do}( \mathcal D \setminus D_k^*=0)$ and $\text{do}(\mathcal U^*_L = 0)$, the joint conditional distribution is given by 
\[
    P(s, s^* |\wedge_{i=1}^N d_i, \text{do}(\wedge_{i\neq k}D_i^* = 0), \text{do}(u_L^* = 0)) = \begin{cases}
    P(s = 0 | \wedge_{i=1}^N d_i), \quad s = s^* = 0\\
    0 , \quad s = 0, s^* = 1 \\
    \lambda_{D_k, s}^{d_k} P(s^{\setminus k} = 1 | \wedge_{i\neq k}d_i, D_k = 1), \quad s= 1, s^* = 0\\
    (1-\lambda_{D_k, S})\delta (d_k - 1) ,\quad  s = 1, s^* = 1
    \end{cases}
\]
where $\delta (d_k - 1) = 1$ if $D_k = 1$ else 0, and $\underline d$ is an instantiation of all $D_i\in \mathsf{Pa}(S)$, $\wedge_{i\neq k}D_i^*$ is the set of all counterfactual disease nodes excluding $D_k$, $\wedge_{i\neq k}d_i$ is the given instantiation on all disease nodes exlcuding $D_k$, and $u^*_L$ denotes the leak node for the counterfactual symptom. $s^{\setminus k}$ denotes the state of the factual symptom node $S$ under the graph surgery removing any direct edge from $D_k$ to $S$.

\end{lemma}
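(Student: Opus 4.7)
The plan is to work directly from the structural equations in the twin network and exploit the sharing of exogenous noise between $S$ and $S^*$ together with the post-intervention simplifications. Writing $u_i := u_{D_i,S}$ for the noise on the edge $D_i \to S$ and $u_L$ for the leak noise, the factual symptom is
\begin{equation*}
s \;=\; \bigvee_{i=1}^N (d_i \wedge \bar u_i) \;\vee\; (L \wedge \bar u_L),
\end{equation*}
and, after applying $\mathrm{do}(D_i^*{=}0)$ for $i\neq k$, $\mathrm{do}(u_L^*{=}0)$ (which removes the counterfactual leak's contribution), and using $D_k^* = D_k$ from node merging, the counterfactual symptom collapses to
\begin{equation*}
s^* \;=\; d_k \wedge \bar u_k.
\end{equation*}
Crucially, the same $u_k$ appears in both formulas; this is the entire mechanism by which the factual and counterfactual worlds become correlated.

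From this the four cases essentially read off themselves. First I would observe the deterministic implication: whenever $s=0$, the conjunct $d_k \wedge \bar u_k$ in the factual expression must itself vanish, which means $s^*=0$ as well. This immediately gives case~1 (since then the joint reduces to the factual marginal $P(s{=}0\mid \wedge_i d_i)$) and case~2 (the event $s=0,s^*=1$ is impossible). For case~4, $s^*=1$ forces $d_k=1$ and $u_k=0$; but under these conditions the $D_k$ term already activates the factual symptom, so $s=1$ automatically, and the probability of the event is just $P(u_k{=}0)\,\delta(d_k{-}1) = (1-\lambda_{D_k,S})\,\delta(d_k{-}1)$.

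Case~3 is the only one that requires genuine marginalisation. I would split on $d_k$. If $d_k=0$, then $s^*=0$ holds with probability one and $s=1$ reduces to the event that the remaining activations fire, which is exactly the probability of $s^{\setminus k}=1$ under the graph with the $D_k\to S$ edge removed; since severing this edge makes the outcome independent of $D_k$, this equals $P(s^{\setminus k}{=}1 \mid \wedge_{i\neq k}d_i, D_k{=}1)$, matching the formula with $\lambda_{D_k,S}^{0}=1$. If $d_k=1$, then $s^*=0$ forces $u_k=1$ (contributing a factor $\lambda_{D_k,S}$), and conditional on $u_k=1$ the $D_k$ branch is inactive so $s=1$ iff the remaining parents or the leak activate $S$, which again is $P(s^{\setminus k}{=}1 \mid \wedge_{i\neq k}d_i, D_k{=}1)$; multiplying by $\lambda_{D_k,S}$ recovers the stated expression $\lambda_{D_k,S}^{d_k}\,P(s^{\setminus k}{=}1 \mid \wedge_{i\neq k}d_i, D_k{=}1)$.

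The only non-routine step is justifying that conditioning on $u_k=1$ in the factual SCM is equivalent to marginalising over $u_k$ in the graph-surgery model $s^{\setminus k}$; this uses the independence of $u_k$ from all other $u_j$ and $u_L$ in the noisy-OR SCM (Definition~\ref{def noisy scm}, assumption~2). I expect this bookkeeping around the shared noise $u_k$, plus verifying that the intervention on the counterfactual leak truly eliminates its contribution, to be the main conceptual obstacle; everything else is a direct evaluation of the Boolean formulas on the four possible values of $(s,s^*)$.
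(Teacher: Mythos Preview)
Your proposal is correct and follows essentially the same route as the paper: both reduce $s^*$ to $d_k \wedge \bar u_k$ after the interventions and then marginalize over the shared noisy-OR latents $u_1,\ldots,u_N,u_L$. Your organization is slightly more streamlined---you exploit the Boolean implication $s^* \Rightarrow s$ (the $D_k$ disjunct in $s^*$ is a sub-term of the factual OR) to dispatch cases $1$, $2$, and $4$ without any summation---whereas the paper writes out the product indicator $g_{s,s^*}(\underline u,\underline d,u_L)=P(s\mid\cdot)\,P(s^*\mid\cdot)$ explicitly for all four cells and then marginalizes mechanically; but the underlying argument and the key step (independence of $u_k$ from the other latents to identify the $s^{\setminus k}$ factor in case~3) are identical.
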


\begin{proof}
The CPT for the dual symptom nodes $S, S^*$ is given by
\begin{align*}\label{twin factorization}
    &P(s, s^* |\underline d, \text{do}(\wedge_{i\neq k}D_i^* = 0), \text{do}(u_L^* = 0)) =\\
    &\sum\limits_{u_{D_1, S}}P(u_{D_1, S})\cdots \sum\limits_{u_{D_N, S}}P(u_{D_N, S}) \sum\limits_{u_L}P(u_L)P(s | d_k, \wedge_{i\neq k}d_i, u_L)P(s^* | d_k,  \text{do}(\wedge_{i\neq k}D_i^* = 0), \text{do}(u_L^* = 0))\numberthis
\end{align*}

Where we have use the fact that the latent variables and the disease variables together form a Markov blanket for $S, S^*$, and we have used the conditional independence structure of the twin network, shown in Figure [F], which implies that $S$ and $S^*$ only share a single variable, $D_k$, in their Markov blankets. With the full Markov blanket specified, including the exogenous latents, the CPTs in \eqref{twin factorization} are deterministic functions, each taking the value 1 if their conditional constraints are satisfied. Note that the product of these two functions is equivalent to a function that is 1 if both sets of conditional constraints are satisfied and zero otherwise, and marginalizing over all latent variable states multiplied by this function is equivalent to the definition of the CPT for SCMs given in equation \eqref{fundamental scm equation}, where the CPT is determined by a conditional sum over the exogenous latent variables. Given the definition of the noisy-OR SCM in \eqref{eq: noisy-OR def}, these functions take the form 

\begin{equation}
    P(s | d_k, \wedge_{i\neq k}d_i, u_L) = \begin{cases}
    \bar u_L \bigwedge_{i=1}^N [\bar d_i \vee u_{D_i, S}], \quad s = 0 \\
    1-\bar u_L \bigwedge_{i=1}^N [\bar d_i \vee u_{D_i, S}], \quad s = 1
    \end{cases}
\end{equation}
and 
\begin{equation}
    P(s^* | d_k,  \text{do}(\wedge_{i\neq k}D_i^* = 0), \text{do}(u_L^* = 0)) = 
    \begin{cases}
    \bar d_k \vee u_{D_k, S}, \quad s^* = 0 \\
    1-\bar d_k \vee u_{D_k, S}, \quad s^* = 1
    \end{cases}
\end{equation}

Taking the product of these functions gives the function $g_{s, s^*}(\underline u, \underline d, u_L) := P(s | d_k, \wedge_{i\neq k}d_i, u_L)\times  P(s^* | d_k,  \text{do}(\wedge_{i\neq k}D_i^* = 0), \text{do}(u_L^* = 0))$ where $\underline u$ denotes a given instantiation of the free latent variables $u_{D_1, S}, \ldots, u_{D_N, S}$. 

\begin{equation}
    g_{s, s^*}(\underline u, \underline d, u_L) = \begin{cases}
    \bar u_L\bigwedge\limits_{i=1}^N[\bar d_i\vee u_{D_i, S}], \quad s = s^* = 0\\
    0 , \quad s=0, s^* = 1\\
    [\bar d_k \vee u_{D_k, S}]\wedge [1- \bigwedge\limits_{i=1}^N [\bar d_i \vee u_{D_i, S}]], \quad s = 1, s^* = 0\\
    1- \bar d_k \vee u_{D_k, S}, \quad s = 1, s^* = 1
    \end{cases}
\end{equation}

\begin{align}
        P(s, s^* |\underline d, \text{do}(\wedge_{i\neq k}D_i^* = 0), \text{do}(u_L^* = 0)) &= \sum\limits_{u_{D_1, S}}P(u_{D_1, S})\cdots \sum\limits_{u_{D_N, S}}P(u_{D_N, S}) \sum\limits_{u_L}P(u_L)g_{s, s^*}(\underline u, \underline d, u_L)\\
        &= \begin{cases}
    \lambda_L \prod\limits_{i=1}^N \lambda_{D_i, S}^{d_i}, \quad s = s^* = 0\\
    0 , \quad s = 0, s^* = 1 \\
    \lambda_{D_k, s}^{d_k} - \lambda_L \prod\limits_{i=1}^N \lambda_{D_i, S}^{d_i}, \quad s= 1, s^* = 0\\
    (1-\lambda_{D_k, S})\delta (d_k - 1) ,\quad  s = 1, s^* = 1
    \end{cases}
\end{align}

where we have used $\sum_{u_{D_i, S}}P(u_{D_i, S})\bar d_i \vee u_{D_i, S} = P(u_{D_i, S}=1) + P(u_{D_i, S} = 0) \bar d_i$ $= P(u_{D_i, S}=1)^{d_i} = \lambda_{D_i, S}^{d_i}$, and $\sum_{u_{D_k, S}}P(u_{D_k, S})[1- \bar d_k \vee u_{D_k, S}] = (1- \lambda_{D_k, S})\delta (d_k - 1) $, where $\delta (d_k -1)$ is 1 iff $D_k = 1$ and 0 otherwise. $\lambda_L \prod\limits_{i=1}^N \lambda_{D_i, S}^{d_i}$ can immediately be identified as $P(s = 0 | \mathcal D)$ by \eqref{noisy leaky or}. $\lambda_{D_k, s}^{d_k} - \lambda_L \prod\limits_{i=1}^N \lambda_{D_i, S}^{d_i} = \lambda_{D_k, s}^{d_k}(1 - \lambda_L\prod\limits_{i\neq k} \lambda_{D_i, S}^{d_i})$, and we can identify $\lambda_L\prod\limits_{i\neq k} \lambda_{D_i, S}^{d_i} = P(s = 0 | \wedge_{i\neq k}d_i, d_k = 0)$. Therefore $\lambda_{D_k, s}^{d_k} - \lambda_L \prod\limits_{i=1}^N \lambda_{D_i, S}^{d_i} = \lambda_{D_k, s}^{d_k}P(s = 1 | \wedge_{i\neq k}d_i, d_k = 0)$. Finally, we can express this as $\lambda_{D_k, s}^{d_k}P(s^{\setminus k} = 1 | \wedge_{i\neq k}d_i, D_k = 1)$, where $s^{\setminus k}$ is the instantiation of $S^{\setminus k}$ -- which is the variable generated by removing any directed edge $D_k \rightarrow S$ (or equivalently, replacing $\lambda_{D_k, S}$ with $1$).

\end{proof}

\noindent Given our expression for the symptom CPT on the twin network, we now derive the expression for the expected sufficiency.

\begin{reptheorem}{expected sufficiency theorem}
For noisy-OR networks described in Appendix A.1-A.4, the expected sufficiency of disease $D_k$ is given by 
\[
    \mathbb E_\text{suff}(D_k,\mathcal E) = \frac{1}{P(\mathcal S_\pm |\mathcal R)}\sum\limits_{\mathcal S\subseteq \mathcal S_+}|\mathcal S_+\setminus \mathcal S|P(\mathcal S_- = 0, \mathcal S^{\setminus k} = 1, D_k = 1| \mathcal R)\prod\limits_{S\in \mathcal S_+\setminus \mathcal S}(1-\lambda_{D_k, S})\prod\limits_{S\in \mathcal S}\lambda_{D_k, S}
\]

where $\mathcal S_\pm$ denotes the positive and negative symptom evidence, $\mathcal R$ denotes the risk-factor evidence, and $\mathcal S^{\setminus k}$ denotes the set of symptoms $\mathcal S$ with all directed arrows from $D_k$ to $S\in \mathcal S$ removed.  

\end{reptheorem}

\begin{proof}
Starting from the definition of the expected sufficiency 

\begin{equation}\label{exp cas suf}
        \mathbb E_\text{suff}(D_k, \mathcal E) := \sum\limits_{\mathcal S'}\!\left|\mathcal S_+' \right|P(\mathcal S' | \mathcal E, \text{do}(\mathcal D \setminus D_k = 0), \text{do}(\mathcal U_L = 0))
\end{equation}

we must find expressions for all CPTs $P(\mathcal S' |\mathcal E, \text{do}(\mathcal D \setminus D_k = 0), \text{do}(\mathcal U_L = 0))$ where $|\mathcal S_+'|\neq 0$ (terms with $\mathcal S_+' = \emptyset$ do not contribute to \eqref{exp cas suf}). Let $\mathcal S^*_A = \{S^* \text{ s.t. } S\in \mathcal S_-, S^*\in \mathcal S_-'\}$ (symptoms that remain off following the counterfactual intervention), $\mathcal S^*_B = \{S^* \text{ s.t. } S\in \mathcal S_+, S^*\in \mathcal S_+'\}$ (symptoms that remain on following the counterfactual intervention), and $\mathcal S^*_C = \{S^* \text{ s.t. } S\in \mathcal S_+, S^*\in \mathcal S_-'\}$ (symptoms that are switched off by the counterfactual intervention). Lemma \ref{simplified conditionals lemma 1} implies that $P(S = 0, S^*=1 |\underline d, \text{do}(\wedge_{i\neq k}D_i^* = 0), \text{do}(u_L^* = 0)) = 0$, and therefore these three cases are sufficient to characterise all possible counterfactual symptom states $\mathcal S'$. Therefore, to evaluate \eqref{exp cas suf}, we need only determine expressions for the following terms

\begin{equation}
    P(S_A^* = 0, S_B^* = 1, S_C^* = 0|\mathcal S_\pm, \mathcal R, \text{do}(\wedge_{i\neq k}D_i^* = 0), \text{do}(\mathcal U_L^* = 0) )
\end{equation}

where $\mathcal U_L^*$ denotes the set of all counterfactual leak nodes for the symptoms $\mathcal S_A^*, \mathcal S_B^*, \mathcal S_C^*$. Note that we only perform counterfactual interventions, i.e. interventions on counterfactual variables. As the exogenous latents are shared by the factual and counterfactual graphs, $\mathcal U_L^* = U_L$, but we maintain the notation for clarity. First, note that 

\begin{align*}
    &P(S_A^* = 0, S_B^* = 1, S_C^* = 0|\mathcal S_\pm, \mathcal R, \text{do}(\wedge_{i\neq k}D_i^* = 0), \text{do}(\mathcal U_L^* = 0) ) \\
    &=\frac{P(S_A^* = 0, S_B^* = 1, S_C^* = 0, \mathcal S_\pm | \mathcal R, \text{do}(\wedge_{i\neq k}D_i^* = 0), \text{do}(\mathcal U_L^* = 0) )}{P(\mathcal S_\pm | \mathcal R, \text{do}(\wedge_{i\neq k}D_i^* = 0), \text{do}(\mathcal U_L^* = 0) )}\\
    &= \frac{P(S_A^* = 0, S_B^* = 1, S_C^* = 0, \mathcal S_\pm | \mathcal R, \text{do}(\wedge_{i\neq k}D_i^* = 0), \text{do}(\mathcal U_L^* = 0) )}{P(\mathcal S_\pm | \mathcal R)}
\end{align*}

Which follows from the fact that the factual symptoms $\mathcal S_\pm$ on the twin network [F] are conditionally independent from the counterfactual interventions $\text{do}(\wedge_{i\neq k}D_i^* = 0), \text{do}(\mathcal U_L^* = 0) )$. To determine $Q = P(S_A^* = 0, S_B^* = 1, S_C^* = 0, \mathcal S_\pm | \mathcal R, \text{do}(\wedge_{i\neq k}D_i^* = 0), \text{do}(\mathcal U_L^* = 0) )$, we express $Q$ as a marginalization over the factual diseases which, together with the interventions on the counterfactual diseases and leak nodes, constitute a Markov blanket for each dual pair of symptoms

\begin{align*}
    Q &= \sum\limits_{d_1, \ldots,  d_N}P(\wedge_{i\neq k}D_i = d_i, D_k = d_k|\mathcal R)\prod\limits_{S\in \mathcal S_A}P(S^* = 0, S = 0 |\wedge_{i\neq k}D_i = d_i, D_k = d_k,\text{do}(\wedge_{i\neq k}D_i^* = 0), \text{do}(\mathcal U_L^* = 0))\\
    &\times \prod\limits_{S\in \mathcal S_B}P(S^* = 1, S = 1 |\wedge_{i\neq k}D_i = d_i, D_k = d_k,\text{do}(\wedge_{i\neq k}D_i^* = 0), \text{do}(\mathcal U_L^* = 0))\\
    &\times \prod\limits_{S\in \mathcal S_C}P(S^* = 0, S = 1 |\wedge_{i\neq k}D_i = d_i, D_k = d_k,\text{do}(\wedge_{i\neq k}D_i^* = 0), \text{do}(\mathcal U_L^* = 0))
    \numberthis
\end{align*}

Substituting in the CPT derived in Lemma \ref{simplified conditionals lemma 1} yields 

\begin{align*}
    Q =  \sum\limits_{d_1, \ldots,  d_N}P(\wedge_{i\neq k}D_i = d_i, D_k = d_k|\mathcal R)&\prod\limits_{S\in \mathcal S_A}P(s = 0 |\wedge_{i=1}^N d_i)\prod\limits_{S\in \mathcal S_B}(1-\lambda_{D_k, S})\delta (d_k - 1)\\
    &\times\prod\limits_{S\in \mathcal S_C}\lambda_{D_k, s}^{d_k} P(s^{\setminus k} = 1 | \wedge_{i\neq k}d_i, D_k = 1)\numberthis
\end{align*}

The only terms in \eqref{exp cas suf} with $|\mathcal S_+'|\neq 0$ have $\mathcal S_B\neq \emptyset$, therefore the term $\delta(d_k - 1)$ is present, and $Q$ simplifies to 

\begin{align*}
    Q =  \sum\limits_{d_i \forall i\neq k}P(\wedge_{i\neq k}D_i = d_i, D_k = 1|\mathcal R)&\prod\limits_{S\in \mathcal S_A}P(s = 0 |\wedge_{i\neq k}^N d_i, D_k = 1)\prod\limits_{S\in \mathcal S_B}(1-\lambda_{D_k, S})\\
    &\times \prod\limits_{S\in \mathcal S_C}\lambda_{D_k, s} P(s^{\setminus k} = 1 | \wedge_{i\neq k}d_i, D_k = 1)\numberthis
\end{align*}
\begin{equation}
    = P(S_A = 0, S^{\setminus k}_C = 1,D_k = 1|\mathcal R)\prod\limits_{S\in S_B}(1-\lambda_{D_k, S})\prod\limits_{S\in S_C}\lambda_{D_k, S}
\end{equation}

\noindent where in the last line we have performed the marginalization over $d_i$ $\forall$ $i\neq k$. Finally, $\mathcal S_+' = \mathcal S^*_B = \mathcal S_+\setminus \mathcal S_C$, and so $|\mathcal S_+'| = |\mathcal S_+|-|\mathcal S_C|$, and the expected expected sufficiency is 

\begin{equation}
    \mathbb E_\text{suff} (D_k,\mathcal E)= \frac{1}{{P(\mathcal S_\pm |\mathcal R)}}\sum\limits_{\mathcal S\subseteq \mathcal S_+}\left(|\mathcal S_+|-|\mathcal S| \right)P(\mathcal S_- = 0, \mathcal S^{\setminus k} = 1, D_k = 1| \mathcal R)\prod\limits_{S\in \mathcal S_+\setminus \mathcal S}(1-\lambda_{D_k, S})\prod\limits_{S\in \mathcal S}\lambda_{D_k, S}
\end{equation}

\noindent where we have dropped the subscript $C$ from $\mathcal S_C$.

\end{proof}

\noindent Given our expression for the expected sufficiency, we now derive a simplified expression that is very similar to the posterior $P(D_k = 1 |\mathcal R, \mathcal S_\pm )$.

\begin{theorem}[Simplified expected sufficiency]\label{theorem deriv exp suff}
\begin{equation}
    \mathbb E_\text{suff}(D_k,\mathcal E) = \frac{1}{P(\mathcal S_\pm |\mathcal R)}\sum\limits_{\mathcal Z\subseteq \mathcal S_+}(-1)^{|\mathcal Z|}P(\mathcal S_- = 0, \mathcal Z = 0, D_k = 1|\mathcal R) \times \tau (k, \mathcal  Z)
\end{equation}
where
\begin{equation}
    \tau (k, \mathcal  Z) =\sum\limits_{S\in \mathcal S_+\setminus \mathcal Z}(1-\lambda_{D_k, S})
\end{equation}

\end{theorem}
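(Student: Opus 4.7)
The plan is to start from the closed form for $\mathbb E_\text{suff}(D_k,\mathcal E)$ derived in Theorem~\ref{expected sufficiency theorem} and transform it by rewriting the ``all on'' symptom event $\mathcal S^{\setminus k}=1$ as a signed sum of ``all off'' events using inclusion--exclusion, which is the form in which noisy-OR probabilities factorise cleanly. Applying the identity $\prod_i X_i = \sum_{\mathcal Z}(-1)^{|\mathcal Z|}\prod_{i\in \mathcal Z}(1-X_i)$ to the indicators $X_S = \mathbb{1}[S^{\setminus k}=1]$ gives
\[
P(\mathcal S_-=0,\mathcal S^{\setminus k}=1,D_k=1\mid \mathcal R) = \sum_{\mathcal Z' \subseteq \mathcal S}(-1)^{|\mathcal Z'|}P(\mathcal S_-=0,(\mathcal Z')^{\setminus k}=0,D_k=1\mid \mathcal R),
\]
so after substitution the outer sum over $\mathcal S\subseteq \mathcal S_+$ becomes a double sum with $\mathcal Z'\subseteq \mathcal S\subseteq \mathcal S_+$.

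Next, I would use the noisy-OR CPT \eqref{noisy leaky or} to trade each graph-surgery term $S^{\setminus k}$ for a multiplicative correction. For any disease instantiation $\underline d$ with $d_k=1$, deleting the edge $D_k\to S$ simply drops the factor $\lambda_{D_k,S}^{d_k}$ from the product, so $P(S^{\setminus k}=0\mid \underline d) = \lambda_{D_k,S}^{-1}P(S=0\mid \underline d)$. Since symptoms are conditionally independent given their disease parents, this extends to $P((\mathcal Z')^{\setminus k}=0\mid \underline d) = \prod_{S\in \mathcal Z'}\lambda_{D_k,S}^{-1}\cdot P(\mathcal Z'=0\mid \underline d)$, and because the correction does not depend on $\underline d$, it survives marginalisation over the remaining diseases, yielding
\[
P(\mathcal S_-=0,(\mathcal Z')^{\setminus k}=0,D_k=1\mid \mathcal R) = \prod_{S\in \mathcal Z'}\lambda_{D_k,S}^{-1}\cdot P(\mathcal S_-=0,\mathcal Z'=0,D_k=1\mid \mathcal R).
\]
Substituting back, these $\lambda^{-1}$ factors combine with the explicit $\prod_{S\in \mathcal S}\lambda_{D_k,S}$ to leave $\prod_{S\in \mathcal S\setminus \mathcal Z'}\lambda_{D_k,S}$, and I swap the order of summation to $\sum_{\mathcal Z'\subseteq \mathcal S_+}(-1)^{|\mathcal Z'|}P(\mathcal S_-=0,\mathcal Z'=0,D_k=1\mid \mathcal R)\cdot T(\mathcal Z')$ for an inner combinatorial factor $T(\mathcal Z')$.

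All that remains is to evaluate $T(\mathcal Z')$. Setting $\mathcal A=\mathcal S\setminus \mathcal Z'$ and $\mathcal B=\mathcal S_+\setminus \mathcal Z'$, it takes the form
\[
T(\mathcal Z') = \sum_{\mathcal A \subseteq \mathcal B}|\mathcal B\setminus \mathcal A|\prod_{S\in \mathcal B\setminus \mathcal A}(1-\lambda_{D_k,S})\prod_{S\in \mathcal A}\lambda_{D_k,S},
\]
which I recognise as the expected count under $|\mathcal B|$ independent Bernoulli$(1-\lambda_{D_k,S})$ trials. Algebraically, expanding $|\mathcal B\setminus \mathcal A|=\sum_{S_0\in \mathcal B\setminus \mathcal A}1$, swapping with the $\mathcal A$-sum, and pulling out the $S_0$ factor leaves $\prod_{S\in \mathcal B\setminus\{S_0\}}[(1-\lambda_{D_k,S})+\lambda_{D_k,S}]=1$, hence $T(\mathcal Z')=\sum_{S\in \mathcal S_+\setminus \mathcal Z'}(1-\lambda_{D_k,S})=\tau(k,\mathcal Z')$, which assembles into the claimed closed form. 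I expect the main obstacle to be the noisy-OR step: one must confirm that the per-instantiation correction $\prod_{S\in \mathcal Z'}\lambda_{D_k,S}^{-1}$ really can be pulled outside the marginalisation, which depends on both conditional independence of symptoms given diseases and the fact that $D_k$ is held fixed at $1$ so the correction is the same in every summand; the inclusion--exclusion and combinatorial collapse are then routine.
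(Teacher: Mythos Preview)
Your proposal is correct and follows essentially the same route as the paper's proof: both apply inclusion--exclusion to turn the ``on'' event $\mathcal S^{\setminus k}=1$ into a signed sum over ``off'' events, use the noisy-OR factorisation to replace the graph-surgery $(\mathcal Z')^{\setminus k}$ by $\mathcal Z'$ times $\prod_{S\in\mathcal Z'}\lambda_{D_k,S}^{-1}$, swap the order of summation, and collapse the resulting combinatorial coefficient to $\sum_{S\in\mathcal S_+\setminus\mathcal Z}(1-\lambda_{D_k,S})$. The only notable difference is in that final collapse: the paper establishes the identity $\sum_{\mathcal A\subseteq\mathcal B}|\mathcal A|\prod_{a\in\mathcal A}(1-a)\prod_{a'\in\mathcal B\setminus\mathcal A}a'=\sum_{a\in\mathcal B}(1-a)$ via a recursive build-up of $\mathcal B$, whereas your linearity-of-expectation argument (expand $|\mathcal B\setminus\mathcal A|$ as a sum of indicators, swap, and observe each remaining product telescopes to~$1$) is a bit more direct.
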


\begin{proof}
Starting with the expected sufficiency given in Theorem \ref{expected sufficiency theorem}, we can perform the change of variables $\mathcal X = \mathcal S_+\setminus \mathcal S$ to give

\begin{align}
    \mathbb E_\text{suff}(D_k,\mathcal E) &= \frac{1}{P(\mathcal S_\pm |\mathcal R)}\sum\limits_{\mathcal X\subseteq \mathcal S_+}|X|\prod\limits_{S\in \mathcal X}(1-\lambda_{D_k, S})\prod\limits_{S\in \mathcal S_+\setminus \mathcal X}\lambda_{D_k, S}\,P(\mathcal S_- = 0, (\mathcal S_+
    \setminus \mathcal X)^{\setminus k} = 1,D_k = 1| \mathcal R)\\
    &= \frac{1}{P(\mathcal S_\pm  |\mathcal R)}\sum\limits_{\mathcal X\subseteq \mathcal S_+}|\mathcal X|\prod\limits_{S\in \mathcal X}(1-\lambda_{D_k, S})\prod\limits_{S\in \mathcal S_+\setminus \mathcal X}\lambda_{D_k, S}\,\sum\limits_{\mathcal Z\subseteq \mathcal S_+\setminus \mathcal X}(-1)^{|\mathcal Z|}P(\mathcal S_- = 0, \mathcal Z^{\setminus k} = 0,D_k = 1| \mathcal R) 
\end{align}

where in the last line we apply the inclusion-exclusion principle to decompose an arbitrary joint state over Bernoulli variables $P(\mathcal A = 0, \mathcal B = 1)$ as a sum over the powerset of the variables $\mathcal B$ in terms of marginals where all variables are instantiated to 0,

\begin{equation}
    P(\mathcal A = 0, \mathcal B = 1) = \sum\limits_{\mathcal C \subseteq \mathcal B}(-1)^{|C|}P(\mathcal A = 0, \mathcal C = 0)
\end{equation}

By the definition of noisy-or \eqref{def eq noisy or} we have that 

\begin{align*}
    P(\mathcal S_- = 0, &\mathcal Z^{\setminus k} = 0,D_k = 1| \mathcal R)  \\
    &=\sum\limits_{d_i, i\neq k}P(\mathcal S_- = 0, \mathcal Z^{\setminus k} = 0,D_k = 1, \wedge_{i\neq k}^N D_i =  d_i| \mathcal R)\\
    &= \sum\limits_{d_i, i\neq k}\prod\limits_{S\in \mathcal S_-}P(S = 0| D_k = 1, \wedge_{i\neq k}^N D_i =  d_i)\prod\limits_{S\in \mathcal Z}P(S^{\setminus k}= 0|D_k = 1, \wedge_{i\neq k}^N D_i =  d_i) P(D_k = 1, \wedge_{i\neq k}^N D_i =  d_i| \mathcal R)\\
    &= \sum\limits_{d_i, i\neq k}\prod\limits_{S\in \mathcal S_-}P(S = 0| D_k = 1, \wedge_{i\neq k}^N D_i =  d_i)\prod\limits_{S\in \mathcal Z}\frac{P(S= 0|D_k = 1, \wedge_{i\neq k}^N D_i =  d_i)}{\lambda_{D_k, S}} P(D_k = 1, \wedge_{i\neq k}^N D_i =  d_i| \mathcal R)\\
    &= \frac{P(\mathcal S_- = 0, \mathcal Z = 0,D_k = 1| \mathcal R)}{\prod\limits_{S\in \mathcal Z}\lambda_{D_k, S}}\numberthis 
\end{align*}
Therefore we can replace the graph operation represented by $\setminus k$ by dividing the CPT by the product $\prod\limits_{S\in \mathcal Z}\lambda_{D_k, S}$. This allows $\mathbb E_\text{suff}$ to be expressed as 

\begin{equation}\label{first sum eq}
   \mathbb E_\text{suff}(D_k,\mathcal E) = \frac{1}{P(\mathcal S_\pm |\mathcal R)}\sum\limits_{\mathcal X\subseteq \mathcal S_+}|\mathcal X|\prod\limits_{S\in \mathcal X}(1-\lambda_{D_k, S})\prod\limits_{S\in \mathcal S_+\setminus \mathcal X}\lambda_{D_k, S}\,\sum\limits_{\mathcal Z\subseteq \mathcal S_+\setminus \mathcal X}(-1)^{|\mathcal Z|}P(\mathcal S_- = 0, \mathcal Z = 0,D_k = 1| \mathcal R) \frac{1}{\prod\limits_{S\in \mathcal Z}\lambda_{D_k, S}}
\end{equation}

We now aggregate the terms in the power sum that yield the same marginal on the symptoms (e.g. for fixed $\mathcal Z$). Every $\mathcal  X \in \mathcal  S_+\setminus \mathcal Z$ yields a single marginal $P(\mathcal S_- = 0, \mathcal Z = 0,D_k = 1| \mathcal R)$ and therefore if we express \eqref{first sum eq} as a sum in terms of $\mathcal Z$, where each term $P(\mathcal S_- = 0, \mathcal Z = 0,D_k = 1| \mathcal R)$ aggregates the a coefficient $K_{\mathcal Z}$ of the form $\mathbb E_\text{suff}(D_k,\mathcal E) = \sum_{\mathcal Z\subseteq \mathcal S_+} K_{\mathcal Z} P(\mathcal S_- = 0, \mathcal Z = 0,D_k = 1| \mathcal R)$ where

\begin{align*}
    K_{\mathcal Z} &= \frac{(-1)^{|\mathcal Z|}}{P(\mathcal S_\pm |\mathcal R)}\frac{1}{\prod\limits_{S\in \mathcal  Z}\lambda_{D_k, S}}\sum\limits_{\mathcal X\subseteq \mathcal S_+\setminus \mathcal Z}|\mathcal X|\prod\limits_{S\in \mathcal X}(1-\lambda_{D_k, S})\prod\limits_{S\in \mathcal S_+\setminus \mathcal X}\lambda_{D_k, S}\\
    &= \frac{(-1)^{|\mathcal Z|}}{P(\mathcal S_\pm |\mathcal R)}\frac{1}{\prod\limits_{S\in \mathcal  Z}\lambda_{D_k, S}}\sum\limits_{\mathcal X\subseteq \mathcal A}|\mathcal X|\prod\limits_{S\in \mathcal X}(1-\lambda_{D_k, S})\prod\limits_{S\in \mathcal A \setminus \mathcal X}\lambda_{D_k, S}\prod\limits_{S\in \mathcal Z}\lambda_{D_k, S}\\
    &= \frac{(-1)^{|\mathcal Z|}}{P(\mathcal S_\pm |\mathcal R)}\sum\limits_{\mathcal X\subseteq \mathcal A}|\mathcal X|\prod\limits_{S\in \mathcal X}(1-\lambda_{D_k, S})\prod\limits_{S\in \mathcal A \setminus \mathcal X}\lambda_{D_k, S}\numberthis \label{z coefficient}
\end{align*}
where $\mathcal A = \mathcal S_+\setminus \mathcal Z$. This can be further simplified using the identity

\begin{equation}
    \sum\limits_{A\subseteq B}|A|\prod\limits_{a\in A}(1-a)\prod\limits_{a'\in B\setminus A}a' = |B| - \sum\limits_{a\in B} a = \sum\limits_{a\in B}(1-a)
\end{equation}

which we now prove iteratively. First, consider the function $S(\mathcal B):=  \sum\limits_{\mathcal A\subseteq \mathcal B}\prod\limits_{a\in \mathcal A}(1-a)\prod\limits_{a'\in \mathcal B\setminus \mathcal A}a'$. Now, consider $S(\mathcal B + \{ c\})$. This function can be divided into two sums, one where $c\in \mathcal A$ and the other where $c\not\in \mathcal A$. Therefore 

\begin{equation}
    S(\mathcal B + \{c\}) =  \sum\limits_{\mathcal A\subseteq \mathcal B}\prod\limits_{a\in \mathcal A}(1-a)\prod\limits_{a'\in \mathcal B\setminus \mathcal A}a' c + \sum\limits_{\mathcal A\subseteq \mathcal B}\prod\limits_{a\in \mathcal A}(1-a)\prod\limits_{a'\in \mathcal B\setminus \mathcal A}a'(1-c) = S(\mathcal B)
\end{equation}

Starting with the empty set, $S(\emptyset) = 1$, it follows that $S(\mathcal B) = 1$ $\forall$ countable sets $\mathcal B$. Next, consider the function $G(\mathcal B):=  \sum\limits_{\mathcal A\subseteq \mathcal B}|\mathcal A|\prod\limits_{a\in \mathcal A}(1-a)\prod\limits_{a'\in \mathcal B\setminus \mathcal A}a'$, which is the form of the sum we wish to compute in \eqref{z coefficient}. Proceeding as before, we have 

\begin{align*}
    G(\mathcal B + \{ c\}) &= \sum\limits_{\mathcal A\subseteq B}|\mathcal A|\prod\limits_{a\in \mathcal A}(1-a)\prod\limits_{a'\in \mathcal B\setminus \mathcal A}a' c + \sum\limits_{\mathcal A\subseteq \mathcal B}(|\mathcal A|+1)\prod\limits_{a\in \mathcal A}(1-a)\prod\limits_{a'\in \mathcal B\setminus \mathcal A}a'(1-c)\\
    &= c G(\mathcal B) + (1-c) G(\mathcal B) + (1-c) S(\mathcal B)
\end{align*}

Using $S(\mathcal B) = 1$ we arive at the recursive formula $G(\mathcal B+\{c\}) = G(\mathcal B) + (1-c)$. Starting with $G(\emptyset) = 0$, and building the set $\mathcal B$ by recursively adding elements $c$ to the set, we arrive at the identity 

\begin{equation}\label{set expression}
    G(\mathcal B) = |\mathcal B| - \sum\limits_{a\in \mathcal B} a
\end{equation}

Using \eqref{set expression} we can simplify the coefficient \eqref{z coefficient}

\begin{equation}\label{final coeff form}
    \frac{(-1)^{|\mathcal Z|}}{P(\mathcal S_\pm |\mathcal R)}\sum\limits_{\mathcal X\subseteq \mathcal S_+\setminus \mathcal Z}|\mathcal X|\prod\limits_{S\in \mathcal X}(1-\lambda_{D_k, S})\prod\limits_{S\in (\mathcal S_+\setminus \mathcal Z) \setminus \mathcal X}\lambda_{D_k, S} =   \frac{(-1)^{|\mathcal Z|}}{P(\mathcal S_\pm |\mathcal R)}\sum\limits_{S\in S_+\setminus \mathcal Z}(1-\lambda_{D_k, S})
\end{equation}

Rearranging \eqref{first sum eq} as a summation over $\mathcal Z$ substituting in \eqref{final coeff form} gives

\begin{equation}
    \mathbb E_\text{suff}(D_k,\mathcal E) = \frac{1}{P(\mathcal S_\pm |\mathcal R)}\sum\limits_{\mathcal Z\subseteq \mathcal S_+}(-1)^{|\mathcal Z|}P(\mathcal S_- = 0, \mathcal Z = 0,D_k = 1| \mathcal R)\left( \sum\limits_{S\in \mathcal S_+\setminus \mathcal Z }(1-\lambda_{D_k, S})\right)
\end{equation}

which can be expressed as

\begin{equation}\label{final eq counterfactual QS}
    \mathbb E_\text{suff}(D_k,\mathcal E) = \frac{1}{P(\mathcal S_\pm |\mathcal R)}\sum\limits_{\mathcal Z\subseteq \mathcal S_+}(-1)^{|\mathcal Z|}P(\mathcal S_- = 0, \mathcal Z = 0, D_k = 1|\mathcal R) \times \tau (k, \mathcal  Z)
\end{equation}
where
\begin{equation}
    \tau (k, \mathcal  Z) =\sum\limits_{S\in \mathcal S_+\setminus \mathcal Z}(1-\lambda_{D_k, S})
\end{equation}

Note that if we fix $\tau (k, \mathcal  Z) = 1$ $\forall \mathcal  Z$, we recover $\sum\limits_{\mathcal Z\subseteq \mathcal S_+}(-1)^{|\mathcal Z|}P(\mathcal S_- = 0, \mathcal Z = 0, D_k = 1|\mathcal R)/ P(\mathcal S_\pm |\mathcal R) $ $= P(\mathcal S_\pm , D_k = 1|\mathcal R)/P(\mathcal S_\pm |\mathcal R) = P(D_k = 1 |\mathcal E)$, which is the standard posterior of disease $D_k$ under evidence $\mathcal E = \mathcal R \cap \mathcal S_\pm$ (this follows from the inclusion-exclusion principle, and can be easily checked by applying marginalization to express $P(\mathcal S_\pm , D_k = 1|\mathcal R)$ in terms of marginals where all symptoms are instantiated as 0). Note that \eqref{final eq counterfactual QS} can be seen as a counterfactual correction to the quickscore algorithm in \cite{heckerman1990tractable} (although we do not assume independence of diseases as the authors of \cite{heckerman1990tractable} do).

\end{proof}

\section{Properties of the expected sufficiency}\label{appendix: properties}

In this appendix, we show that the expected sufficiency \eqref{exp cas suf 2} obeys our four postulates, including an additional postulate of \emph{sufficiency} which is obeyed by the expected sufficiency.

\begin{theorem}[Diagnostic properties of expected sufficiency]

\begin{enumerate}
    \item{\emph{consistency}.}\quad $\mathbb E_\text{suff}(D_k, \mathcal E) \propto P(D_k = 1 |\mathcal E)$\\
    \item{\emph{causality}.}\quad If $\not \exists$ $S\in \mathsf{Dec}(D_k)\cap \mathcal S_+$ $\implies$ $\mathbb E_\text{suff}(D_k, \mathcal E) = 0$\\
    \item{\emph{simplicity}.}\quad  $\left|\mathbb  E_\text{suff}(D_k, \mathcal E)\right| \leq \left|\mathcal S_+\cap \mathsf{Dec}(D_k) \right|$\\
    \item{\emph{sufficiency}.}\quad  $\mathbb E_\text{suff}(D_i \wedge D_j, \mathcal E) >0$ $\implies$ $\mathbb E_\text{suff}(D_i, \mathcal E) >0$ and $\mathbb  E_\text{suff}(D_j, \mathcal E) >0$
\end{enumerate}

\end{theorem}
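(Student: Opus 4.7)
The plan is to verify the four properties in turn, combining the closed-form expression from Theorem 2 (for the algebraic claims) with the structural semantics of the intervention defining $\mathbb{E}_\text{suff}$ (for the remaining claims). I would handle consistency and causality first, then simplicity, and finally sufficiency, which is by far the most delicate.

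For consistency, the plan is to work from
\[
\mathbb{E}_\text{suff}(D_k,\mathcal E) = \frac{1}{P(\mathcal S_\pm\mid\mathcal R)}\sum_{\mathcal Z\subseteq \mathcal S_+}(-1)^{|\mathcal Z|}\,P(\mathcal S_-=0,\mathcal Z=0,D_k=1\mid\mathcal R)\,\tau(k,\mathcal Z),
\]
and observe that every joint probability in the sum already conditions on $D_k=1$. Factoring $P(D_k=1\mid\mathcal R)$ out of each term and using $P(D_k=1\mid\mathcal E)=P(D_k=1,\mathcal S_\pm\mid\mathcal R)/P(\mathcal S_\pm\mid\mathcal R)$ exhibits $\mathbb{E}_\text{suff}(D_k,\mathcal E)$ as a weighted combination whose overall scale is governed by $P(D_k=1\mid\mathcal E)$; in particular $P(D_k=1\mid\mathcal E)=0$ forces $\mathbb{E}_\text{suff}(D_k,\mathcal E)=0$. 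Causality then drops out of the same formula: in the noisy-OR convention an absent edge $D_k\to S$ is encoded by $\lambda_{D_k,S}=1$, so if $\mathsf{Dec}(D_k)\cap\mathcal S_+=\emptyset$ (which in the 3-layer BN reduces to $\mathsf{Ch}(D_k)\cap\mathcal S_+=\emptyset$) then every summand of $\tau(k,\mathcal Z)=\sum_{S\in\mathcal S_+\setminus\mathcal Z}(1-\lambda_{D_k,S})$ vanishes and the whole expression is identically zero.

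For simplicity, I would argue directly from the defining counterfactual expectation. Under the intervention $\mathrm{do}(\mathsf{Pa}(\mathcal S_+)\setminus D_k=F)$, every $S\in\mathcal S_+$ that is not a child of $D_k$ has all of its parents — including its exogenous leak — pinned to $F$, so by the noisy-OR activation rule the counterfactual value of $S$ is deterministically $F$. Hence $\mathcal S_+'\subseteq\mathsf{Ch}(D_k)\cap\mathcal S_+=\mathsf{Dec}(D_k)\cap\mathcal S_+$ almost surely, giving the pointwise bound $|\mathcal S_+'|\le|\mathsf{Dec}(D_k)\cap\mathcal S_+|$, which survives taking the expectation.

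The hard part will be sufficiency, because $\mathbb{E}_\text{suff}(D_i\wedge D_j,\mathcal E)$ is not supplied by Definition 2 and must first be interpreted. The plan is to take the natural extension in which the counterfactual intervention excludes the pair $\{D_i,D_j\}$ from $\mathsf{Pa}(\mathcal S_+)$, leaving both diseases free, and to extend the twin-network derivation of Appendix C in the obvious way. This produces a closed form analogous to Theorem 2 but with a joint factor $\tau(\{i,j\},\mathcal Z)$ built from the activation noises of both $D_i$ and $D_j$; the key structural property I would rely on is that this factor splits into independent $D_i$ and $D_j$ contributions coming from their own activation channels into $\mathcal S_+$. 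I would then prove the contrapositive: if $\mathbb{E}_\text{suff}(D_i,\mathcal E)=0$ then its closed-form sum vanishes, forcing the $D_i$-part of the joint $\tau(\{i,j\},\mathcal Z)$ to vanish, and the symmetric statement for $D_j$ collapses the joint sum, giving $\mathbb{E}_\text{suff}(D_i\wedge D_j,\mathcal E)=0$. The delicate step is justifying this factorisation of the joint $\tau$, which requires carefully extending the node-merging argument of Appendix C to the case of two unintervened-upon disease nodes rather than one.
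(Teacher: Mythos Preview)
Your treatment of properties 1--3 is correct but takes a somewhat different route from the paper. For consistency and causality you work algebraically from the closed form of Theorem~2, reading off the factor $P(D_k=1\mid\mathcal R)$ and the vanishing of $\tau(k,\mathcal Z)$ when $\lambda_{D_k,S}=1$ for all $S\in\mathcal S_+$. The paper instead argues structurally from the definition: marginalising over $d_k$ in the twin network, the $d_k=0$ branch forces every counterfactual symptom parent (including leaks) to zero, so $\mathcal S_+'=\emptyset$ and only the $d_k=1$ branch survives, carrying the factor $P(D_k=1\mid\mathcal E)$; causality falls out of the same observation. Both routes are valid, and yours has the virtue of exhibiting the dependence on the $\lambda$'s explicitly. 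Your argument for simplicity is essentially the paper's.

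For sufficiency, however, your plan has a real gap. The key step---that the joint $\tau(\{i,j\},\mathcal Z)$ ``splits into independent $D_i$ and $D_j$ contributions''---does not hold in general: for a symptom $S\in\mathsf{Ch}(D_i)\cap\mathsf{Ch}(D_j)$, the counterfactual event $s^*=1$ under the pair-excluding intervention is $(d_i\wedge\bar u_{D_i,S})\vee(d_j\wedge\bar u_{D_j,S})$, and the Boolean OR does not factor into a sum of separate $D_i$- and $D_j$-terms. So the contrapositive you describe cannot be pushed through by factorising $\tau$. The paper's approach is entirely different and does not go via the joint closed form at all: it observes that $\mathbb E_\text{suff}(D_k,\mathcal E)$ is a convex combination of the nonnegative values $|\mathcal S_+'|$, and then shows directly that under the noisy-OR assumption a single disease with a positively evidenced child (with $\lambda_{D_k,S}<1$) and $P(D_k=1\mid\mathcal E)>0$ already yields $P(S^*=1\mid\mathcal E,\text{do}(\mathcal D^*\setminus D_k=0),\text{do}(\mathcal U_L^*=0),d_k=1)>0$, hence $\mathbb E_\text{suff}(D_k,\mathcal E)>0$. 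This positivity criterion for single diseases is what does the work; property~4 is then a consequence of noisy-OR's individual-sufficiency semantics rather than of any algebraic factorisation of the two-disease expression.
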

The expected sufficiency satisfies the following four properties,
\begin{proof}
Postulate 1 dictates that the measure should be proportional to the posterior probability of the diseases. Postulate 2 states that if the disease has no causal effect on the symptoms presented then it is a poor diagnosis and should be discarded. Postulate 3 states that the (tight) upper bound of the measure for a given disease (in the sense that there exists some disease model that achieves this upper bound -- namely deterministic models) is the number of positive symptoms that the disease can explain. This allows us to differentiate between diseases that are equally likely causes, but where one can explain more symptoms than another. Postulate 4 states that if it is possible that $D_k$ is causing at least one symptom, then the measure should be strictly greater than 0. Starting from the definition of the expected sufficiency

\begin{equation}\label{exp cas suf 2}
        \mathbb E_\text{suff}(D_k, \mathcal E) := \sum\limits_{\mathcal S'}\!\left|\mathcal S_+' \right|P(\mathcal S' | \mathcal E, \text{do}(\mathcal D \setminus D_k = 0), \text{do}(\mathcal U_L = 0))
\end{equation}

given the conditional independence structure of the twin network [F], we can express the counterfactual symptom marginals as 

\begin{align}
    P(\mathcal S' | \mathcal E, &\text{do}(\mathcal D \setminus D_k = 0), \text{do}(\mathcal U_L = 0)) \\
    &= \sum\limits_{d_k}\prod\limits_{S^*\in \mathcal S'}P(S^* | \mathcal E, \text{do}(\mathcal D^* \setminus D_k = 0), \text{do}(\mathcal U^*_L = 0), d_k)P(d_k | \mathcal E, \text{do}(\mathcal D^* \setminus D_k = 0), \text{do}(\mathcal U^*_L = 0))\\
    &= \sum\limits_{d_k}\prod\limits_{S^*\in \mathcal S'}P(S^* | \mathcal E, \text{do}(\mathcal D^* \setminus D_k = 0), \text{do}(\mathcal U^*_L = 0), d_k)P(d_k | \mathcal E)\
\end{align}

If $D_k = 1$, then do the the counterfactual interventions the counterfactual states have all parents (including leaks) instantiated to 0, which implies that $\mathcal S_+' = \emptyset$ by \eqref{def noisy scm}. Hence this case never contributes to the expected sufficiency as the expectation is over $|\mathcal S_+'|$. For $D_k = 1$, we recover that $ P(\mathcal S' | \mathcal E, \text{do}(\mathcal D \setminus D_k = 0), \text{do}(\mathcal U_L = 0))\propto P(D_k = 1 | \mathcal E)$ and therefore $\mathbb E_\text{suff}(D_k, \mathcal E) \propto P(D_k = 1 | \mathcal E)$. For postulate 2, if there are no symptoms that are descendants of $D_k$, then $\mathbb E_\text{suff}(D_k, \mathcal E) = 0$. This follows immediately from the fact that if $D_k$ is not an ancestor of any of the symptoms, then all counterfactual symptoms have all parents instantiated as $0$ and $\mathcal S_+' = \emptyset$. For postulate 4, we can only prove this property under additional assumptions about our disease model (see appendix \ref{appendix: scms} for noisy-and counter example). First, note that $\mathbb E_\text{suff}(D_k, \mathcal E)$ is a convex sum with positive semi-definite coefficients $|\mathcal S_+'|$. If there is a single positively evidenced symptom that is a descendent of $D_k$, and $D_k$ has a positive causal influence on that child, and our disease model permits that every disease be capable of causing its associated symptoms in isolation, i.e. $P(S = 1 |\text{only})(D_k = 1))> 0 $ for $S \in \mathsf{Dec}(D_k)$, then it is simple to check that $P(S^* =1| \mathcal E, \text{do}(\mathcal D^* \setminus D_k = 0), \text{do}(\mathcal U^*_L = 0), d_k=1)>0$ and so $E_\text{suff}(D_k, \mathcal E)>0$.
\end{proof}
\section{expected disablement}\label{appendix: expected dis}

In this appendix we turn our attention to our second diagnostic measure -- the expected disablement. This measure is closer to typical treatment measures, such as the efffect of treatment on the treated \cite{shpitser2009effects}. We use our twin diagnostic network outlined in appendix \ref{appendix: twin networks} figure [E] (shown below) to simulating counterfactual treatments. We focus on the simplest case of single disease interventions, and propose a simple ranking measure whereby the best treatments are those that get rid of the most symptoms.

\begin{repdefine}{def: expected disablement}
The expected disablement of disease $D_k$ determines the number of positive symptoms that we would expect to switch off if we intervened to turn off $D_k$,

\begin{equation}\label{def exp dis}
        \mathbb E_\text{dis}(D_k, \mathcal E) := \sum\limits_{ \mathcal S'}\!\left|\mathcal S_+ \setminus \mathcal S_+' \right|P(\mathcal S' | \mathcal E, \text{do}(D_k = 0))
\end{equation}

\noindent where $\mathcal E$ is the factual evidence and $\mathcal S_+$ is the set of factual positively evidenced symptoms. The expectation is calculated over all possible counterfactual symptom evidence states $\mathcal S'$ and $\mathcal S_+'$ denotes the positively evidenced symptoms in the counterfactual symptom evidence state. $\text{do}(D_k = 0)$ denotes the counterfactual intervention setting $D_k \rightarrow 0$.

\end{repdefine}

Decisions about which treatment to select for a patient generally take into account variables such as cost and cruelty. These variables can be simply included in the treatment measure. For example, the cruelty of specific symptoms can be included in the expectation \eqref{def exp dis} by weighting each positive symptom accordingly. The cost of treating a specific disease is included simply by multiplying \eqref{def exp dis} by a cost weight, and likewise for including the probability of the intervention succeeding. For now, we focus on computing the counterfactual probabilities, which we can then use to construct arbitrarily weighted expectations.

To calculate \eqref{def exp dis}, note that the only CPTs that differ from the original noisy-OR SCM are those for unmerged dual symptom nodes (i.e. children of the intervention node $D_k$). The disease layer forms a Markov blanket for the symptoms layer, d-separating dual symptom pairs from each other. Therefore we derive the CPT for dual symptoms and their parent diseases. \\

\[
\includegraphics[scale = 0.7]{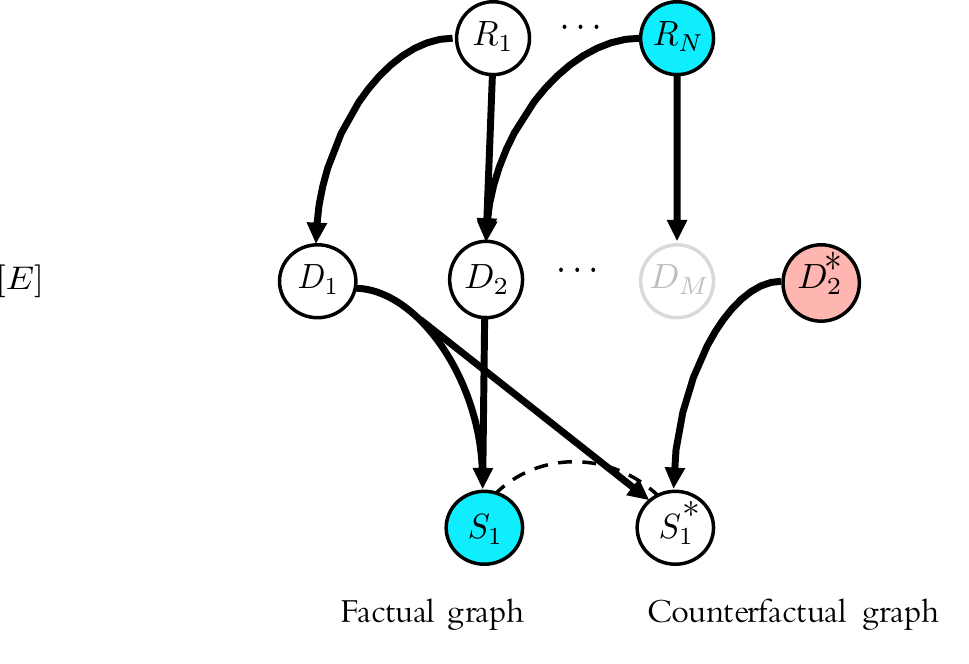}
\]

\begin{lemma}
\label{simplified conditionals}
For a given symptom $S$ and its counterfactual dual $S^*$, with parent diseases $\mathcal D$ and under the counterfactual intervention $\text{do}( D_k^*=0)$, the joint conditional distribution on the twin network is given by 
\[
    P(s, s^*\, |\,\wedge_{i}D_i = d_i, \text{do}( D_k^*=0))=\begin{cases}
   P(s = 0\,|\,  \wedge_{i}D_i = d_i) \quad \text{if } s = s^* = 0\\
   0 \quad \text{if } s = 0, s^* = 1\\
   \left(\frac{1}{\lambda_{D_k, S}}-1 \right)P(s= 0\,|\,  \wedge_{i\neq k}D_i = d_i, D_k = 1)\delta (d_k - 1)\quad \text{if } s = 1, s^* = 0 \, \text{ and } \lambda_{D_k, S}\neq 0\\
   P(s^{\setminus k}= 0\,|\,  \wedge_{i\neq k}D_i = d_i, D_k = 1)\delta (d_k - 1)\quad \text{if } s = 1, s^* = 0 \, \text{ and } \lambda_{D_k, S}= 0\\
    P(s^{\setminus k} =1 |  \wedge_{i\neq k}D_i = d_i, D_k=1)  \quad \text{if } s = 1, s^* = 1
   \end{cases}
\]
where $\delta (d_k - 1) = 1$ if $D_k = 1$ else 0. 

\end{lemma}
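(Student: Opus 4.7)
The plan is to mirror the derivation of Lemma \ref{simplified conditionals lemma 1}, using the twin-network property that $S$ and $S^*$ share the same exogenous latents $u_L$ and $\{u_{D_i,S}\}_{i=1}^N$. The joint CPT can then be written as a single marginalisation
\begin{equation*}
P(s,s^*\mid \wedge_i d_i,\,\text{do}(D_k^*=0)) = \sum_{u_L,\,u_{D_1,S},\ldots,u_{D_N,S}} P(u_L)\prod_i P(u_{D_i,S})\,g_{s,s^*}(\underline u,\underline d),
\end{equation*}
where $g_{s,s^*}$ is the product of the two deterministic indicators arising from the structural equations $s=\bar u_L\vee\bigvee_i(d_i\wedge \bar u_{D_i,S})$ and $s^*=\bar u_L\vee\bigvee_{i\neq k}(d_i\wedge \bar u_{D_i,S})$. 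Note that, in contrast to Lemma \ref{simplified conditionals lemma 1}, the leak node is \emph{not} intervened on here, so $u_L$ enters both structural equations identically.

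The key observation is that the generator of $s^*$ is obtained from the generator of $s$ by deleting the single OR-term $(d_k\wedge \bar u_{D_k,S})$, so at the level of realisations $s^*=1\Rightarrow s=1$. This immediately settles three of the four cases: the entry $(s,s^*)=(0,1)$ is identically zero; the $(0,0)$ entry reduces to the factual marginal $P(s=0\mid \wedge_i d_i)$; and the $(1,1)$ entry equals the counterfactual marginal of $s^*$, which under the graph-surgery interpretation of removing $D_k\to S$ is exactly $P(s^{\setminus k}=1\mid \wedge_{i\neq k}d_i,D_k=1)$.

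The real work is in the $(s,s^*)=(1,0)$ case. That event forces $u_L=1$ and, for every $i\neq k$, either $d_i=0$ or $u_{D_i,S}=1$ (so that $s^*=0$), while simultaneously requiring $d_k=1$ and $u_{D_k,S}=0$ (so that the dropped OR-term is the one firing in $s$). Summing $P(u_L)\prod_i P(u_{D_i,S})$ over this admissible latent region factorises, via the same noisy-OR identity $\sum_u P(u)(\bar d_i\vee u)=\lambda_{D_i,S}^{d_i}$ used in Lemma \ref{simplified conditionals lemma 1}, into $\delta(d_k-1)(1-\lambda_{D_k,S})\,\lambda_L\prod_{i\neq k}\lambda_{D_i,S}^{d_i}$. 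Recognising $\lambda_L\prod_{i\neq k}\lambda_{D_i,S}^{d_i}$ as $P(s^{\setminus k}=0\mid \wedge_{i\neq k}d_i,D_k=1)$ yields the $\lambda_{D_k,S}=0$ line of the lemma directly, and the algebraic identity $P(s=0\mid \wedge_{i\neq k}d_i,D_k=1)=\lambda_{D_k,S}\cdot P(s^{\setminus k}=0\mid \wedge_{i\neq k}d_i,D_k=1)$ converts the same expression into the $(1/\lambda_{D_k,S}-1)$ form whenever $\lambda_{D_k,S}\neq 0$.

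The main subtleties I anticipate are purely clerical: keeping the leak factor $\lambda_L$ visible throughout the calculation (since, unlike in Lemma \ref{simplified conditionals lemma 1}, the leak is not intervened on and must be preserved in both branches of the OR), and splitting off the degenerate $\lambda_{D_k,S}=0$ case to avoid a $0/0$ in the compact form. Once the twin-network latent sharing is made explicit, everything reduces to one noisy-OR sum carried out separately for each of the four $(s,s^*)$ values.
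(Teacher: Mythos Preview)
Your proposal is correct and follows the same underlying strategy as the paper: write the joint CPT as a marginalisation over the shared latents $u_L,\{u_{D_i,S}\}$, then evaluate case by case using the noisy-OR identity $\sum_u P(u)(\bar d_i\vee u)=\lambda_{D_i,S}^{d_i}$.

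The organisational difference is worth noting. The paper first computes $P(s=0,s^*=0\mid\ldots)$ directly from the Boolean product of the two indicator functions, then separately computes the two single-symptom marginals $P(s=0\mid\ldots)$ and $P(s^*=0\mid\ldots)$, and finally recovers the remaining three cells by subtraction: $P(s=0,s^*=1)=P(s=0)-P(s=0,s^*=0)$, etc. You instead lead with the structural monotonicity $s^*=1\Rightarrow s=1$ (since the generator of $s^*$ is the generator of $s$ with one OR-term deleted), which immediately kills the $(0,1)$ cell and collapses the $(0,0)$ and $(1,1)$ cells to single-variable marginals without any Boolean manipulation; you then compute only the $(1,0)$ cell by direct identification of the admissible latent region. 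This is slightly cleaner and avoids the paper's intermediate step of expanding the joint indicator $g_{0,0}$, but both routes arrive at the same four expressions via the same noisy-OR bookkeeping.
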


\begin{proof}
First note that for this marginal distribution the intervention $\text{do}( D_k^*=0)$ is equivalent to setting the evidence $D_k ^*= 0$ as we specify the full Markov blanket of $(s, s^*)$. Let $\mathcal D_{\setminus k}$ denote the set of parents of $(s, s^*)$ not including the intervention node $D_k^*$ or its dual $D_k$. We wish to compute the conditional probability
\begin{equation}\label{bool 1}
    P(s, s^*\, |\,   \wedge_{i\neq k}D_i = d_i, D_k = d_k) = \sum\limits_{\underline u_s} p (\underline u_s)P(s| \wedge_{i\neq k}D_i = d_i, D_k = d_k, u_s)P(s^*| \mathcal \wedge_{i\neq k}D_i = d_i, D^*_k=0, u_s)
\end{equation}

where $p (\underline u_s)$ is the product distribution over all exogenous noise terms for $S$ including the leak term. We proceed as before by expressing this as a marginalization over the CPT of the dual states,  $P(s=0, s^* = 0\, |\,  \wedge_{i\neq k}D_i = d_i, D_k^*=0, D_k )$, $P(s = 0\, |\,  \wedge_{i\neq k}D_i = d_i, D_k^*=0, D_k=d_k)$ and $P(s^*=0\, |\,  \mathcal \wedge_{i\neq k}D_i = d_i,D_k^*=0, D_k=d_k)$. For $s_i = 0$, the generative functions are given by 

\begin{equation}
    P(s=0 \, |\,  \mathsf{Pa}(S), u_s) = u_L \bigwedge\limits_{D_i\in \mathsf{Pa}(S)}(\bar d_i \vee u_{D_i, S})
\end{equation}

First we compute the joint state.

\begin{align*}
    &P(s =0|  \wedge_{i\neq k}D_i = d_i, D_k=d_k, u_s) P(s^* =0| \mathcal  \wedge_{i\neq k}D_i = d_i, D_k^* = d_k^*, u_s) \\
    &=u_L \wedge u_L\bigwedge\limits_{D_i\in \mathcal D_{\setminus k}}\left( u_{D_i, S} \vee \bar d_i\right)\bigwedge\limits_{D_j\in \mathcal D_{\setminus k}}\left( u_{D_j, S} \vee \bar d_j\right)\wedge \left[u_{D_k, S}\vee \bar d_k\right]\wedge \left[u_{D_k, S}\vee \bar d^*_k\right]\\
    &= u_L \bigwedge\limits_{D_i\in \mathcal D_{\setminus k}}\left( u_{D_i, S} \vee \bar d_i\right) \left[u_{D_k, S}\vee \left(\bar d^*_k\wedge \bar d_k\right)\right]
\end{align*}

Where we have used the Boolean identities $a \wedge a = a$ and $a \vee (b\wedge c) = (a \vee b )\wedge (a \vee c)$. Therefore

\begin{align*}
    P(s=0 , s^* = 0| \wedge_{i\neq k}D_i = d_i, D_k=d_k, D_k^*=d_k^*) &=  \sum\limits_{\underline u_s}p (\underline u_s) P(s=0 | \mathcal D_{\setminus k}, D_k, u_s) P(s^* =0| \mathcal D_{\setminus k}, D_k^*, u_s) \\
    &= \lambda_{L_s}\left[\lambda_{D_k, S}\left(d_k\vee d_k^*\right) +  \bar d_k\wedge \bar d_k^*\right]\prod\limits_{D_i\in \mathcal D_{\setminus k}}\left[\lambda_{D_i, S}d_i+  \bar d_i\right]
\end{align*}

Next, we calculate the single-symptom conditionals

\begin{align*}
    P(s =0\, | \, \wedge_{i\neq k}D_i = d_i, D_k=d_k ) &= \sum\limits_{\underline u_s} p (\underline u_s)P(s=0| \mathcal D_{\setminus k}, D_k, u_s) \\
    &= 
    \sum\limits_{u_{L_S}}P(u_{L_s})u_{L_s}\prod\limits_{D_i\in \mathcal D}\sum\limits_{u_{D_i, S}}P(u_{D_i, S})u_{D_i, S}\vee \bar d_i\\
    &= 
    P(u_{L_s}=1)\prod\limits_{D_i\in \mathcal D}\sum\limits_{u_{D_i, S}}\left[P(u_{D_i, S}=1)+ P(u_{D_i, S}=0) \bar d_i \right]\\
    &=  
    \lambda_{L_s}\prod\limits_{D_i\in \mathcal D}\left[\lambda_{D_i, S}d_i+  \bar d_i \right] \numberthis\label{single_cpd}
\end{align*}
and similar for $ P(s^* =0\, | \,\mathcal  \wedge_{i\neq k}D_i = d_i, D_k^* = d_k^*)$. Note that $\lambda x + \bar x = \lambda^{x}$. We can now express the joint cpd over dual symptom pairs, using the identities $P(s= 0, s^* = 1\, | \, X) = P(s = 0\, | \, X) - P(s=0, s^* = 0 \, | \, X)$, $P(s= 1, s^* = 0\, | \, X) = P(s^* = 0\, | \, X) - P(s=0, s^* = 0 \, | \, X)$ and $P(s= 1, s^* = 1\, | \, X) =1- P(s = 0\, | \, X) - P(s^* = 0\, | \, X)+ P(s=0, s^* = 0 \, | \, X)$ for arbitrary conditional $X$.

\begin{align*}
   P(s, s^*|  \wedge_{i\neq k}D_i = d_i, D_k=d_k, D_k^*=d_k^*) 
   &= \begin{cases}
   \lambda_{L_s}\lambda_{D_k, S}^{d_k\vee d_k^*}\prod\limits_{D_i\in \mathcal D_{\setminus k}}\lambda_{D_i, S}^{d_i} \quad \text{ if } s = s^* = 0\\
   \lambda_{L_s}\left[\lambda_{D_k, S}^{d_k}-\lambda_{D_k, S}^{d_k\vee d_k^*}\right]\prod\limits_{D_i\in \mathcal D_{\setminus k}}\lambda_{D_i, S}^{d_i} \quad \text{ if } s = 0, s^* = 1\\
   \lambda_{L_s}\left[\lambda_{D_k, S}^{d^*_k}-\lambda_{D_k, S}^{d_k\vee d_k^*}\right]\prod\limits_{D_i\in \mathcal D_{\setminus k}}\lambda_{D_i, S}^{d_i} \quad \text{ if } s = 1,  s^* = 0\\
    1- \lambda_{L_s}\left[\lambda_{D_k, S}^{d_k} + \lambda_{D_k, S}^{d^*_k} -\lambda_{D_k, S}^{d_k\vee d_k^*} \right]\prod\limits_{D_i\in \mathcal D_{\setminus k}}\lambda_{D_i, S}^{d_i} \quad \text{ if } s = s^* = 1
   \end{cases}
\end{align*}

As we are always intervening to switch off diseases, $D_k^* = 0$, then $d_k\vee d_k^* = d_k$ and 
\begin{equation}
    \lambda_{D_k, S}^{d_k}-\lambda_{D_k, S}^{d_k\vee d_k^*} = 0
\end{equation}

and therefore $P(s = 0, s^*=1 | \mathcal  \wedge_{i\neq k}D_i = d_i, D_k = d_k, D_k^*=0) = 0$ as expected (switching off a disease will never switch on a symptom). This simplifies our expression for the conditional distribution to 

\begin{equation}
    P(s, s^*|  \wedge_{i\neq k}D_i = d_i, D_k=d_k, D_k^*=0) = \begin{cases}
   \lambda_{L_s}\lambda_{D_k, S}^{d_k}\prod\limits_{D_i\in \mathcal D_{\setminus k}}\lambda_{D_i, S}^{d_i} \quad \text{ if } s = s^* = 0\\
   0\quad \text{ if } s = 0, s^* = 1\\
   \lambda_{L_s}\left[1-\lambda_{D_k, S}^{d_k}\right]\prod\limits_{D_i\in \mathcal D_{\setminus k}}\lambda_{D_i, S}^{d_i}\quad \text{ if } s = 1,  s^* = 0\\
    1- \lambda_{L_s}\prod\limits_{D_i\in \mathcal D_{\setminus k}}\lambda_{D_i, S}^{d_i}\quad \text{ if } s = s^* =1 
   \end{cases}
\end{equation}

This then simplifies using \eqref{single_cpd} to 

\begin{equation}\label{simplified conditionals}
    P(s, s^*|  \wedge_{i\neq k}D_i = d_i, D_k=d_k, D_k^*=0)=\begin{cases}
   P(s=0| \wedge_{i}D_i = d_i)\quad \text{ if } s = s^* = 0\\
   0\quad \text{ if } s = 0, s^* = 1\\
   P(s = 0|  \wedge_{i\neq k}D_i = d_i, D_k=0)-P(s=0|  \wedge_{i\neq k}D_i = d_i, D_k = d_k)\text{ if } s = 1,  s^* = 0\\
    P(s = 1|  \wedge_{i\neq k}D_i = d_i, D_k=0)  \quad \text{ if } s = s^* =1
   \end{cases}
\end{equation}

We have arrived at expressions for the CPT's over dual symptoms in terms of CPT's on the factual graph, and hence our conterfactual query can be computed on the factual graph alone. The third term in \eqref{simplified conditionals}, $ P(s = 0|  \wedge_{i\neq k}D_i = d_i, D_k=0)-P(s=0|  \wedge_{i\neq k}D_i = d_, D_k = d_k)$, equals zero unless $D_k = 1$. Using the definition of noisy-OR \eqref{def eq noisy or} to give

\begin{equation}
P(s = 0| \wedge_{i\neq k}D_i = d_i, D_k = 0) = \frac{1}{\lambda_{D_k, S}}P(s=0| \wedge_{i\neq k}D_i = d_i, D_k = 1)
\end{equation}

in the case that $\lambda_{D_k, S} > 0$, we recover

\begin{equation}
    P(s = 0|  \wedge_{i\neq k}D_i = d_i, D_k=0)-P(s=0|  \wedge_{i\neq k}D_i = d_i, D_k= d_k) =  \left(\frac{1}{\lambda_{D_k, S}}-1 \right)P(s= 0\,|\,  \wedge_{i\neq k}D_i = d_i, D_k = 1)\delta (d_k - 1)
\end{equation}

where $d_k$ is the instantiation of $D_k$ on the factual graph. The term $\delta (d_k - 1)$ is equivalent to fixing the observation $D_k = 1$ on the factual graph. If $\lambda_{D_k, S} = 0$ then 

\begin{equation}
     \lambda_{L_s}\left[1-\lambda_{D_k, S}^{d_k}\right]\prod\limits_{D_i\in \mathcal D_{\setminus k}}\lambda_{D_i, S}^{d_i} =  \lambda_{L_s}\prod\limits_{D_i\in \mathcal D_{\setminus k}}\lambda_{D_i, S}^{d_i}\delta(d_k-1)
\end{equation}

which is equivalent to $P(s^{\setminus k} = 0|\wedge_{i\neq k}D_i = d_i, D_k=1 )\delta(d_k-1)$

Finally, from the definition of the noisy-OR CPT \eqref{def noisy scm},

\begin{equation}
    P(s = 1|  \wedge_{i\neq k}D_i = d_i, D_k=0) = P(s^{\setminus k} =1 |  \wedge_{i\neq k}D_i = d_i, D_k=1)
\end{equation}

\end{proof}

Lemma \ref{simplified conditionals} allows us to express the expected disablement in terms of factual probabilities. As we have seen, the intervention $ \text{do}(D^*_k = 0)$ can never result in counterfactual symptoms that are on, when their dual factual symptoms are off, so we need only enumerate over counterfactual symptoms states where $\mathcal S_+'\subseteq \mathcal S_+$ as these are the only counterfactual states with non-zero weight. From this it also follows that for all $s\in \mathcal S_-$ $\implies$ $s^* \in \mathcal S_-'$. The counterfactual CPT in  \eqref{def exp dis} is represented on the twin network [F] as

\begin{equation}\label{simplified marginal 2}
    P(\mathcal S_+', \mathcal S_-'  | \mathcal E, \text{do}(D_k^* = 0)) =  P(\mathcal S_+',\mathcal  S_-'| \mathcal S_+,\mathcal  S_-, \mathcal R,  \text{do}(D^*_k = 0)) 
\end{equation}

\begin{theorem}[Simplified noisy-OR expected disablement]\label{theorem deriv expected dis}

For the noisy-OR networks described in Appendix \ref{appendix: scms}, the expected disablement of disease $D_k$ is given by

\begin{equation}
        \mathbb E_\text{dis}(D_k,\mathcal E) =
    \frac{1}{P(\mathcal S_+,\mathcal  S_-| \mathcal R)} \sum\limits_{\mathcal Z\subseteq \mathcal S_+}(-1)^{|\mathcal Z|}P(\mathcal S_- = 0, \mathcal Z = 0, D_k = 1|\mathcal R)\gamma(\mathcal Z, D_k)
\end{equation}

where

\begin{equation}
    \gamma (\mathcal Z, D_k) = \sum\limits_{S\in \mathcal Z}\left(1 - \frac{1}{\lambda_{D_k, S}}\right)
\end{equation}

where $\mathcal S_\pm$ is the set of factual positive (negative) evidenced symptom nodes and $\mathcal R$ is the risk factor evidence.

\end{theorem}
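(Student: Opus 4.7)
The plan is to mirror the three-stage strategy used for the expected sufficiency theorem: apply Lemma~\ref{simplified conditionals} to rewrite the twin-network counterfactual $P(\mathcal S'\mid\mathcal E,\text{do}(D_k=0))$ appearing in \eqref{def exp dis} in terms of purely factual marginals; apply the inclusion-exclusion identity together with the noisy-OR identity $P(S^{\setminus k}=0\mid\cdot)=P(S=0\mid\cdot,D_k=1)/\lambda_{D_k,S}$ to eliminate the graph-surgery operator $\setminus k$; and then evaluate in closed form the combinatorial coefficient that multiplies each marginal $P(\mathcal S_-=0,\mathcal Z=0,D_k=1\mid\mathcal R)$.

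First, the third case of Lemma~\ref{simplified conditionals} forces $P(s=0,s^*=1\mid\cdot)=0$, so only counterfactual states with $\mathcal S_+'\subseteq\mathcal S_+$ contribute; writing $\mathcal C:=\mathcal S_+\setminus\mathcal S_+'$ for the set of ``switched-off'' symptoms, the weight in the expectation is $|\mathcal C|$, and the $\mathcal C=\emptyset$ term drops out automatically. Since the disease layer d-separates each dual symptom pair on the twin network, I would marginalise over factual disease states and insert the three nontrivial CPTs from the lemma: $P(s=0\mid\underline d)$ for $S\in\mathcal S_-$; the factor $(1/\lambda_{D_k,S}-1)\,P(s=0\mid\wedge_{i\neq k}d_i,D_k=1)\,\delta(d_k-1)$ for $S\in\mathcal C$; and $P(s^{\setminus k}=1\mid\wedge_{i\neq k}d_i,D_k=1)$ for $S\in\mathcal S_+'$. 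Because $\mathcal C\neq\emptyset$ whenever the weight is nonzero, the Dirac factor pins $d_k=1$, and the remaining marginalisation over $\{d_i\}_{i\neq k}$ collapses the product of CPTs into a single joint marginal, yielding
\begin{equation*}
\mathbb E_\text{dis}(D_k,\mathcal E)=\frac{1}{P(\mathcal S_\pm\mid\mathcal R)}\sum_{\mathcal C\subseteq\mathcal S_+}|\mathcal C|\prod_{S\in\mathcal C}\!\left(\tfrac{1}{\lambda_{D_k,S}}-1\right)P\bigl(\mathcal S_-=0,\mathcal C=0,(\mathcal S_+\setminus\mathcal C)^{\setminus k}=1,D_k=1\bigm|\mathcal R\bigr).
\end{equation*}

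Next I would apply inclusion-exclusion to expand the block $(\mathcal S_+\setminus\mathcal C)^{\setminus k}=1$ as an alternating sum over $\mathcal Z\subseteq\mathcal S_+\setminus\mathcal C$ of marginals with $\mathcal Z^{\setminus k}=0$, then use the noisy-OR identity above to strip the graph surgery (just as in the proof of Theorem~\ref{theorem deriv exp suff}). Re-indexing by the disjoint union $\mathcal W=\mathcal C\sqcup\mathcal Z$ so that a single marginal $P(\mathcal S_-=0,\mathcal W=0,D_k=1\mid\mathcal R)$ appears for each $\mathcal W\subseteq\mathcal S_+$, the task reduces to evaluating
\begin{equation*}
K_{\mathcal W}=\sum_{\mathcal C\subseteq\mathcal W}|\mathcal C|\prod_{S\in\mathcal C}\!\left(\tfrac{1}{\lambda_{D_k,S}}-1\right)\prod_{S\in\mathcal W\setminus\mathcal C}\!\left(-\tfrac{1}{\lambda_{D_k,S}}\right).
\end{equation*}

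The hard part will be closing $K_{\mathcal W}$ into the form $(-1)^{|\mathcal W|}\gamma(\mathcal W,D_k)$: the symmetric identity $G(\mathcal B)=|\mathcal B|-\sum_{a\in\mathcal B}a$ used in the expected-sufficiency derivation no longer applies directly, because here the ``on'' and ``off'' per-symptom weights do not sum to $1$. Instead I would invoke the more general identity
\begin{equation*}
\sum_{\mathcal C\subseteq\mathcal W}|\mathcal C|\prod_{S\in\mathcal C}x_S\prod_{S\in\mathcal W\setminus\mathcal C}y_S=\sum_{S\in\mathcal W}x_S\!\!\prod_{S'\in\mathcal W,\,S'\neq S}(x_{S'}+y_{S'}),
\end{equation*}
proved in one line by writing $|\mathcal C|=\sum_{S\in\mathcal W}\mathbf 1[S\in\mathcal C]$ and swapping the order of summation. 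With $x_S=1/\lambda_{D_k,S}-1$ and $y_S=-1/\lambda_{D_k,S}$ one has $x_{S'}+y_{S'}=-1$, so $K_{\mathcal W}=(-1)^{|\mathcal W|-1}\sum_{S\in\mathcal W}(1/\lambda_{D_k,S}-1)=(-1)^{|\mathcal W|}\gamma(\mathcal W,D_k)$. Substituting this back into the sum and relabelling $\mathcal W\to\mathcal Z$ yields the stated formula. The boundary case $\lambda_{D_k,S}=0$ is handled separately by the fourth branch of Lemma~\ref{simplified conditionals} and can be absorbed into the same expression by an appropriate limiting argument.
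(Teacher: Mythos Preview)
Your proposal is correct and follows the same three-stage architecture as the paper: use Lemma~\ref{simplified conditionals} to reduce the twin-network counterfactual to factual marginals times $(1/\lambda_{D_k,S}-1)$ factors, apply inclusion--exclusion together with the noisy-OR identity to eliminate the $\setminus k$ surgery, then collect the coefficient of each marginal $P(\mathcal S_-=0,\mathcal W=0,D_k=1\mid\mathcal R)$ and close it.

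The only substantive divergence is in how you close that coefficient. The paper first inserts an extra change of variables $\mathcal C\to\mathcal S_+\setminus\mathcal C$, then pulls out $\prod_{S\in\mathcal A}\lambda_{D_k,S}^{-1}$ and evaluates the residual sum $G(\mathcal A)=\sum_{\mathcal X\subseteq\mathcal A}|\mathcal X|(-1)^{|\mathcal A|-|\mathcal X|}\prod_{S\in\mathcal X}(1-\lambda_{D_k,S})$ by a two-tier recursion (first closing an auxiliary $H(\mathcal A)$, then $G$ itself). Your route is shorter: you skip the preliminary change of variables entirely and close $K_{\mathcal W}$ in one step via the indicator-swap identity
\[
\sum_{\mathcal C\subseteq\mathcal W}|\mathcal C|\prod_{S\in\mathcal C}x_S\prod_{S\in\mathcal W\setminus\mathcal C}y_S=\sum_{S\in\mathcal W}x_S\prod_{S'\neq S}(x_{S'}+y_{S'}),
\]
which with $x_S+y_S=-1$ collapses immediately. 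This is both more elementary and more general than the paper's recursion, and it makes transparent \emph{why} the answer has the single-sum form $\sum_{S\in\mathcal W}(\cdot)$. Your limiting argument for $\lambda_{D_k,S}=0$ matches the paper's continuity check.
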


\begin{proof}
From the above discussion, the non-zero contributions to the expected disablement are 

\begin{equation}
    \mathbb E(D_k,\mathcal E)_\text{dis} = \sum\limits_{\mathcal C\subseteq \mathcal S_+}|\mathcal C|P(\mathcal S_-^* = 0,\mathcal C^* = 0,\mathcal S_+\setminus \mathcal C = 1 | \mathcal S_+,\mathcal  S_-, \mathcal R,  \text{do}(D^*_k = 0))
\end{equation}
Applying Bayes rule, and noting the the factual evidence states are not children of the intervention node $D_k^*$, gives 

\begin{equation}\label{lil eq}
    \mathbb E(D_k,\mathcal E)_\text{dis} =\frac{1}{P(\mathcal S_+,\mathcal  S_-| \mathcal R)} \sum\limits_{\mathcal C\subseteq \mathcal S_+}|\mathcal C|P(\mathcal S_-^* = 0,\mathcal C^* = 0,\mathcal S_+\setminus \mathcal C = 1 , \mathcal S_+,\mathcal  S_-| \mathcal R,  \text{do}(D^*_k = 0))
\end{equation}

Let us now consider the probabilities $Q = P(\mathcal S_-^* = 0,\mathcal C^* = 0,\mathcal S\setminus \mathcal C^* = 1 , \mathcal S_+,\mathcal  S_-| \mathcal R,  \text{do}(D^*_k = 0))$. We can express these as marginalizations over the disease layer, which d-separate dual symptom pairs from each-other. First, we express $Q$ in the instance where we assume all $\lambda_{D_k, S}>0$.

\begin{align*}
    Q &= \sum\limits_{d, d_k}P(\wedge_{i\neq k}D_i = d_i, D_k = d_k|\mathcal R)\prod\limits_{S\in \mathcal S_-}P(S^* = 0, S = 0 |\wedge_{i\neq k}D_i = d_i, D_k = d_k, D_k^* = 0)\\
    &\times \prod\limits_{S\in \mathcal C}P(S^* = 0, S = 1 |\wedge_{i\neq k}D_i = d_i, D_k = d_k, D_k^* = 0)\prod\limits_{S\in \mathcal S_+\setminus \mathcal C}P(S^* = 1, S = 1 |\wedge_{i\neq k}D_i = d_i, D_k = d_k, D_k^* = 0)\numberthis\label{q equation}
\end{align*}

$\mathbb E(D_k,\mathcal E)$ is a sum of products of $Q$'s, therefore if all $Q$ are continuous for $\lambda_{D_k, S}\rightarrow 0$ $\forall$ $S$ we can derive $\mathbb E(D_k,\mathcal E)$ for positive $\lambda_{D_k, S}$ and take the limit $\lambda_{D_k, S}\rightarrow 0$ where appropriate. We can consider each term in isolation, as the product of continuous functions is continuous. Each term in $Q$ derives from one of

\begin{align*}
    P(s, s^*\, |\,&\wedge_{i\neq k}D_i = d_i, D_k=d_k,  \text{do}( D_k^*=0))\\
    &=\begin{cases}
   P(s = 0\,|\,  \wedge_{i}D_i = d_i) \quad \text{if } s = s^* = 0\\
   0 \quad \text{if } s = 0, s^* = 1\\
   \left(\frac{1}{\lambda_{D_k, S}}-1 \right)P(s= 0\,|\,  \wedge_{i\neq k}D_i = d_i, D_k = 1)\delta (d_k - 1)\quad \text{if } s = 1, s^* = 0 \, \text{ and } \lambda_{D_k, S}\neq 0\\
   P(s^{\setminus k}= 0\,|\,  \wedge_{i\neq k}D_i = d_i, D_k = 1)\delta (d_k - 1)\quad \text{if } s = 1, s^* = 0 \, \text{ and } \lambda_{D_k, S}= 0\\
    P(s^{\setminus k} =1 |  \wedge_{i\neq k}D_i = d_i, D_k=1)  \quad \text{if } s = 1, s^* = 1
   \end{cases}\numberthis
\end{align*}

Starting with $P(s = 0\,|\,  \wedge_{i}D_i = d_i) = \lambda_{L_S}\prod_{i=1}^N \lambda_{D_i, S}^{d_i}$, this is a linear function of $\lambda_{D_k, S}$ and therefore continuous in the limit $\lambda_{D_k, S}\rightarrow 0$. Secondly, 

\begin{equation}
     \left(\frac{1}{\lambda_{D_k, S}}-1 \right)P(s= 0\,|\,  \wedge_{i\neq k}D_i = d_i, D_k = 1)\delta (d_k - 1) = \left(\frac{1}{\lambda_{D_k, S}}-1 \right)\lambda_{L_S}\prod_{i=1}^N \lambda_{D_i, S}^{d_i}\delta (d_k - 1)
\end{equation}

which again is a linear function fo $\lambda_{D_k, S}$ and so is continuous in the limit $\lambda_{D_k, S}\rightarrow 0$. $P(s^{\setminus k}= 0\,|\,  \wedge_{i\neq k}D_i = d_i, D_k = 1)\delta (d_k - 1)$ is a constant function w.r.t $\lambda_{D_k, S}$, as is $ P(s^{\setminus k} =1 |  \wedge_{i\neq k}D_i = d_i, D_k=1)$, so these are also both continuous in the limit.

We therefore proceed under the assumption that $\lambda_{D_k, S} > 0$ $\forall$ $S$. Applying Lemma 1 simplifies \eqref{q equation} to 

\begin{align*}
    Q &= \sum\limits_{d}P(\wedge_{i\neq k}D_i = d_i, D_k = d_k|\mathcal R)\prod\limits_{S\in \mathcal S_- }P(S = 0|\wedge_{i\neq k}D_i = d_i, D_k = d_k)\prod\limits_{S\in  \mathcal C}P(S = 0|\wedge_{i\neq k}D_i = d_i, D_k = 1)\delta (d_k -1) \\
    &\times \prod\limits_{S\in  \mathcal S_+\setminus \mathcal C }P(S^{\setminus k} = 1| \wedge_{i\neq k}D_i = d_i, D_k = 1) \prod\limits_{S\in\mathcal C }\left(\frac{1}{\lambda_{D_k, S}}-1 \right) \numberthis
\end{align*}

\noindent Note that the only $Q$ that are not multiplied by a factor $|\mathcal C|=0$ in \eqref{lil eq} have $\mathcal C \neq \emptyset$, and so $\delta (d_k -1)$ is always present. Marginalizing over all disease states gives 

\begin{equation}\label{simple q}
    Q = P(\mathcal S_- = 0,  \mathcal C = 0, (\mathcal S_+\setminus \mathcal C)^{\setminus k} = 1, D_k = 1 | \mathcal R)\prod\limits_{S\in\mathcal C }\left(\frac{1}{\lambda_{D_k, S}}-1 \right)
\end{equation}

As before, we simplify this using a change of varaibles and the inclusion-exclusion principle. Change variables $\mathcal C \rightarrow \mathcal S_+\setminus \mathcal C$, which along with \eqref{simple q} gives

\begin{equation}
    \mathbb E(D_k,\mathcal E)_\text{dis} =\frac{1}{P(\mathcal S_+,\mathcal  S_-| \mathcal R)} \sum\limits_{\mathcal C\subseteq \mathcal S_+}|\mathcal S_+\setminus \mathcal C|P(\mathcal S_- = 0,(\mathcal S_+\setminus \mathcal C)= 0, \mathcal C^{\setminus k} = 1 ,D_k = 1| \mathcal R)\prod\limits_{S\in (\mathcal S_+\setminus \mathcal C)}\left(\frac{1}{\lambda_{D_k, S}}-1 \right)
\end{equation}

Next we apply the inclusion exclusion principle, giving

\begin{equation}
    \mathbb E(D_k,\mathcal E)_\text{dis} =\frac{1}{P(\mathcal S_+,\mathcal  S_-| \mathcal R)} \sum\limits_{\mathcal C\subseteq \mathcal S_+}|\mathcal S_+\setminus \mathcal C|\prod\limits_{S\in (\mathcal S_+\setminus \mathcal C)}\left(\frac{1}{\lambda_{D_k, S}}-1 \right)\sum\limits_{\mathcal Z\subseteq \mathcal C}(-1)^{|\mathcal Z|}P(\mathcal S_- = 0,(\mathcal S_+\setminus \mathcal C) = 0,\mathcal Z^{\setminus k} = 0 ,D_k = 1| \mathcal R)
\end{equation}

We can now proceed as before and remove the graph cut operation on the set $\mathcal Z$, using the definition of noisy-or \eqref{def noisy scm},

\begin{align*}
    P(\mathcal S_- = 0, &(\mathcal S_+\setminus \mathcal C) = 0, \mathcal Z^{\setminus k} = 0,D_k = 1| \mathcal R) \\
    &= \sum\limits_{d_i, i\neq k}P(\mathcal S_- = 0, (\mathcal S_+\setminus \mathcal C) = 0, \mathcal Z^{\setminus k} = 0,D_k = 1, \wedge_{i\neq k}^N D_i =  d_i| \mathcal R)\\
    &= \sum\limits_{d_i, i\neq k}\prod\limits_{S\in \mathcal S_\pm \setminus \mathcal C}P(S = 0| D_k = 1, \wedge_{i\neq k}^N D_i =  d_i)\prod\limits_{S\in \mathcal Z}P(S^{\setminus k}= 0|D_k = 1, \wedge_{i\neq k}^N D_i =  d_i) P(D_k = 1, \wedge_{i\neq k}^N D_i =  d_i| \mathcal R)\\
    &= \sum\limits_{d_i, i\neq k}\prod\limits_{S\in \mathcal S_\pm \setminus \mathcal C}P(S = 0| D_k = 1, \wedge_{i\neq k}^N D_i =  d_i)\prod\limits_{S\in \mathcal Z}\frac{P(S= 0|D_k = 1, \wedge_{i\neq k}^N D_i =  d_i)}{\lambda_{D_k, S}} P(D_k = 1, \wedge_{i\neq k}^N D_i =  d_i| \mathcal R)\\
    &= \frac{P(\mathcal S_- = 0,(\mathcal S_+\setminus \mathcal C) = 0, \mathcal Z = 0,D_k = 1| \mathcal R)}{\prod\limits_{S\in \mathcal Z}\lambda_{D_k, S}}\numberthis 
\end{align*}

Therefore

\begin{align*}
    \mathbb E(D_k,\mathcal E)_\text{dis} &=\frac{1}{P(\mathcal S_+,\mathcal  S_-| \mathcal R)} \sum\limits_{\mathcal C\subseteq \mathcal S_+}|\mathcal S_+\setminus \mathcal C|\prod\limits_{S\in \mathcal S_+\setminus \mathcal C}\left(\frac{1}{\lambda_{D_k, S}}-1 \right)\\
    &\times \sum\limits_{\mathcal Z\subseteq \mathcal C}(-1)^{|\mathcal Z|}P(\mathcal S_- = 0,\mathcal S_+\setminus \mathcal C = 0,\mathcal Z = 0,D_k = 1| \mathcal R)\prod\limits_{S\in \mathcal Z}\frac{1}{\lambda_{D_k, S}}
\end{align*}



Finally, we aggregate all terms that have the same symptom marginal. Perform the change of variables $\mathcal X = \mathcal S_+\setminus \mathcal C$\\

\begin{equation}
    \mathbb E(D_k,\mathcal E)_\text{dis} =\frac{1}{P(\mathcal S_+,\mathcal  S_-| \mathcal R)} \sum\limits_{\mathcal X\subseteq \mathcal S_+}|\mathcal X|\prod\limits_{S\in \mathcal X}\left(\frac{1}{\lambda_{D_k, S}}-1 \right)\sum\limits_{\mathcal Z\subseteq \mathcal S_+\setminus \mathcal X}(-1)^{|\mathcal Z|}P(\mathcal S_- = 0,\mathcal X = 0,\mathcal Z = 0 ,D_k = 1| \mathcal R)\prod\limits_{S\in \mathcal Z}\frac{1}{\lambda_{D_k, S}}
\end{equation}

Clearly each term for a given $\mathcal X$ is zero unless $\lambda_{D_k, S}<1$ $\forall$ $S\in \mathcal X$, and so we can restrict ourselves to $\mathcal S \subseteq \mathcal S_+\cap \mathsf{Ch}(D_k)$. Furthermore, if any $\lambda_{D_k, S} = 0$ for $S \in \mathcal X$, then the symptom marginal (which is linearly dependent on $\lambda_{D_k, S}$) is 0 (there is zero probability of observing this symptom to be off if $D_k = 1$), and this term in the sum is zero. Therefore we can restrict the sum to $\mathcal X \subseteq S_+^{(k)}(\lambda > 0)$, where $S_+^{(k)}(\lambda > 0)$ is the set of positively evidenced factual symptoms that are children of $D_k$ and have $\lambda_{D_k, S}> 0$.  Let $\mathcal A = \mathcal X\cup \mathcal Z$. Each marginal $P(\mathcal S_- = 0,\mathcal A = 0 ,D_k = 1| \mathcal R)$ aggregates a coefficient

\begin{equation}
    \frac{1}{P(\mathcal S_+,\mathcal  S_-| \mathcal R)} \sum\limits_{\mathcal X\subseteq \mathcal A}|\mathcal X|\prod\limits_{S\in \mathcal X}\left(\frac{1}{\lambda_{D_k, S}}-1 \right)(-1)^{|\mathcal A|-|\mathcal X|}\prod\limits_{S\in \mathcal A\setminus \mathcal X}\frac{1}{\lambda_{D_k, S}}
\end{equation}

which simplifies to 

\begin{equation}
    \frac{1}{P(\mathcal S_+,\mathcal  S_-| \mathcal R)\prod\limits_{S\in \mathcal A}\lambda_{D_k, S}} \sum\limits_{\mathcal X\subseteq \mathcal A}|\mathcal X|(-1)^{|\mathcal A|-|\mathcal X|}\prod\limits_{S\in \mathcal X}\left(1-\lambda_{D_k, S}\right)
\end{equation}

To evaluate this term, define the function

\begin{equation}
    G(\mathcal A) := \sum\limits_{\mathcal X\subseteq \mathcal A}|\mathcal X|(-1)^{|\mathcal A|-|\mathcal X|}\prod\limits_{S\in \mathcal X}\left(1-\lambda_{D_k, S}\right)
\end{equation}

If we append an element $\{\tilde S\}$ to the set $\mathcal A$, where $\tilde S\not \in \mathcal A$, we can express $G(\mathcal A \cup \{\tilde S\})$ as

\begin{equation}
    G(\mathcal A\cup \{\tilde S\}) = \sum\limits_{\mathcal X\subseteq \mathcal A}|\mathcal X|(-1)^{|\mathcal A|+1-|\mathcal X|}\prod\limits_{S\in \mathcal X}\left(1-\lambda_{D_k, S}\right) + \sum\limits_{\mathcal X\subseteq \mathcal A}(|\mathcal X|+1)(-1)^{|\mathcal A|+1-|\mathcal X|-1}\prod\limits_{S\in \mathcal X}\left(1-\lambda_{D_k, S}\right)(1-\lambda_{D_k, \tilde S})
\end{equation}

where we have split the sum into subsets where containing $\tilde S$ and not containing $\tilde S$, and then expressed these in terms of the subsets $\mathcal X$ of $\mathcal A$. This yields the recursive formula 

\begin{equation}
    G(\mathcal A\cup \{\tilde S\}) = -\lambda_{D_k, \tilde S}G(\mathcal A) + (1-\lambda_{D_k, \tilde S})H(\mathcal A)
\end{equation}

where

\begin{equation}
    H(\mathcal A) = \sum\limits_{\mathcal X\subseteq \mathcal A}(-1)^{|\mathcal A|-|\mathcal X|}\prod\limits_{S\in \mathcal X}\left(1-\lambda_{D_k, S}\right)
\end{equation}

We can determine $H(\mathcal A)$ by the same technique -- noting that 

\begin{align*}
    H(\mathcal A \cup\{ \tilde S\}) &= \sum\limits_{\mathcal X\subseteq \mathcal A}(-1)^{|\mathcal A| + 1-|\mathcal X|}\prod\limits_{S\in \mathcal X}\left(1-\lambda_{D_k, S}\right) + \sum\limits_{\mathcal X\subseteq \mathcal A}(-1)^{|\mathcal A|+1-|\mathcal X|-1}\prod\limits_{S\in \mathcal X}\left(1-\lambda_{D_k, S}\right)(1-\lambda_{D_k, \tilde S})\\
    &= -H(\mathcal A) + (1-\lambda_{D_k, \tilde S})H(\mathcal A)\\
    &= -\lambda_{D_k, \tilde S}H(\mathcal A)
\end{align*}

for $\tilde S\not \in \mathcal A$. Then, noting that $H(\emptyset) = 1$, we recover 

\begin{equation}
    H(\mathcal A) = (-1)^{|\mathcal A|}\prod\limits_{S\in \mathcal A}\lambda_{D_k, S}
\end{equation}

and therefore 

\begin{align*}
    G(\mathcal A\cup \{\tilde S\}) &= -\lambda_{D_k, \tilde S}G(\mathcal A) + (1-\lambda_{D_k, \tilde S})(-1)^{|\mathcal A|}\prod\limits_{S\in \mathcal A}\lambda_{D_k, S}\\
    &= (-1)\left[\lambda_{D_k, \tilde S}G(\mathcal A) + (1-\lambda_{D_k, \tilde S})(-1)^{|\mathcal A\cup \{\tilde S\}|}\prod\limits_{S\in \mathcal A}\lambda_{D_k, S} \right]\\
\end{align*}

The above recursion relation states that for every new element we append to $\mathcal A$, we multiply the previous function by the new $\lambda_{D_k, \tilde S}$, add a term with the product of the previous $\lambda$'s multiplied by $(1-\lambda_{D_k, \tilde S})$, and multiply the result by $(-1)$. Starting from $G(\emptyset) = 0$ and $G(\{S\}) = 1-\lambda_{D_k, S}$, it follows that the function must take the form 

\begin{equation}
    G(\mathcal A) = (-1)^{|\mathcal A|+1}\sum\limits_{S\in \mathcal A} (1-\lambda_{D_k, S})\prod\limits_{S'\in \mathcal A\setminus S}\lambda_{D_k, S'}
\end{equation}

Therefore

\begin{align*}
    \mathbb E(D_k,\mathcal E)_\text{dis} &=\frac{1}{P(\mathcal S_+,\mathcal  S_-| \mathcal R)} \sum\limits_{\mathcal A\subseteq \mathcal S_+}\frac{1}{\prod\limits_{S\in \mathcal A}\lambda_{D_k, S}}(-1)^{|\mathcal A|+1}\sum\limits_{S\in \mathcal A} (1-\lambda_{D_k, S})\prod\limits_{S'\in \mathcal A\setminus S}\lambda_{D_k, S'}\,\, P(\mathcal S_- = 0, \mathcal A = 0, D_k = 1|\mathcal R)\\
    &= \frac{1}{P(\mathcal S_+,\mathcal  S_-| \mathcal R)} \sum\limits_{\mathcal A\subseteq \mathcal S_+}(-1)^{|\mathcal A|+1}P(\mathcal S_- = 0, \mathcal A = 0, D_k = 1|\mathcal R) \sum\limits_{S\in \mathcal A}\frac{1-\lambda_{D_k, S}}{\lambda_{D_k, S}}\numberthis
\end{align*}
Once again, we have arrived at a corrected form of the standard posterior 

\begin{equation}
    \mathbb E(D_k,\mathcal E)_\text{dis} = \frac{1}{P(\mathcal S_+,\mathcal  S_-| \mathcal R)} \sum\limits_{\mathcal A\subseteq \mathcal S_+}(-1)^{|\mathcal A|}P(\mathcal S_- = 0, \mathcal A = 0, D_k = 1|\mathcal R)\gamma (\mathcal A, D_k)
\end{equation}

where

\begin{equation}
    \gamma (\mathcal A, D_k) = \left|\mathcal A \right| - \sum\limits_{S\in \mathcal A}\frac{1}{\lambda_{D_k, S}}
\end{equation}

and we recover $\mathbb E(D_k,\mathcal E)_\text{dis} = P(D_k = 1 |\mathcal E)$ in the limit $\gamma (\mathcal A, D_k)\rightarrow 1$.

Finally, consider that for some $S\in \mathcal A$, $\lambda_{D_k, S} = 0$. Note that $P(\mathcal S_- = 0, \mathcal A = 0, D_k = 1|\mathcal R) = P(\mathcal S_- = 0, \mathcal A = 0|\mathcal R,  D_k = 1)P( D_k = 1|\mathcal R)$. If any $\lambda_{D_k, S} = 0$ for $S\in \mathcal S_-$, then this term is 0 by construction.

\end{proof}

\section{Properties of the expected disablement}\label{appendix: properties exp}

In this appendix we show that the expected disablement satisfies our criteria for diagnostic measures. Although in noisy-or networks the expected disablement coincides with the expected sufficiency, which we have already shown to obey our postulates, we show here that the expected disablement in obeys our postulates in general models - regardless of the choice of graph topology or generative functions.

\begin{theorem}[Diagnostic properties of expected disablement]

The expected disablement, defined as 
\[
  \mathbb E(D_k,\mathcal E)_{\text{dis}} := \sum\limits_{\mathcal S'}\!\left|\mathcal S_+ \setminus \mathcal S_+' \right|P(\mathcal S' | \mathcal E, \text{do}(D_k = 0))
\]

satisfies the following three conditions

\begin{enumerate}
    \item{\emph{consistency}.}\quad $\mathbb E_\text{dis}(D_k, \mathcal E) \propto P(D_k = 1 |\mathcal E)$\\
    \item{\emph{causality}.}\quad If $\not \exists$ $S\in \mathsf{Dec}(D_k)\cap \mathcal S_+$ $\implies$ $\mathbb E_\text{dis}(D_k, \mathcal E) = 0$\\
    \item{\emph{simplicity}.}\quad  $\left|\mathbb E_\text{dis}(D_k, \mathcal E)\right| \leq \left|\mathcal S_+\cap \mathsf{Dec}(D_k) \right|$\\
\end{enumerate}
\end{theorem}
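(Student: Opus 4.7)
My plan is to prove all three properties directly from the twin-network representation of the counterfactual, exploiting the fact that factual and counterfactual variables share exogenous latents and that the intervention $\mathrm{do}(D_k = 0)$ only propagates along directed paths out of $D_k$. The central manoeuvre is to decompose the counterfactual distribution by conditioning on the factual value of $D_k$:
\begin{equation*}
P(\mathcal S' \mid \mathcal E, \mathrm{do}(D_k=0)) = \sum_{d_k} P(\mathcal S' \mid \mathcal E, D_k = d_k, \mathrm{do}(D_k^* = 0))\, P(D_k = d_k \mid \mathcal E).
\end{equation*}
On the twin network, when $d_k = 0$ the intervention agrees with the factual assignment, so the counterfactual arm is a functional copy of the factual arm (shared latents, identical structural equations); hence $\mathcal S_+' = \mathcal S_+$ almost surely and the coefficient $|\mathcal S_+ \setminus \mathcal S_+'|$ vanishes. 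Only the $d_k = 1$ branch contributes, which immediately gives $\mathbb E_\text{dis}(D_k,\mathcal E) \propto P(D_k = 1 \mid \mathcal E)$, proving \emph{consistency}.

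For \emph{causality}, suppose no positively evidenced symptom lies in $\mathsf{Dec}(D_k)$. In the twin network, any variable that is not a descendant of the intervened node $D_k^*$ shares both its parents and its exogenous noise with its factual dual and is therefore equal to it; so for every $S \in \mathcal S_+$ we have $S^* = S = 1$ with probability one, yielding $\mathcal S_+ \subseteq \mathcal S_+'$ and thus $|\mathcal S_+ \setminus \mathcal S_+'| = 0$ for every $\mathcal S'$ with non-zero weight. The expectation collapses to zero.

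For \emph{simplicity}, the same non-descendants argument gives the pointwise bound: any $S \in \mathcal S_+ \setminus \mathcal S_+'$ must be a descendant of $D_k$ (otherwise $S^* = S = 1$), so $|\mathcal S_+ \setminus \mathcal S_+'| \leq |\mathcal S_+ \cap \mathsf{Dec}(D_k)|$ uniformly in $\mathcal S'$. Taking the expectation of a non-negative integer bounded above by a constant preserves the bound, which gives the inequality. The only subtle point in the whole argument, and the one I would take most care over, is the twin-network assertion that non-descendants of the intervention have deterministically equal factual and counterfactual copies; this is where the shared-latent construction of Balke--Pearl is used in an essential way, and I would cite the merging rules of Shpitser--Pearl recapitulated in appendix~\ref{appendix: twin networks} to justify identifying $S^*$ with $S$ for all $S \notin \mathsf{Dec}(D_k)$. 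Everything else is essentially bookkeeping on top of that identification.
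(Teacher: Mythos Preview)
Your proposal is correct and follows essentially the same approach as the paper: the same decomposition by conditioning on the factual value of $D_k$, the same twin-network argument that the $d_k=0$ branch contributes nothing (shared latents, identical parents), and the same observation that non-descendants of the intervened node coincide with their factual duals. If anything, your treatment of \emph{simplicity} is slightly sharper than the paper's---you give the pointwise bound $|\mathcal S_+\setminus\mathcal S_+'|\leq|\mathcal S_+\cap\mathsf{Dec}(D_k)|$ directly, whereas the paper's phrasing of the upper bound is a little loose.
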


\begin{proof}
First we prove consistency. In the following, we use the notation $*$ to denote counterfactual variables. The term $P(\mathcal {S'}^* | \mathcal E, \text{do}(D^*_k = 0))$ can be expressed as 

\begin{align}
    P(\mathcal {S'}^* | \mathcal E, \text{do}(D^*_k = 0)) &= \sum\limits_{d_k\in \{0, 1\}}P(\mathcal {S'}^*, D_k = d_k | \mathcal E, \text{do}(D^*_k = 0))\\
    &= \sum\limits_{d_k\in \{0, 1\}}P(\mathcal {S'}^*| D_k = d_k , \mathcal E, \text{do}(D^*_k = 0))P( D_k = d_k | \mathcal E, \text{do}(D^*_k = 0))
\end{align}

As $D_k$ is not a descendent of $D_k^*$, this simplifies to 

\begin{equation}\label{silly expansion}
     P(\mathcal {S'}^* | \mathcal E, \text{do}(D^*_k = 0)) = \sum\limits_{d_k\in \{0, 1\}}P(\mathcal {S'}^*| D_k = d_k , \mathcal E, \text{do}(D^*_k = 0))P( D_k = d_k | \mathcal E)
\end{equation}

If $D_k = 0$ then the factual and counterfactual symptoms have identical states on their parents, and therefor are copies of each other. As a result, $\mathcal S_+ = \mathcal S_+'$ and the expected disablement is identical to 0. The only term that is non-zero is therefore when $D_k = 1$, and all non-zero terms in \eqref{silly expansion} therefore have a coefficient of $P(D_k = 1|\mathcal E)$. To see that causality is satisfied, note that $\left|\mathcal S_+ \setminus \mathcal S_+' \right|\neq 0$ iff $\mathcal S_+'\subset \mathcal S_+$, which requires that at least one symptom has been switched off. If $D_k$ is not a parent of any $\mathcal S_+$, then $ P(\mathcal {S'}^* | \mathcal E, \text{do}(D^*_k = 0)) = 0$ unless $\mathcal {S'}^* = \mathcal S$ (the symptom evidence is unchanged), which implies that $\left|\mathcal S_+ \setminus \mathcal S_+' \right|=0$, satisfying causality. Finally, note that $\mathbb E_\text{dis}(D_k,\mathcal E)$ is a convex combination over the values of the set difference function $\left|\mathcal S_+ \setminus \mathcal S_+' \right|$, and therefore is upper bounded by $\mathbb E_\text{dis}(D_k,\mathcal E)\leq \left|\mathcal S_+ \right|$, the number of positively evidenced symptoms that are children of $D_k$. Therefore, the expected disablement is upper bounded by the maximal number of positive symptoms that can be caused by $D_k$. 

\end{proof}

\section{Appendix of experimental results}\label{appendix: experimental}
In this appendix we list the results of experiments 1 and 2. Experiment 1 compares the top $k$ accuracy of our algorithms. In experiment 2 we compare the diagnostic accuracy of 44 doctors to our associative (Bayesian) and counterfactual diagnostic algorithms. The table below records the scores of each doctor and the associative and counterfactual algorithm shadowing them.\\

\bigskip 
\newpage

\begin{table}
\caption{Results for experiment 1: table shows the top $k$ accuracy for the posterior, expected disablement and expected sufficiency ranking algorithms, for $N$ from 1 to 15.}

\begin{tabular}{|c|c|c|c|}
\toprule
  N &            Posterior &          Disablement &          Sufficiency \\
\hline
  1 &  0.509 $ \pm $ 0.012 &  0.536 $ \pm $ 0.012 &  0.534 $ \pm $ 0.012 \\
  2 &  0.652 $ \pm $ 0.012 &  0.702 $ \pm $ 0.011 &  0.703 $ \pm $ 0.011 \\
  3 &  0.735 $ \pm $ 0.011 &   0.784 $ \pm $ 0.01 &   0.785 $ \pm $ 0.01 \\
  4 &   0.785 $ \pm $ 0.01 &  0.829 $ \pm $ 0.009 &  0.829 $ \pm $ 0.009 \\
  5 &  0.823 $ \pm $ 0.009 &  0.867 $ \pm $ 0.008 &   0.87 $ \pm $ 0.008 \\
  6 &  0.849 $ \pm $ 0.009 &  0.894 $ \pm $ 0.008 &  0.894 $ \pm $ 0.008 \\
  7 &  0.868 $ \pm $ 0.008 &   0.91 $ \pm $ 0.007 &   0.91 $ \pm $ 0.007 \\
  8 &  0.882 $ \pm $ 0.008 &  0.917 $ \pm $ 0.007 &  0.914 $ \pm $ 0.007 \\
  9 &  0.893 $ \pm $ 0.008 &  0.925 $ \pm $ 0.006 &  0.924 $ \pm $ 0.006 \\
 10 &  0.899 $ \pm $ 0.007 &   0.93 $ \pm $ 0.006 &  0.929 $ \pm $ 0.006 \\
 11 &  0.908 $ \pm $ 0.007 &  0.936 $ \pm $ 0.006 &  0.937 $ \pm $ 0.006 \\
 12 &  0.916 $ \pm $ 0.007 &  0.944 $ \pm $ 0.006 &  0.943 $ \pm $ 0.006 \\
 13 &  0.923 $ \pm $ 0.007 &  0.948 $ \pm $ 0.005 &  0.947 $ \pm $ 0.005 \\
 14 &  0.926 $ \pm $ 0.006 &  0.951 $ \pm $ 0.005 &   0.95 $ \pm $ 0.005 \\
 15 &  0.928 $ \pm $ 0.006 &  0.954 $ \pm $ 0.005 &  0.954 $ \pm $ 0.005 \\
 16 &  0.932 $ \pm $ 0.006 &  0.957 $ \pm $ 0.005 &  0.958 $ \pm $ 0.005 \\
 17 &  0.935 $ \pm $ 0.006 &  0.961 $ \pm $ 0.005 &  0.962 $ \pm $ 0.005 \\
 18 &  0.937 $ \pm $ 0.006 &  0.963 $ \pm $ 0.005 &  0.963 $ \pm $ 0.005 \\
 19 &  0.941 $ \pm $ 0.006 &  0.967 $ \pm $ 0.004 &  0.967 $ \pm $ 0.004 \\
 20 &  0.944 $ \pm $ 0.006 &  0.968 $ \pm $ 0.004 &  0.968 $ \pm $ 0.004 \\
\hline
\end{tabular}

\end{table}

\begin{table}[!h]
\footnotesize
\caption{Results for experiment 1: table shows the mean position of the true disease for the associative (A) and counterfactual (C, expected sufficiency) algorithms over all 1671 cases. Results are stratified over the rareness of the disease (given the age and gender of the patient). For each disease rareness category, the number of cases N is given. Also the number of cases where the associative algorithm ranked the true disease higher than the counterfactual algorithm (Wins (A)), the counterfactual algorithm ranked the true disease higher than the associative algorithm (Wins (C)), and the number of cases where the two algorithms ranked the true disease in the same position (Draws) are given, for all cases and for each disease rareness class.}
\label{tab:topn counts} 
\begin{center}
\begin{tabular}{| c | c | c | c | c | c | c | c|}
\hline
\textbf{} & \multicolumn{7}{ c |}{\textbf{Vignettes}}  \\ 
\cline{2-8}
 & \textbf{All} & \textbf{Very common} & \textbf{Common} & \textbf{Uncommon} & \textbf{Rare} & \textbf{Very rare} & \textbf{Almost Impossible}\\
\hline
N & 1671 & 131 & 413 & 546 & 353 & 210 & 18 \\ \hline
Mean position (A)  & 3.81 $\pm$ 5.25  & 2.85 $\pm$ 4.27 & 2.71 $\pm$ 3.86 & 3.72 $\pm$ 5.05 & 4.35 $\pm$ 5.28 & 5.45 $\pm$ 6.52 & 4.22 $\pm$ 5.19 \\ \hline
Mean position (C)  & 3.16 $\pm$ 4.40  & 2.5 $\pm$ 3.55 & 2.32 $\pm$ 3.25 & 3.01 $\pm$ 4.07 & 3.72 $\pm$ 4.74 & 4.38 $\pm$ 5.53 & 3.56 $\pm$ 3.96\\ \hline
Wins (A)& 31 &  2  &  7 & 9 & 9 & 4 & 0\\ \hline
Wins (C)  & 412 & 20 & 80 & 135 & 103 & 69 & 5\\ \hline
Draws & 1228 & 131 & 326 & 402 & 241 & 137 & 13\\ \hline
\end{tabular}
\end{center}
\end{table}

\begin{table}
\caption{Results for experiment 2: table shows the accuracy obtained by the doctor and each algorithm shadowing the doctors, for each of the 44 single-doctor experiments. The accuracies are reported with the standard standard deviation of the mean estimator.}
\label{tab:exp_suff} 
\begin{tabular}{|c|c|c|c|c|}
\hline
Doctor number &      Doctor accuracy &            Posterior & Expected sufficiency & Expected disablement \\
\hline
0  &  0.725 $ \pm $ 0.019 &   0.656 $ \pm $ 0.02 &  0.694 $ \pm $ 0.019 &  0.692 $ \pm $ 0.019 \\
1  &  0.823 $ \pm $ 0.022 &  0.719 $ \pm $ 0.026 &  0.771 $ \pm $ 0.025 &  0.774 $ \pm $ 0.025 \\
2  &   0.89 $ \pm $ 0.018 &  0.791 $ \pm $ 0.023 &  0.834 $ \pm $ 0.021 &  0.837 $ \pm $ 0.021 \\
3  &  0.805 $ \pm $ 0.023 &  0.811 $ \pm $ 0.023 &  0.834 $ \pm $ 0.021 &  0.831 $ \pm $ 0.022 \\
4  &  0.776 $ \pm $ 0.034 &  0.855 $ \pm $ 0.029 &  0.908 $ \pm $ 0.023 &  0.914 $ \pm $ 0.023 \\
5  &  0.612 $ \pm $ 0.028 &  0.779 $ \pm $ 0.024 &  0.827 $ \pm $ 0.022 &  0.834 $ \pm $ 0.021 \\
6  &   0.799 $ \pm $ 0.02 &  0.739 $ \pm $ 0.022 &   0.794 $ \pm $ 0.02 &   0.794 $ \pm $ 0.02 \\
7  &  0.778 $ \pm $ 0.026 &  0.767 $ \pm $ 0.026 &  0.825 $ \pm $ 0.024 &  0.825 $ \pm $ 0.024 \\
8  &   0.69 $ \pm $ 0.025 &  0.788 $ \pm $ 0.022 &   0.833 $ \pm $ 0.02 &   0.833 $ \pm $ 0.02 \\
9  &  0.698 $ \pm $ 0.058 &   0.81 $ \pm $ 0.049 &  0.873 $ \pm $ 0.042 &  0.873 $ \pm $ 0.042 \\
10 &  0.905 $ \pm $ 0.037 &  0.841 $ \pm $ 0.046 &  0.873 $ \pm $ 0.042 &   0.889 $ \pm $ 0.04 \\
11 &  0.783 $ \pm $ 0.034 &   0.72 $ \pm $ 0.038 &  0.797 $ \pm $ 0.034 &  0.797 $ \pm $ 0.034 \\
12 &  0.684 $ \pm $ 0.053 &    0.75 $ \pm $ 0.05 &  0.789 $ \pm $ 0.047 &  0.776 $ \pm $ 0.048 \\
13 &  0.627 $ \pm $ 0.063 &  0.712 $ \pm $ 0.059 &   0.78 $ \pm $ 0.054 &   0.78 $ \pm $ 0.054 \\
14 &  0.788 $ \pm $ 0.033 &  0.737 $ \pm $ 0.035 &  0.776 $ \pm $ 0.033 &  0.782 $ \pm $ 0.033 \\
15 &  0.891 $ \pm $ 0.018 &   0.73 $ \pm $ 0.025 &  0.776 $ \pm $ 0.024 &  0.776 $ \pm $ 0.024 \\
16 &  0.791 $ \pm $ 0.043 &  0.835 $ \pm $ 0.039 &  0.879 $ \pm $ 0.034 &  0.879 $ \pm $ 0.034 \\
17 &  0.651 $ \pm $ 0.051 &  0.767 $ \pm $ 0.046 &  0.802 $ \pm $ 0.043 &  0.802 $ \pm $ 0.043 \\
18 &  0.722 $ \pm $ 0.043 &  0.806 $ \pm $ 0.038 &  0.833 $ \pm $ 0.036 &  0.833 $ \pm $ 0.036 \\
19 &   0.75 $ \pm $ 0.056 &  0.717 $ \pm $ 0.058 &  0.767 $ \pm $ 0.055 &  0.783 $ \pm $ 0.053 \\
20 &  0.566 $ \pm $ 0.068 &  0.642 $ \pm $ 0.066 &   0.66 $ \pm $ 0.065 &   0.66 $ \pm $ 0.065 \\
21 &  0.797 $ \pm $ 0.026 &   0.73 $ \pm $ 0.029 &  0.776 $ \pm $ 0.027 &  0.781 $ \pm $ 0.027 \\
22 &   0.671 $ \pm $ 0.03 &   0.667 $ \pm $ 0.03 &  0.736 $ \pm $ 0.028 &  0.735 $ \pm $ 0.028 \\
23 &  0.695 $ \pm $ 0.032 &   0.67 $ \pm $ 0.033 &  0.709 $ \pm $ 0.032 &  0.708 $ \pm $ 0.032 \\
24 &  0.735 $ \pm $ 0.035 &   0.71 $ \pm $ 0.036 &  0.781 $ \pm $ 0.033 &  0.774 $ \pm $ 0.034 \\
25 &  0.648 $ \pm $ 0.047 &  0.705 $ \pm $ 0.045 &  0.752 $ \pm $ 0.042 &  0.752 $ \pm $ 0.042 \\
26 &    0.7 $ \pm $ 0.065 &   0.66 $ \pm $ 0.067 &   0.66 $ \pm $ 0.067 &   0.66 $ \pm $ 0.067 \\
27 &  0.854 $ \pm $ 0.035 &  0.777 $ \pm $ 0.041 &  0.835 $ \pm $ 0.037 &  0.835 $ \pm $ 0.037 \\
28 &  0.787 $ \pm $ 0.039 &   0.778 $ \pm $ 0.04 &  0.824 $ \pm $ 0.037 &  0.815 $ \pm $ 0.037 \\
29 &  0.636 $ \pm $ 0.048 &  0.697 $ \pm $ 0.046 &  0.747 $ \pm $ 0.044 &  0.747 $ \pm $ 0.044 \\
30 &  0.604 $ \pm $ 0.046 &  0.739 $ \pm $ 0.042 &  0.748 $ \pm $ 0.041 &  0.748 $ \pm $ 0.041 \\
31 &  0.758 $ \pm $ 0.053 &  0.818 $ \pm $ 0.047 &  0.909 $ \pm $ 0.035 &  0.908 $ \pm $ 0.036 \\
32 &  0.825 $ \pm $ 0.039 &  0.691 $ \pm $ 0.047 &  0.711 $ \pm $ 0.046 &  0.701 $ \pm $ 0.046 \\
33 &    0.5 $ \pm $ 0.065 &  0.633 $ \pm $ 0.062 &   0.683 $ \pm $ 0.06 &   0.683 $ \pm $ 0.06 \\
34 &  0.607 $ \pm $ 0.063 &  0.607 $ \pm $ 0.063 &  0.689 $ \pm $ 0.059 &  0.689 $ \pm $ 0.059 \\
35 &  0.574 $ \pm $ 0.063 &  0.623 $ \pm $ 0.062 &  0.689 $ \pm $ 0.059 &  0.689 $ \pm $ 0.059 \\
36 &   0.55 $ \pm $ 0.064 &  0.633 $ \pm $ 0.062 &  0.667 $ \pm $ 0.061 &  0.667 $ \pm $ 0.061 \\
37 &   0.61 $ \pm $ 0.063 &  0.576 $ \pm $ 0.064 &  0.661 $ \pm $ 0.062 &  0.661 $ \pm $ 0.062 \\
38 &   0.592 $ \pm $ 0.04 &  0.697 $ \pm $ 0.037 &  0.724 $ \pm $ 0.036 &  0.715 $ \pm $ 0.037 \\
39 &  0.708 $ \pm $ 0.044 &   0.67 $ \pm $ 0.046 &  0.717 $ \pm $ 0.044 &  0.708 $ \pm $ 0.044 \\
40 &  0.702 $ \pm $ 0.045 &  0.721 $ \pm $ 0.044 &   0.74 $ \pm $ 0.043 &   0.74 $ \pm $ 0.043 \\
41 &  0.765 $ \pm $ 0.059 &  0.765 $ \pm $ 0.059 &  0.824 $ \pm $ 0.053 &  0.824 $ \pm $ 0.053 \\
42 &  0.639 $ \pm $ 0.053 &  0.723 $ \pm $ 0.049 &  0.783 $ \pm $ 0.045 &  0.768 $ \pm $ 0.047 \\
43 &  0.704 $ \pm $ 0.054 &  0.648 $ \pm $ 0.057 &  0.704 $ \pm $ 0.054 &  0.704 $ \pm $ 0.054 \\
\hline
\end{tabular}

\end{table}

\end{appendices}

\end{document}